%% file: iclr2027_conference.tex
\documentclass{article} 
\usepackage{iclr2027_conference,times}

\input{math_commands.tex}

\usepackage{hyperref}
\usepackage{url}

\usepackage[utf8]{inputenc} 
\usepackage[T1]{fontenc}    
\usepackage{hyperref}       
\usepackage{url}            
\usepackage{booktabs}       
\usepackage{amsfonts}       
\usepackage{nicefrac}       
\usepackage{microtype}      
\usepackage{xcolor}         
\usepackage{graphicx}
\usepackage{amsmath}
\usepackage{amssymb}
\usepackage{booktabs}
\usepackage[table]{xcolor}
\usepackage{multirow}
\usepackage{ulem}
\usepackage{wrapfig}
\usepackage{amsthm}
\usepackage{subcaption}
\newtheorem{theorem}{Theorem}

\usepackage{float}
\usepackage{adjustbox}
\newfloat{algorithm}{t}{loa}
\floatname{algorithm}{Algorithm}

\newcommand{\name}{G$^2$PTQ}

\definecolor{g2ptqblue}{HTML}{009FE0}
\newcommand{\highlight}[1]{\textcolor{g2ptqblue}{#1}}

\title{\name: Improving LLM Post-Training Quantization with Generalized Gradient Compensation}

\author{%
  \textbf{Ruikang Liu}$^1$, \textbf{Haoli Bai}$^2$, \textbf{Yuxuan Sun}$^3$, \textbf{Qian Zhang}$^4$, \textbf{Wenzheng Cai}$^1$, \textbf{Yanqi Hao}$^1$, \\
  \textbf{Feiyu Wang}$^1$, \textbf{Weidong Zhong}$^1$, \textbf{Zhuang Wang}$^1$, \textbf{Tong Yang}$^4$, \textbf{Xiangsheng Zhou}$^{1,5}$\thanks{Corresponding author.} \\
  \\
  $^1$ZTE Corporation \quad $^2$The Chinese University of Hong Kong \quad $^3$Northwestern Polytechnical University \\
  $^4$Peking University \quad $^5$Nanjing University of Aeronautics and Astronautics \\
  \texttt{\{liu.ruikang,zhou.xiangsheng\}@zte.com.cn}
}

\iclrfinalcopy 
\begin{document}

\maketitle

\begin{abstract}
  Post-training quantization (PTQ) is a practical approach to reducing the memory and computational footprint of large language models (LLMs) without retraining. GPTQ-based methods have become the de facto standard, yet they suffer from two complementary limitations. Methods with local, layer-wise objectives lack global supervision; while methods with global objectives fix their Hessian estimates at the start and ignore first-order gradients, so their guidance grows stale as quantization proceeds. This paper presents \name, a unified PTQ framework with Generalized Gradient Compensation that integrates both first- and second-order information under a globally supervised, block-wise optimization objective. By refreshing gradient and Hessian estimates before quantizing each Transformer block, \name\ avoids the staleness of prior global methods. Furthermore, to stabilize the exact first-order compensation, we introduce a trust-region scaling mechanism that dynamically bounds the gradient step to prevent exploding weight updates. Finally, we derive efficient implementations for block-wise Hessian approximation and exact gradient compensation. Experimental results on various model families and bit-widths demonstrate that \name\ enables better alignment with the full-precision model, outperforming state-of-the-art baselines. Code is available at: \url{https://github.com/G2PTQ/G2PTQ}.
\end{abstract}


\input{texes/introduction}
\input{texes/related_work}

\input{texes/background}
\input{texes/method}

\input{texes/experiments}

\input{texes/conclusions}
\input{references}
\bibliographystyle{iclr2027_conference}

\input{texes/appendix}

\end{document}

%% file: math_commands.tex
\usepackage{amsmath,amsfonts,bm}

\def\eqref#1{equation~\ref{#1}}

\def\1{\bm{1}}

\DeclareMathAlphabet{\mathsfit}{\encodingdefault}{\sfdefault}{m}{sl}
\SetMathAlphabet{\mathsfit}{bold}{\encodingdefault}{\sfdefault}{bx}{n}



%% file: texes/introduction.tex
\section{Introduction}

Large language models (LLMs) have demonstrated remarkable capabilities across a wide range of tasks, yet their deployment remains challenging due to substantial memory and computational requirements. Post-training quantization (PTQ) has emerged as a practical solution, compressing model weights to low-bit representations without retraining. Among PTQ methods, GPTQ~\citep{frantaroptq} and its variants have become the de facto standard, framing quantization as a second-order optimization problem: when a weight is quantized, the remaining weights are updated to compensate for the induced error using the layer-wise Hessian matrix.

Despite their success, existing GPTQ-based methods suffer from two complementary limitations. First, methods such as GPTQ~\citep{frantaroptq} and QuIP~\citep{chee2023quip} minimize a local layer-wise mean-squared error (MSE) objective, which aligns individual linear layer outputs but may lead to suboptimal global model performance. Recent work~\citep{edalati2025oac,kim2025guidedquant,tseng2025model} addresses this by replacing the local objective with an end-to-end negative log-likelihood (NLL) or KL divergence loss, using the Fisher information matrix to approximate the resulting Hessian. However, these methods compute the Hessian once at the start of quantization and keep it fixed throughout, causing the estimates to become increasingly stale as weights are progressively quantized. Second, GPTQ explicitly omits the first-order gradient term under the assumption of model convergence. While FOEM~\citep{zheng2026first} reintroduces this term and demonstrates its importance, it relies on a first-order Taylor approximation to compute gradients — an approximation that is exact only for the layer-wise MSE loss and introduces unnecessary error under more expressive objectives such as KL divergence or block-wise MSE. Neither line of work simultaneously addresses both limitations: existing global-objective methods lack first-order information, while first-order methods remain confined to local objectives.

We propose \name\ (Post-Training Quantization with Generalized Gradient Compensation), a unified framework that integrates both first- and second-order information under a globally supervised, block-wise optimization objective. Rather than minimizing layer-wise MSE, \name\ adopts a block-wise strategy: for each Transformer block, it minimizes the discrepancy between the quantized and full-precision block outputs, using block-wise MSE for intermediate blocks and KL divergence for the final block. Crucially, \name\ refreshes both the gradient and Hessian estimates before quantizing each block via efficient backward passes through that block alone, avoiding the staleness that plagues methods with fixed Hessians while remaining far cheaper than full-model backward passes. For the first-order term, \name\ computes exact gradients rather than relying on Taylor approximations, and introduces a trust-region scaling mechanism that dynamically bounds the gradient compensation step to prevent exploding weight updates. We further derive efficient Hessian approximations for both block-wise objectives and a gradient compensation scheme with computational complexity of $\mathcal O\!\left(\max\left\{d_{\text{col}}^3,\; d_{\text{row}} \cdot d_{\text{col}}^2\right\}\right)$, matching the complexity of vanilla GPTQ. 

The main contributions of this work are summarized as follows:
\begin{itemize}
    \item We propose \name, a PTQ framework that integrates first- and second-order quantization error compensation under a block-wise optimization objective. Together with the refreshed information and a trust-region gradient compensation strategy, \name\ provides accurate and globally informed guidance for weight updates.
    \item We derive efficient implementations for the block-wise Hessian approximation and exact gradient compensation, making \name\ practical to deploy. For instance, quantizing an 8B model requires only 2.24 hours and 12.54~GB of memory on a single accelerator.
    \item Extensive experiments across diverse model families demonstrate that \name\ achieves closer alignment with full-precision models, as evidenced by lower KL divergence compared to state-of-the-art baselines, ultimately leading to enhanced downstream task accuracy.
\end{itemize}


%% file: texes/related_work.tex
\section{Related Work}
\label{sec:related_work}

Quantization can be broadly categorized into post-training quantization (PTQ)~\citep{frantaroptq,lin2024awq,ashkboos2024quarot} and quantization-aware training (QAT)~\citep{liu2024llm,du2024bitdistiller,lv2026makes}. Owing to its simplicity and ease of deployment, PTQ has attracted widespread attention, with GPTQ~\citep{frantaroptq} in particular emerging as a widely adopted representative method. This section focuses on variants of GPTQ. A more comprehensive discussion on related work is deferred to Appendix~\ref{apdx:related_work}.

\paragraph{Optimal Brain Quantization.}
A central idea in PTQ is that the error introduced by quantizing a weight can be compensated by updating the remaining unquantized weights. This perspective underlies a line of methods~\citep{frantar2022optimal,frantaroptq,chee2023quip,zheng2026first} built on the optimal brain quantization framework. OBQ~\citep{frantar2022optimal} first establishes a framework for quantization error compensation. After one weight is quantized, the remaining weights in the same row are adjusted to compensate for the resulting error based on second-order information. GPTQ~\citep{frantaroptq} introduces several optimizations to the OBQ framework and significantly improves quantization efficiency, enabling its successful application to large-scale models. QuIP~\citep{chee2023quip} derives a more efficient implementation of GPTQ, termed LDLQ. More recently, FOEM~\citep{zheng2026first} revisits the GPTQ formulation and derives an improved algorithm by explicitly incorporating the previously neglected first-order quantization-error term into the GPTQ loss, further improving the quantization accuracy.

\paragraph{From Local to Global Optimization Objectives.}   
The vanilla GPTQ method minimizes a per-linear-layer MSE objective, which may result in suboptimal quantization accuracy due to local optimum. Recently, several studies~\citep{li2025gptaq,kim2024boa,kim2026turboboa,edalati2025oac,kim2025guidedquant,tseng2025model} have proposed alternative optimization objectives that deliver stronger performance by incorporating more global supervision. GPTAQ~\citep{li2025gptaq} introduces a cumulative quantization error term in the MSE objective of GPTQ, effectively compensating for the quantization error in the previously quantized linear layers. BOA~\citep{kim2024boa,kim2026turboboa} derives attention-aware Hessian matrices by considering inter-layer dependencies in attention modules, thereby minimizing the attention output distortion. Another line of work~\citep{edalati2025oac,kim2025guidedquant,tseng2025model} seeks to replace the local objective altogether with an end-to-end global objective, typically based on negative log-likelihood (NLL) loss. OAC~\citep{edalati2025oac} first proposes to minimize the end-to-end NLL loss by using a Hessian matrix approximated by Fisher information matrix. To mitigate the substantial computational and memory overhead, OAC assumes row-wise independence and computes a single Hessian matrix shared across all output channels within a linear layer. GuidedQuant~\citep{kim2025guidedquant} further refines this sharing scheme by partitioning the channels into multiple groups and computing a separate Hessian matrix for each group. YAQA~\citep{tseng2025model} does not impose any independence assumption on the weights within a linear layer and instead approximates the Hessian matrix using a Kronecker product.

%% file: texes/background.tex
\section{Preliminaries}
\label{sec:background}

We begin with necessary notations and review the prior work that forms the foundation of our method. Throughout, we adopt the row-vector convention. Let $\mathbf W, \hat{\mathbf W} \in \mathbb{R}^{d_{\text{row}} \times d_{\text{col}}}$ denote the full-precision and quantized weight matrices, $\mathbf X \in \mathbb{R}^{d_{\text{col}} \times n}$ the input activations with $n$ tokens.


\paragraph{GPTQ: Second-order Weight Compensation.}
GPTQ~\citep{frantaroptq} builds on OBQ~\citep{frantar2022optimal} and frames quantization as minimizing the layer-wise MSE:
\begin{equation}
\label{eq:mse_loss}
    \operatorname{argmin}_{\hat{\mathbf{W}}}\|\mathbf{W}\mathbf{X}-\hat{\mathbf{W}}\mathbf{X}\|_2^2.
\end{equation}
Under this objective, each row can be quantized independently. When the $t$-th weight is quantized, the remaining unquantized weights in the same row are updated to compensate for the induced error. The weight update $\Delta \mathbf{w}$ and the resulting loss change $\Delta L$ are given by
\begin{equation}
\label{eq:hessian_weight_update}
    \Delta\mathbf{w}=-\frac{\mathbf{w}_t-\hat{\mathbf{w}}_t}{\mathbf{H}^{-1}_{t,t}}\mathbf{H}^{-1}_{t,:}, \quad
    \Delta L = \frac{1}{2} \frac{(\mathbf{w}_t - \hat{\mathbf{w}}_t)^2}{\mathbf{H}^{-1}_{t,t}},
\end{equation}
where $\mathbf{H}=2\mathbf{X}\mathbf{X}^\top$ is the layer-wise MSE Hessian. To improve efficiency, GPTQ quantizes all rows in the same column order, sharing a single Hessian across rows. With the Cholesky factorization $\mathbf{H}^{-1} = \mathbf{L}\mathbf{L}^\top$, the update extends to all rows as
\begin{equation}
    \Delta\mathbf{W}_{:,t:}=-\frac{\mathbf{W}_{:,t}-\hat{\mathbf{W}}_{:,t}}{\mathbf{L}^{\top}_{t,t}}\mathbf{L}^{\top}_{t,t:}
    =-\mathbf{E}_{:,t}\mathbf{L}^{\top}_{t,t:},
\end{equation}
where $\mathbf{E}\in\mathbb{R}^{d_{\text{row}}\times d_{\text{col}}}$ accumulates the per-column quantization errors. To reduce memory-bandwidth pressure, GPTQ defers updates to non-current columns via a lazy-batch strategy: columns are grouped into batches of size $B$, and the remaining columns $R$ are updated only after all $B$ columns in the current batch $Q$ have been quantized, yielding $\Delta\mathbf{W}_{:,R} = -\mathbf{E}_{:,Q}\mathbf{L}^{\top}_{Q,R}$.

\paragraph{FOEM: First-order Gradient Compensation.}
GPTQ derives the weight update by assuming model convergence, which justifies omitting the first-order gradient term. Formally, it solves
\begin{equation}
    \operatorname{argmin}_{\Delta\mathbf{w}}\left(\frac{1}{2}\Delta\mathbf{w}\mathbf{H}\Delta\mathbf{w}^\top\right),\quad \text{s.t.}\ \mathbf{e}_t\Delta \mathbf{w}^\top+\mathbf{w}_t-\hat{\mathbf{w}}_t=0.
\end{equation}
However, recent studies~\citep{chee2025discquant,hu2025identifying,zheng2026first} have demonstrated that the first-order term plays a pivotal role in quantization accuracy. FOEM~\citep{zheng2026first} incorporates this term and reformulates the problem as
\begin{equation}
    \label{eq:optim_prob}
    \operatorname{argmin}_{\Delta\mathbf{w}}\left(\mathbf{g}\Delta\mathbf{w}^\top+\frac{1}{2}\Delta\mathbf{w}\mathbf{H}\Delta\mathbf{w}^\top\right),\quad \text{s.t.}\ \mathbf{e}_t\Delta \mathbf{w}^\top+\mathbf{w}_t-\hat{\mathbf{w}}_t=0,
\end{equation}
where $\mathbf{g}$ is the gradient of the loss with respect to the weight row. The closed-form solution yields
\begin{equation}
\label{eq:grad_weight_update}
    \Delta\mathbf{w}=\underbrace{\frac{\hat{\mathbf{w}}_t-\mathbf{w}_t}{\mathbf{H}^{-1}_{t,t}} \mathbf{H}^{-1}_{t,:}}_{\Delta \mathbf{w}_{\text{GPTQ}}} + \underbrace{\left[ \frac{\mathbf{g} \mathbf{H}^{-1}\mathbf{e}_t^\top}{\mathbf{H}^{-1}_{t,t}} \mathbf{H}^{-1}_{t,:} - \mathbf{g} \mathbf{H}^{-1} \right]}_{\Delta \mathbf{w}_{\text{grad}}}.
\end{equation}
When $\mathbf{g}=\mathbf{0}$, FOEM reduces to GPTQ (Equation~\ref{eq:hessian_weight_update}). To evaluate $\mathbf{g}\mathbf{H}^{-1}$ efficiently, FOEM approximates $\mathbf{g}$ via a first-order Taylor expansion:
\begin{equation}
    \mathbf{g}\approx(\mathbf{w}-\mathbf{w}_{\text{orig}})\mathbf{H},
\end{equation}
where $\mathbf{w}_{\text{orig}}$ denotes the original full-precision weights. This approximation is exact under the layer-wise MSE objective (Equation~\ref{eq:mse_loss}), but introduces unnecessary error under more expressive objectives such as block-wise MSE or KL divergence. As a result, FOEM remains confined to the same local, layer-wise objective as GPTQ.

\paragraph{GuidedQuant: Global-objective Hessian.}
Layer-wise MSE aligns individual linear layer outputs but may yield suboptimal global performance. Recent work~\citep{edalati2025oac,kim2025guidedquant} addresses this by quantizing weights under end-to-end supervision. GuidedQuant~\citep{kim2025guidedquant} replaces the layer-wise MSE Hessian in Equation~\ref{eq:hessian_weight_update} with one derived from an end-to-end NLL or KL divergence loss, approximated via the Fisher information matrix. Assuming independence across output channels, the Hessian for the $j$-th output channel is
\begin{equation}
\label{eq:hessian_fisher}
\begin{aligned}
    \mathbf{H}^{(j)}&=\frac{1}{n}\sum_{i=1}^n\left(\frac{\partial\ell_i}{\partial\mathbf{W}_{j,:}}\right)^\top\left(\frac{\partial\ell_i}{\partial\mathbf{W}_{j,:}}\right)
    =\frac{1}{n}\mathbf{X}\ \operatorname{Diag}\!\left(\frac{\partial\boldsymbol\ell}{\partial\mathbf{Z}_{j,:}}\right)^{\!2}\mathbf{X}^\top,
\end{aligned}
\end{equation}
where $\ell_i$ is the end-to-end NLL loss for the $i$-th token, $\boldsymbol\ell=(\ell_1,\ldots,\ell_n)\in\mathbb{R}^n$, and $\mathbf{Z}\in\mathbb{R}^{d_{\text{row}}\times n}$ denotes the layer output activations. To reduce memory overhead, GuidedQuant partitions the output channels into $g$ groups and shares a single averaged Hessian within each group:
\begin{equation}
\label{eq:avg_hessian}
    \bar{\mathbf{H}}^{(i)}=\frac{1}{|\mathcal C_i|}\sum_{j\in \mathcal C_i}\mathbf{H}^{(j)},
\end{equation}
where $\mathcal C_i$ denotes the set of channel indices in the $i$-th group. However, GuidedQuant computes this Hessian once prior to quantization and keeps it fixed throughout, causing the estimates to become increasingly stale as weights are progressively updated. Furthermore, GuidedQuant does not incorporate the first-order gradient term, leaving $\Delta\mathbf{w}_{\text{grad}}$ in Equation~\ref{eq:grad_weight_update} unused.

These two lines of work offer complementary strengths and suffer from complementary weaknesses: FOEM captures first-order information but is confined to a local objective, while GuidedQuant employs a more expressive objective but discards first-order information and relies on stale Hessian estimates. We address both limitations simultaneously in Section~\ref{sec:method}.

%% file: texes/method.tex
\section{Method}
\label{sec:method}
\subsection{Generalized Gradient Compensation}
\label{sec:optimal_framework}

We propose \name, which addresses both limitations of prior work by computing the first- and second-order information under a block-wise supervision objective. For each Transformer block, we minimize the discrepancy between the outputs of the quantized and full-precision blocks. 
 Specifically, let $\hat{\mathbf h}^{(l)}$ and $\mathbf h^{(l)}\in\mathbb R^d$ denote the output hidden states of the $l$-th block under the quantized and full-precision weights, respectively, given input activation $\mathbf x$. For the $l$-th Transformer block ($l=1,\ldots,L$), we solve the following optimization problem:
\begin{equation}
    \operatorname{argmin}_{\hat{\mathbf W}}\;
    \ell_l\bigl(\hat{\mathbf h}^{(l)},\,\mathbf h^{(l)}\bigr), \quad
    \ell_l =
    \begin{cases}
        \ell_{\text{MSE}}(\hat{\mathbf h}^{(l)}, \mathbf h^{(l)}) = \frac{1}{2d}\|\hat{\mathbf h}^{(l)} - \mathbf h^{(l)}\|_2^2, & l < L, \\
        D_{\mathrm{KL}}\!\left(p_{\mathbf W}(\cdot|\mathbf x) \,\|\, p_{\hat{\mathbf W}}(\cdot|\mathbf x)\right), & l = L.
    \end{cases}
    \label{eq:block_obj}
\end{equation}
The two forms of objectives enjoy both efficient computation and stronger supervision than prior layer-wise MSE objectives~\citep{frantaroptq}.
Block-wise MSE captures nonlinear interactions within each Transformer block, where the Hessian can be readily approximated with Theorem~\ref{thm:hessian_mse}. The KL divergence for the final block directly targets the model's predictive distribution, with Hessian computed via Theorem~\ref{thm:hessian_kl}.

Another key challenge for any weight compensation method is that the gradient and Hessian should reflect the current state of the model. End-to-end methods~\citep{edalati2025oac,kim2025guidedquant,tseng2025model} sidestep this by computing the Hessian once via a full-model backward pass and keeping it fixed throughout, which is practical but means later blocks are guided by estimates that no longer match the partially quantized model. Instead, we obtain fresh gradient and Hessian estimates before quantizing each block, with one backward pass through that block alone for each estimate. This is far cheaper than two separate full-model backward passes, yet ensures the compensation signal remains accurate throughout quantization.

\paragraph{Hessian Approximation.}
To compute the weight update in Equation~\ref{eq:grad_weight_update} under the block-wise objectives in Equation~\ref{eq:block_obj}, we need efficient expressions for both the gradient $\mathbf g$ and the Hessian $\mathbf H$.
Unlike the layer-wise MSE loss whose Hessian admits a simple closed form $\mathbf H = 2\mathbf X\mathbf X^\top$, the block-wise objectives introduced above involve compositions of nonlinear transformations, making their exact Hessians intractable to compute. We therefore seek efficient approximations. The following two theorems show that the Hessians of both the block-wise MSE and KL divergence losses can be approximated by the outer product of gradient vectors, which can be computed efficiently via standard backpropagation. Full proofs are provided in Appendix~\ref{apdx:hessian_aprox}.

\begin{theorem}
\label{thm:hessian_mse}
    The Hessian of the block-wise MSE loss can be approximated as
    \begin{equation}
        \mathbf H_{\text{MSE}}\approx \mathbb{E}_{\mathbf x \sim q(\mathbf x)}\mathbb{E}_{ \boldsymbol\epsilon \sim \mathcal N(\mathbf 0,d\mathbf I)} \left[  \nabla_{\hat{\mathbf W}}  \{\ell_\text{MSE}(\mathbf h,\mathbf h+\boldsymbol\epsilon)\}^\top \nabla_{\hat{\mathbf W}}  \{\ell_\text{MSE}(\mathbf h,\mathbf h+\boldsymbol\epsilon)\} \right],
    \end{equation}
    where the model input $\mathbf x$ is sampled from the data distribution $q(\mathbf x)$, and the noise $\boldsymbol{\epsilon}\in\mathbb R^d$ is sampled from the Gaussian distribution $\mathcal N(\mathbf 0,d\mathbf I)$, with $d$ being the hidden dimension. $\mathbf h\in\mathbb R^d$ is the block-wise output hidden state of the quantized model. The block-wise MSE loss is defined as $\ell_\text{MSE}(\mathbf h,\mathbf y)=\frac{1}{2d}\|\mathbf h-\mathbf y\|_2^2$. The Hessian matrix can be approximated by the outer product of the gradient vectors computed from the block-wise MSE loss.
\end{theorem}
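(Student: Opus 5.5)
Our plan follows the standard Gauss--Newton/Fisher route, with the Gaussian noise used to turn the output-space curvature into an expectation of gradient outer products.

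\textbf{Gauss--Newton step.} Write the block output as $\hat{\mathbf h}=f(\mathbf x;\hat{\mathbf W})$ and let $\mathbf J=\partial\hat{\mathbf h}/\partial\hat{\mathbf W}$ be its Jacobian. Differentiating $\ell_{\text{MSE}}(\hat{\mathbf h},\mathbf h)$ twice gives
\begin{equation*}
\nabla^2_{\hat{\mathbf W}}\ell=\tfrac{1}{d}\mathbf J^\top\mathbf J+\tfrac{1}{d}\textstyle\sum_k(\hat{\mathbf h}-\mathbf h)_k\nabla^2_{\hat{\mathbf W}}\hat{\mathbf h}_k .
\end{equation*}
The second term carries the residual $\hat{\mathbf h}-\mathbf h$. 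This residual is small: it is exactly zero before the block is quantized, and small thereafter. We therefore drop it, obtaining $\mathbf H_{\text{MSE}}\approx \frac1d\mathbb E_{\mathbf x}[\mathbf J^\top\mathbf J]$. This is the only approximation in the argument; everything after it is an exact identity.

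\textbf{Gradient of the noisy loss.} Treat $\mathbf h$, the quantized block output, as the differentiable argument, and $\mathbf h+\boldsymbol\epsilon$ as a fixed target with stopped gradient. Then
\begin{equation*}
\nabla_{\hat{\mathbf W}}\ell_{\text{MSE}}(\mathbf h,\mathbf h+\boldsymbol\epsilon)=\tfrac1d(\mathbf h-\mathbf h-\boldsymbol\epsilon)^\top\mathbf J=-\tfrac1d\boldsymbol\epsilon^\top\mathbf J ,
\end{equation*}
written as a row vector.

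\textbf{Taking the noise expectation.} The outer product of this gradient with itself is $\frac1{d^2}\mathbf J^\top\boldsymbol\epsilon\boldsymbol\epsilon^\top\mathbf J$. Using $\mathbb E[\boldsymbol\epsilon\boldsymbol\epsilon^\top]=d\mathbf I$, its expectation over $\boldsymbol\epsilon$ is $\frac1d\mathbf J^\top\mathbf J$. This matches the Gauss--Newton term exactly. It also explains the noise variance $d$: it is chosen to cancel one factor of $1/d$. Averaging over $\mathbf x\sim q$ completes the statement.

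One can also read this as a Fisher identity. The MSE equals the negative log-likelihood of $\mathbf y\sim\mathcal N(\mathbf h,d\mathbf I)$ up to constants. The Fisher matrix of that model is the expected outer product of the score, sampled at $\mathbf y=\mathbf h+\boldsymbol\epsilon$, and equals the Gauss--Newton matrix.

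\textbf{Main obstacle.} The hardest step is justifying the dropped residual-curvature term, which cannot be made rigorous in general. We would argue it via smallness of $\hat{\mathbf h}-\mathbf h$ near the full-precision solution, or via its vanishing expectation under the Gaussian model. The remaining care is bookkeeping: the Hessian is taken at the current quantized weights, and the gradient is stopped through the target $\mathbf h+\boldsymbol\epsilon$. Finally, we would note that this is the gradient-outer-product form computable by one backward pass per noise sample, which feeds into Equation~\ref{eq:hessian_fisher}--\ref{eq:avg_hessian} style per-row averaging.
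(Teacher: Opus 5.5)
Your proposal is correct and follows essentially the same route as the paper: apply the generalized Gauss--Newton approximation to get $\frac{1}{d}\mathbb{E}_{\mathbf x}[\mathbf J^\top\mathbf J]$, compute the noisy-target gradient $-\frac{1}{d}\boldsymbol\epsilon\mathbf J$, and use $\mathbb{E}[\boldsymbol\epsilon^\top\boldsymbol\epsilon]=d\mathbf I$ to recover the same matrix. Writing out the residual-curvature term and the Fisher reading is extra exposition. One caveat: in the paper's pipeline the residual $\hat{\mathbf h}-\mathbf h$ is not exactly zero before block $l$ is quantized, because earlier quantized blocks already perturb its input; that nonzero residual is exactly what makes $\mathbf g\neq\mathbf 0$. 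So justify dropping that term by its smallness or by the zero-mean Gaussian argument, not by exact vanishing.
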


\begin{theorem}
\label{thm:hessian_kl}
    The Hessian of the KL divergence loss can be approximated as
    \begin{equation}
        \mathbf H_{\text{KL}}\approx \mathbb{E}_{\mathbf x \sim q(\mathbf x)}\mathbb{E}_{ y \sim p_{\mathbf W}( y|\mathbf x)} \left[  \nabla_{\hat{\mathbf W}}  \{- \log p_{\hat{\mathbf W}}( y |\mathbf x)\}^\top \nabla_{\hat{\mathbf W}}  \{-\log p_{\hat{\mathbf W}}(  y |\mathbf x)\} \right],
    \end{equation}
    where the model input $\mathbf x$ is sampled from the data distribution $q(\mathbf x)$, and the label $y$ is sampled from the output distribution of the full-precision model $p_{\mathbf W}( y|\mathbf x)$. The Hessian matrix can be approximated by the outer product of the gradient vectors computed from the NLL loss.
\end{theorem}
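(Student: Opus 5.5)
The plan is the standard Gauss--Newton / Fisher argument, applied to the KL loss $D_{\mathrm{KL}}(p_{\mathbf W}\,\|\,p_{\hat{\mathbf W}})$ of Equation~\ref{eq:block_obj}, viewed as a function of $\hat{\mathbf W}$ around the current quantized weights.

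\textbf{Step 1: reduce to an expected NLL.} For a fixed input $\mathbf x$, write
\[
D_{\mathrm{KL}} = \mathbb E_{y\sim p_{\mathbf W}(y|\mathbf x)}\bigl[\log p_{\mathbf W}(y|\mathbf x)\bigr] + \mathbb E_{y\sim p_{\mathbf W}(y|\mathbf x)}\bigl[-\log p_{\hat{\mathbf W}}(y|\mathbf x)\bigr].
\]
The first term does not depend on $\hat{\mathbf W}$. The sampling distribution in the second term is fixed by the full-precision model. Hence
\[
\mathbf H_{\mathrm{KL}} = \mathbb E_{\mathbf x\sim q}\,\mathbb E_{y\sim p_{\mathbf W}}\bigl[\nabla^2_{\hat{\mathbf W}}\{-\log p_{\hat{\mathbf W}}(y|\mathbf x)\}\bigr],
\]
where interchanging the Hessian with the finite sum over the vocabulary is trivial.

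\textbf{Step 2: Hessian-of-log identity.} Expand
\[
\nabla^2(-\log p_{\hat{\mathbf W}}) = \nabla\log p^\top\nabla\log p - \frac{\nabla^2 p_{\hat{\mathbf W}}}{p_{\hat{\mathbf W}}}.
\]
The first term is exactly the outer product of NLL gradients, because the sign cancels in the product. It remains to take the expectation of the second term under $y\sim p_{\mathbf W}$.

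\textbf{Step 3: the key approximation.} Swap the true-model distribution for the quantized one, $p_{\mathbf W}\approx p_{\hat{\mathbf W}}$. This is justified because the quantized model is intended to stay close to the full-precision model; at $\hat{\mathbf W}=\mathbf W$ it is exact. Under $p_{\hat{\mathbf W}}$,
\[
\sum_y p_{\hat{\mathbf W}}\,\frac{\nabla^2 p_{\hat{\mathbf W}}}{p_{\hat{\mathbf W}}} = \nabla^2\sum_y p_{\hat{\mathbf W}} = \nabla^2 1 = 0,
\]
so the second term vanishes. What remains is the Fisher information evaluated with labels drawn from $p_{\mathbf W}$, which is the stated formula. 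Note that the gradient in this formula is still taken at $\hat{\mathbf W}$ while labels are sampled from the full-precision model; this matches the statement.

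Alternatively, one can present the result as a generalized Gauss--Newton decomposition through the logits $\mathbf z$:
\[
\nabla^2 = J^\top\bigl(\operatorname{Diag}(\hat{\mathbf p})-\hat{\mathbf p}\hat{\mathbf p}^\top\bigr)J + \sum_k(\hat p_k-p_k)\nabla^2 z_k .
\]
Dropping the residual term, which vanishes when $\hat{\mathbf p}=\mathbf p$, and using
\[
\mathbb E_{y\sim\hat{\mathbf p}}\bigl[(\hat{\mathbf p}-\mathbf e_y)(\hat{\mathbf p}-\mathbf e_y)^\top\bigr] = \operatorname{Diag}(\hat{\mathbf p})-\hat{\mathbf p}\hat{\mathbf p}^\top
\]
recovers the outer-product form. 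This route makes explicit where the approximation enters.

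\textbf{Main obstacle.} The delicate step is Step 3: stating precisely which approximation is made and in which regime it is accurate. I would handle it by bounding the neglected term by the residual $\hat{\mathbf p}-\mathbf p$, which is small near the full-precision weights, rather than proving an exact equality. The same template would then carry over to Theorem~\ref{thm:hessian_mse} by treating the MSE loss as a Gaussian negative log-likelihood, with the noise $\boldsymbol\epsilon$ playing the role of the sampled label.
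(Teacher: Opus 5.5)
Your Steps 1--3 are exactly the paper's proof: rewrite $\mathbf H_{\text{KL}}$ as the $p_{\mathbf W}$-expected Hessian of the NLL, and split it with the Hessian-of-log identity into the gradient outer product minus $\nabla^2 p_{\hat{\mathbf W}}/p_{\hat{\mathbf W}}$. The latter term then vanishes after the swap $p_{\mathbf W}\approx p_{\hat{\mathbf W}}$, since it becomes $-\nabla^2\sum_y p_{\hat{\mathbf W}}(y|\mathbf x)=\mathbf 0$.

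Your optional logit-level Gauss--Newton route is also sound, but it naturally produces labels drawn from $\hat{\mathbf p}$. Matching the stated $y\sim p_{\mathbf W}$ therefore requires invoking $\mathbf p\approx\hat{\mathbf p}$ a second time.
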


Computing and storing the full Hessian matrix for the entire weight matrix is still prohibitively expensive. Recent studies~\citep{zhang2024adam} find that Hessian matrices in LLMs exhibit block-diagonal structures, indicating that weights in different output channels are approximately independent. Therefore, we follow prior work~\citep{edalati2025oac} to employ separate Hessians for different output channels. To further improve efficiency, we follow GuidedQuant~\citep{kim2025guidedquant} to partition the output channels into $g$ groups, and reuse a single Hessian matrix for all channels within the same group as formalized in Equation~\ref{eq:avg_hessian}. With the block-diagonal approximation, the Hessian in Equation~\ref{eq:hessian_fisher} can be computed efficiently via one single backward pass. We leave the details in Appendix~\ref{apdx:hessian_aprox}.

\paragraph{Trust-region Gradient Compensation.}
While the Hessian can be efficiently approximated as described above, the gradient $\mathbf g$ in Equation~\ref{eq:grad_weight_update} requires careful treatment. FOEM~\citep{zheng2026first} uses a first-order Taylor expansion to approximate gradients for computational efficiency. This approximation is exact for the layer-wise MSE loss, but introduces unnecessary error for commonly used objectives such as block-wise MSE~\citep{sun2024flatquant} and KL divergence~\citep{hu2025ostquant}. A formal proof is provided in Appendix~\ref{apdx:grad_aprox}. In contrast, we avoid this approximation and instead compute the gradient exactly via backpropagation, with an efficient implementation detailed in Section~\ref{sec:effi_impl}. However, the exact first-order update term $\Delta \mathbf w_{\text{grad}}$ in Equation~\ref{eq:grad_weight_update} is highly sensitive to the numerical scales of the gradient and Hessian, which can easily result in exploding weight updates (see the analysis in Appendix~\ref{apdx:exploding_first_order}). Directly applying the exact first-order compensation often produces excessively large weight update steps that violate the local validity of the Taylor approximation in Equation~\ref{eq:optim_prob}, ultimately leading to severe performance degradation. To resolve this instability, we propose a trust-region method that dynamically scales the gradient compensation step $\Delta \mathbf w_{\text{grad}}$ to restrict the resulting loss change to a predefined budget. This guarantees that the update remains strictly within the locally valid neighborhood. The computation of this scaling factor is formalized in Theorem~\ref{thm:trust_region_beta}, with the full derivation and implementation details deferred to Appendix~\ref{apdx:trust_region_derivation}.
\begin{theorem}[Trust-Region Scaling Factor]
\label{thm:trust_region_beta}
    To bound the estimated change in the final loss within a trust region, the first-order update term $\Delta \mathbf{w}_{\text{grad}}$ is scaled by a factor $\beta \in [0, 1]$ given by
    \begin{equation}
    \beta = 1 - \sqrt{\max\left(1 - \frac{2\alpha \cdot |\Delta L_{\text{GPTQ}}|}{c}, 0\right)},
    \end{equation}
    where 
    $\alpha>0$ is a hyperparameter controlling the strength of gradient compensation, and $\Delta L_{\text{GPTQ}}$ is the estimated loss change induced by $\Delta \mathbf{w}_{\text{GPTQ}}$. The scalar $c$ and $\Delta L_{\text{GPTQ}}$ are defined as
    \begin{equation}
    c=\mathbf{g}\mathbf{H}^{-1}\mathbf{g}^\top - \frac{(\mathbf{g}\mathbf{H}^{-1}\mathbf{e}_t^\top)^2}{\mathbf{H}^{-1}_{t,t}} \geq 0,\quad
    \Delta L_{\text{GPTQ}}=\frac{\hat{\mathbf w}_t - \mathbf w_t}{\mathbf{H}^{-1}_{t,t}} (\mathbf{g} \mathbf{H}^{-1} \mathbf{e}_t^\top) + \frac{1}{2} \frac{(\hat{\mathbf w}_t - \mathbf w_t)^2}{\mathbf{H}^{-1}_{t,t}}.
    \end{equation}
\end{theorem}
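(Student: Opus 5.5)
The plan is to write the total update as $\Delta\mathbf w(\beta)=\mathbf u+\beta\mathbf v$, with $\mathbf u=\Delta\mathbf w_{\text{GPTQ}}$ and $\mathbf v=\Delta\mathbf w_{\text{grad}}$ taken from Equation~\ref{eq:grad_weight_update}. I will then expand the quadratic model of Equation~\ref{eq:optim_prob},
\begin{equation*}
\Delta L(\beta)=\mathbf g\,\Delta\mathbf w(\beta)^\top+\tfrac12\,\Delta\mathbf w(\beta)\,\mathbf H\,\Delta\mathbf w(\beta)^\top ,
\end{equation*}
as an explicit scalar quadratic in $\beta$. The trust-region condition will be a bound on the part of the loss change that is attributable to the gradient step, and $\beta$ will be obtained by solving a one-dimensional quadratic equation.

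\textbf{Step 1: the GPTQ term.} Write $\delta=\hat{\mathbf w}_t-\mathbf w_t$ and $s=\mathbf g\mathbf H^{-1}\mathbf e_t^\top$. Using $\mathbf u=\frac{\delta}{\mathbf H^{-1}_{t,t}}\mathbf e_t\mathbf H^{-1}$, we have $\mathbf u\mathbf H=\frac{\delta}{\mathbf H^{-1}_{t,t}}\mathbf e_t$. From this,
\begin{equation*}
\mathbf g\mathbf u^\top+\tfrac12\mathbf u\mathbf H\mathbf u^\top=\frac{\delta}{\mathbf H^{-1}_{t,t}}\,s+\frac12\frac{\delta^2}{\mathbf H^{-1}_{t,t}} ,
\end{equation*}
which is exactly $\Delta L_{\text{GPTQ}}$ as defined in the statement.

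\textbf{Step 2: the cross term vanishes.} By the same identity, $\mathbf u\mathbf H\mathbf v^\top=\frac{\delta}{\mathbf H^{-1}_{t,t}}\mathbf v_t$. The $t$-th entry of $\mathbf v$ is $\frac{s}{\mathbf H^{-1}_{t,t}}\mathbf H^{-1}_{t,t}-s=0$. This is just the constraint: the gradient step does not move the coordinate being quantized. Hence the cross term is zero.

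\textbf{Step 3: the pure gradient terms.} Write $\mathbf v=a\,\mathbf e_t\mathbf H^{-1}-\mathbf g\mathbf H^{-1}$ with $a=s/\mathbf H^{-1}_{t,t}$. Direct expansion, using symmetry of $\mathbf H^{-1}$, gives
\begin{equation*}
\mathbf g\mathbf v^\top=\frac{s^2}{\mathbf H^{-1}_{t,t}}-\mathbf g\mathbf H^{-1}\mathbf g^\top=-c,
\qquad
\mathbf v\mathbf H\mathbf v^\top=a^2\mathbf H^{-1}_{t,t}-2as+\mathbf g\mathbf H^{-1}\mathbf g^\top=c .
\end{equation*}
Therefore
\begin{equation*}
\Delta L(\beta)=\Delta L_{\text{GPTQ}}-c\Bigl(\beta-\tfrac{\beta^2}{2}\Bigr).
\end{equation*}
The nonnegativity $c\ge 0$ follows from Cauchy--Schwarz in the inner product $\langle\mathbf a,\mathbf b\rangle=\mathbf a\mathbf H^{-1}\mathbf b^\top$, which is valid because $\mathbf H$ is positive definite. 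Equivalently, $c$ is a Schur complement of the positive semidefinite Gram matrix of $\mathbf g$ and $\mathbf e_t$ under this inner product.

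\textbf{Step 4: choosing $\beta$.} The map $\beta\mapsto c(\beta-\beta^2/2)$ is increasing on $[0,1]$ and reaches its maximum $c/2$ at $\beta=1$, which is the unscaled FOEM step. I impose the trust-region budget that the gradient-induced change in the loss estimate be at most $\alpha|\Delta L_{\text{GPTQ}}|$. I then take the largest admissible $\beta\in[0,1]$:
\begin{itemize}
\item If $\alpha|\Delta L_{\text{GPTQ}}|\ge c/2$, the budget is never binding. The $\max(\cdot,0)$ in the formula then gives $\beta=1$.
\item Otherwise, I solve $\beta^2-2\beta+2\alpha|\Delta L_{\text{GPTQ}}|/c=0$ and select the smaller root, $\beta=1-\sqrt{1-2\alpha|\Delta L_{\text{GPTQ}}|/c}$, which lies in $[0,1]$.
\end{itemize}

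\textbf{Main obstacle.} The algebra is routine once the cross term is seen to vanish. The real difficulty is making precise which quantity the budget constrains. The formula in the statement corresponds to bounding the predicted gradient-induced decrease $c(\beta-\beta^2/2)$ by $\alpha|\Delta L_{\text{GPTQ}}|$. I would state this interpretation explicitly, and also handle the degenerate case $c=0$ by setting $\beta=1$.
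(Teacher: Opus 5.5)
Your proposal is correct and follows the paper's proof essentially step for step. Both split the quadratic model into $\Delta L_{\text{GPTQ}}$, a cross term that vanishes because $\Delta\mathbf w_{\text{grad}}$ has zero $t$-th entry, and $\Delta L_{\text{grad}} = -c\beta + \tfrac12 c\beta^2$, then impose $|\Delta L_{\text{grad}}| \le \alpha|\Delta L_{\text{GPTQ}}|$ and take the largest feasible $\beta\in[0,1]$; your Cauchy--Schwarz justification of $c\ge 0$ and your treatment of $c=0$ are small additions the paper leaves implicit.
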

Combining first- and second-order information with the trust-region scaling introduced in Theorem~\ref{thm:trust_region_beta}, the weight update is given by
\begin{equation}
    \Delta\mathbf w=\underbrace{\frac{\mathbf {\hat w}_t-\mathbf w_t}{\mathbf{H}^{-1}_{t,t}} \mathbf H^{-1}_{t,:}}_{\Delta \mathbf w_{\text{GPTQ}}} + \beta\cdot\underbrace{\left[ \frac{\mathbf g \mathbf H^{-1}\mathbf e_t^\top}{\mathbf{H}^{-1}_{t,t}} \mathbf H^{-1}_{t,:} - \mathbf g \mathbf H^{-1} \right]}_{\Delta \mathbf w_{\text{grad}}}.
\end{equation}

\subsection{An Efficient Implementation}
\label{sec:effi_impl}

\begin{algorithm}[t]
\begingroup
\setlength{\heavyrulewidth}{1.2pt}
\setlength{\lightrulewidth}{0.5pt}
\setlength{\aboverulesep}{1pt}
\setlength{\belowrulesep}{1.4pt}
\begin{tabular}{@{}p{0.98\textwidth}@{}}
\toprule
\textbf{Algorithm 1:} \name\ quantization for one linear layer \\
\midrule\hspace*{6pt}%
\begin{minipage}{\dimexpr\linewidth-12pt\relax}
\textbf{Input:} Weight matrix $\mathbf{W}$, gradient $\mathbf{G}$, per-group Hessians $\{\bar{\mathbf{H}}^{(i)}\}_{i=0}^{g-1}$ (Equation~\ref{eq:avg_hessian}), row-wise trust-region scaling $\boldsymbol\beta\in\mathbb R^{d_{\text{row}}}$ (Theorem~\ref{thm:trust_region_beta}), block size $B$, and \#channel groups $g$ \\
\textbf{Output:} quantized weight $\hat{\mathbf{W}}$ \\[3pt]
\phantom{0}1:\quad $\hat{\mathbf{W}},\mathbf E \leftarrow \mathbf 0_{d_{\text{row}}\times d_{\text{col}}},\mathbf 0_{d_{\text{row}}\times d_{\text{col}}}$ \\
\phantom{0}2:\quad \highlight{$\mathbf D \leftarrow \text{diag}(B-1,B-2,\ldots,0)$}\\
\phantom{0}3:\quad \textbf{for} each channel group $i = 0,\ldots,g-1$ \textbf{do} \\
\phantom{0}4:\quad \quad $\mathcal C_i \leftarrow i\cdot(d_{\text{row}}/g):(i+1)\cdot(d_{\text{row}}/g)$ \\
\phantom{0}5:\quad \quad $\mathbf W^{(i)},\hat{\mathbf W}^{(i)},\mathbf G^{(i)},\mathbf E^{(i)},\boldsymbol\beta^{(i)} \leftarrow \mathbf W_{\mathcal C_i,:}, \hat{\mathbf W}_{\mathcal C_i,:}, \mathbf G_{\mathcal C_i,:}, \mathbf E_{\mathcal C_i,:}, \boldsymbol\beta_{\mathcal C_i}$ \\
\phantom{0}6:\quad \quad $\mathbf{L} \leftarrow \text{Cholesky}\big((\bar{\mathbf{H}}^{(i)})^{-1}\big)$ \\
\phantom{0}7:\quad \quad \highlight{$\mathbf{M} \leftarrow \left({\boldsymbol\beta^{(i)}}\right)^\top \circ (\mathbf{G}^{(i)}\mathbf{L})$} \\
\phantom{0}8:\quad \quad \highlight{$\mathbf{F} \leftarrow \mathbf{M}\mathbf{L}^\top$} \hfill \textit{ $\triangleright$ (Equation~\ref{eq:recursive_f})} \\
\phantom{0}9:\quad \quad \textbf{for} $b=0,B,2B,\ldots$ \textbf{do} \\
10:\quad \quad \quad \textbf{for} $t=b,b+1,\ldots,b+B-1$ \textbf{do} \\
11:\quad \quad \quad \quad $\hat{\mathbf{W}}^{(i)}_{:,t} \leftarrow \text{Quantize}(\mathbf{W}^{(i)}_{:,t})$ \\
12:\quad \quad \quad \quad $\mathbf E_{:,t}^{(i)} \leftarrow ({\mathbf{W}^{(i)}_{:,t} - \mathbf{\hat{W}}^{(i)}_{:,t} \highlight{- \mathbf{F}_{:,t}}})/{\mathbf{L}^\top_{t,t}}$ \\
13:\quad \quad \quad \quad $\mathbf{W}_{:,t:(b+B)}^{(i)} \leftarrow \mathbf{W}^{(i)}_{:,t:(b+B)} - \mathbf E_{:,t}^{(i)} \mathbf{L}^\top_{t,t:(b+B)} \highlight{- \mathbf F_{:,t:(b+B)}}$ \hfill \textit{ $\triangleright$ (Equation~\ref{eq:g2ptq_weight_update})} \\
14:\quad \quad \quad \quad \highlight{$\mathbf{F}_{:,t:(b+B)} \leftarrow \mathbf{F}_{:,t:(b+B)} - \mathbf{M}_{:,t}\,\mathbf{L}^\top_{t,t:(b+B)}$} \hfill \textit{ $\triangleright$ (Equation~\ref{eq:recursive_f})} \\
15:\quad \quad \quad \textbf{end for} \\
16:\quad \quad \quad $\mathbf{W}_{:,(b+B):}^{(i)} \leftarrow \mathbf{W}_{:,(b+B):}^{(i)} - \mathbf{E}^{(i)}_{:,b:(b+B)}\,\mathbf{L}^\top_{b:(b+B),(b+B):} \highlight{- B\cdot\mathbf{F}_{:,(b+B):}}$ \\
\phantom{00:\quad}\quad \quad \quad \quad \quad \quad \quad \quad \quad $\highlight{+ \mathbf{M}_{:,b:(b+B)}\,\mathbf{D}\,\mathbf{L}^\top_{b:(b+B),(b+B):}}$ \\
17:\quad \quad \quad $\highlight{\mathbf{F}_{:,(b+B):} \leftarrow \mathbf{F}_{:,(b+B):} - \mathbf{M}_{:,b:(b+B)}\,\mathbf{L}^\top_{b:(b+B),(b+B):}}$ \hfill \textit{ $\triangleright$ (Equation~\ref{eq:lazy_batch})} \\
18:\quad \quad \textbf{end for} \\
19:\quad \textbf{end for}

\vspace{2pt}
\end{minipage}\\
\bottomrule
\end{tabular}
\endgroup
\captionsetup{labelformat=empty,labelsep=none,textformat=empty}
\caption{}
\vspace{-\baselineskip}
\label{alg:g2ptq}
\end{algorithm}

Algorithm~\ref{alg:g2ptq} presents the pseudocode of \name\ for quantizing a single linear layer, highlighting the differences from GPTQ in blue. The pseudocode for the entire model is provided in Appendix~\ref{apdx:pseudocode}.

\paragraph{Efficient Weight Update.}
At the $t$-th quantization step, where $t\in\{0,\ldots,d_{\text{col}}-1\}$, we quantize the $t$-th weight column and update the remaining unquantized columns. With the Cholesky reformulation $\mathbf{H}^{-1} = \mathbf{L}\mathbf{L}^\top$, the weight update can be written as
\begin{equation}
\label{eq:g2ptq_weight_update}
    \Delta\mathbf W_{:,t:}={\frac{\mathbf {\hat W}_{:,t}-\mathbf W_{:,t}}{\mathbf{L}^{\top}_{t,t}} \mathbf{L}^{\top}_{t,t:}} + \boldsymbol\beta^\top\circ{\left[ \frac{\mathbf F^{(t)}_{:,t}}{\mathbf{L}^{\top}_{t,t}} \mathbf{L}^{\top}_{t,t:} - \mathbf F^{(t)}_{:,t:} \right]},
\end{equation}
where $\boldsymbol\beta\in\mathbb R^{d_{\text{row}}}$ is the row-wise trust-region scaling factor detailed in Appendix~\ref{apdx:trust_region_derivation}, and $\circ$ denotes the Hadamard product. Here, $\mathbf F^{(t)}\in\mathbb R^{d_{\text{row}}\times d_{\text{col}}}$ denotes the matrix $\mathbf G_{:,t:}(\mathbf H_{t:,t:})^{-1}\in \mathbb R^{d_{\text{row}}\times (d_{\text{col}}-t)}$ left-padded to size $\mathbb R^{d_{\text{row}}\times d_{\text{col}}}$, and $\mathbf G\in\mathbb R^{d_{\text{row}}\times d_{\text{col}}}$ is the gradient of the weight matrix. The main challenge comes from computing $\mathbf F^{(t)}$. Naively computing $\mathbf F^{(t)}$ requires inverting $\mathbf H_{t:,t:}$ and multiplying by $\mathbf G_{:,t:}$. Repeating this process for $d_{\text{col}}$ steps incurs a complexity of $\mathcal O\!\left(\max\left\{d_{\text{col}}^4,\; d_{\text{row}} \cdot d_{\text{col}}^3\right\}\right)$, which is prohibitively expensive for LLMs. Instead, we propose to compute $\mathbf F^{(t)}$ recursively with
\begin{equation}
\label{eq:recursive_f}
\begin{aligned}
    \mathbf{F}^{(0)} = \mathbf{M}\mathbf{L}^\top, \quad
    \mathbf{F}^{(t+1)} = \mathbf{F}^{(t)} - \mathbf{M}_{:,t} \mathbf{L}^\top_{t,:},
\end{aligned}  
\end{equation}
where $\mathbf{M} = \mathbf{G}\mathbf{L} \in \mathbb{R}^{d_{\text{row}} \times d_{\text{col}}}$. The proof is deferred to Appendix~\ref{apdx:g2ptq_update}. Recursively updating $\mathbf F^{(t)}$ with Equation~\ref{eq:recursive_f} at each step reduces the complexity to $\mathcal O\!\left(\max\left\{d_{\text{col}}^3,\; d_{\text{row}} \cdot d_{\text{col}}^2\right\}\right)$, which matches the complexity of the vanilla GPTQ algorithm, making it practical for large-scale models.

\paragraph{Lazy-batch Update.}
To further improve memory bandwidth utilization and accelerate quantization, we adopt a lazy-batch update strategy following GPTQ~\citep{frantaroptq}. Instead of updating all remaining unquantized weight columns at each step, we group the columns into blocks and defer the corresponding updates. Let $Q = \{q_{\text{start}}, \dots, q_{\text{end}}\}$ denote a block of $B$ columns to be quantized, and $R$ denote the remaining unquantized columns. For simplicity, we omit the trust-region scaling factor $\boldsymbol\beta$, which can be absorbed into $\mathbf{M}$ with $\boldsymbol\beta\circ\mathbf{M}$. We define the weight update error at step $t$ as \begin{equation}
\label{eq:error_define}
    \mathbf{E}_{:,t} = \frac{\mathbf{W}_{:,t} - \mathbf{\hat{W}}_{:,t} - \mathbf{F}^{(t)}_{:,t}}{\mathbf{L}^\top_{t,t}}.
\end{equation}
Within the block $Q$, the columns are quantized sequentially, and the internal state $\mathbf{F}^{(t)}_{:,Q}$ is updated locally. Once the block is completed, the remaining columns $R$ for both $\mathbf{F}$ and $\mathbf{W}$ can be updated in a highly efficient batched manner:
\begin{equation}
\label{eq:lazy_batch}
\begin{aligned}
    \mathbf{F}^{(q_{\text{end}}+1)}_{:,R} &= \mathbf{F}^{(q_{\text{start}})}_{:,R} - \mathbf{M}_{:,Q} \mathbf{L}^\top_{Q,R}, \\
    \Delta\mathbf{W}_{:,R} &= - \left( \mathbf{E}_{:,Q} \mathbf{L}^\top_{Q,R} + B \cdot \mathbf{F}^{(q_{\text{start}})}_{:,R} - \mathbf{M}_{:,Q} \mathbf{D} \mathbf{L}^\top_{Q,R} \right),
\end{aligned}
\end{equation}
where $\mathbf{D} = \text{diag}(B-1, B-2, \dots, 0) \in \mathbb{R}^{B \times B}$. The detailed derivation is provided in Appendix~\ref{apdx:g2ptq_update}. Additionally, we implement efficient kernels for the lazy-batch update, with details in Appendix~\ref{apdx:perf_optim}.

%% file: texes/experiments.tex
\section{Experiments}
\label{sec:experiments}

\subsection{Experimental Settings}
\label{sec:exp_settings}
\paragraph{Models and Benchmarks.}
To comprehensively assess \name, we evaluate it across a diverse array of model families, comprising 13 dense models and 2 mixture-of-experts (MoE) models that range in size from 0.6B to 125B. Further details regarding the model coverage are provided in Appendix~\ref{apdx:model}.
For the downstream evaluation, we report perplexity (PPL) and KL divergence, consistent with prior work~\citep{frantaroptq,tseng2025model}. As discussed in Appendix~\ref{apdx:kl}, KL divergence serves as a reliable proxy for quantifying quantization-induced degradation. Additionally, we report the accuracy across seven commonsense QA benchmarks. Comprehensive details regarding these benchmarks are deferred to Appendix~\ref{apdx:benchmark}.

\paragraph{Quantization Settings and Baselines.}
We apply symmetric per-channel and per-token quantization for the weights and activations, respectively. We compare \name\ against RTN~\citep{nagel2021white} and several representative state-of-the-art weight quantization baselines, including GPTQ~\citep{frantaroptq}, GuidedQuant~\citep{kim2025guidedquant}, and GPTAQ~\citep{li2025gptaq}. To ensure a fair comparison, all methods employ a uniform scalar quantizer alongside established techniques such as rotation~\citep{ashkboos2024quarot}, clipping~\citep{frantaroptq,ashkboos2024quarot}, and activation reordering~\citep{frantaroptq}. Further details are provided in Appendix~\ref{apdx:qsettings_baseline}.

\paragraph{Implementation Details.}
We autotune the trust-region threshold $\alpha$ in Theorem~\ref{thm:trust_region_beta} with grid search, deferring the details to Appendix~\ref{apdx:trust-region-autotune}. As described in Section~\ref{sec:method}, \name\ refreshes the first- and second-order information once per Transformer block. We further implement \name$^*$ with the true sequential strategy~\citep{frantaroptq}, which partitions the Transformer block quantization into four stages and updates the information prior to each stage. More details regarding quantization efficiency optimization, including kernel fusion and distributed quantization, are provided in Appendix~\ref{apdx:perf_optim}.

\subsection{Main Results}
\input{tables/main/weight}
\input{tables/main/wa_moe}

\paragraph{Results on Weight-only Quantization.}
We evaluate \name\ for weight-only quantization across various model families and bit-widths. As shown in Table~\ref{tab:w_only}, the \name\ variants consistently achieve the best performance across all evaluated metrics. The improvements are most pronounced under the W2A16 setting: averaged across the three models, \name$^*$ halves the KL divergence compared to GPTAQ (from 1.83 to 0.90), translating to a 6.45\% increase in QA accuracy. This advantage persists at higher bit-widths, where \name\ further narrows the performance gap with the full-precision models. These results demonstrate the broad applicability of \name\ across diverse model architectures and bit-widths. Additional results for weight-only quantization are provided in Appendix~\ref{apdx:weight_only}.

\paragraph{Results on Weight-activation Quantization.}
We further investigate the performance of \name\ under aggressive 4-bit weight-activation quantization. It is well established that quantization error in this setting is typically dominated by activations due to the presence of severe activation outliers~\citep{frantaroptq, liu2024spinquant}. As shown in Table~\ref{tab:wa}, \name\ substantially outperforms all baselines on the LLaMA3-70B model, improving upon the QA accuracy of GPTAQ by 4.56\%. This indicates that \name\ can effectively compensate for accumulated activation quantization errors through appropriate weight updates. Additional weight-activation quantization results for other models are provided in Appendix~\ref{apdx:weight_activation}.

\paragraph{Results on MoE Quantization.}
Given that MoE has become a standard component in modern large-scale language models, evaluating the effectiveness of \name\ on MoE architectures is of significant interest. In Table~\ref{tab:moe}, we compare \name\ against GPTQ on two MoE models, Qwen3-30B-A3B and Qwen3.8-Flash-Next, under 4-bit quantization. Although GPTQ already delivers competitive results that closely approximate the full-precision model, \name\ further improves accuracy. For instance, on the 125B Qwen3.8-Flash-Next model, \name\ incurs only a 0.33\% decrease in QA accuracy, demonstrating its strong generalization capabilities to large-scale MoE models. Results at additional bit-widths and evaluations on reasoning benchmarks are deferred to Appendix~\ref{apdx:moe}.

\subsection{Discussions}
\label{sec:discussions}
\input{tables/ablations/ablations}

\paragraph{Ablation Study.}
To validate the individual components of \name, we conduct an ablation study on the Qwen3-0.6B model under 3-bit quantization in Table~\ref{tab:ablation_methods}. Implementing block-wise alignment significantly improves upon the layer-wise GPTQ baseline, increasing the average QA accuracy from 33.09\% to 37.40\%. Dynamically refreshing this information further enhances accuracy by yielding more precise Hessian estimations. Crucially, this refreshing mechanism serves as a strict prerequisite for gradient compensation; without it, the initial full-precision weights would reside at a local minimum where the gradients are inherently zero. Incorporating both information refreshing and exact gradient compensation yields an additional 1.75\% increase in QA accuracy. Finally, applying the KL divergence loss to the final Transformer block directly aligns the output distribution space, providing further accuracy gains. Furthermore, as theoretically analyzed in Section~\ref{sec:method}, trust-region scaling is critical to ensure that the exact gradient compensation step does not produce excessively large weight updates that would violate the local validity of the Taylor approximation. As demonstrated in Table~\ref{tab:ablation_methods}, omitting this scaling causes the exact gradient step to diverge, resulting in a catastrophic collapse of downstream accuracy. More ablation studies on the calibration set size, the number of output channel groups, and the exact gradient compensation are deferred to Appendix~\ref{apdx:ablations}.



\paragraph{Calibration Resources.}
During calibration, \name\ performs backward passes through individual Transformer blocks to collect first- and second-order information, which could consume higher memory and computational costs than GPTQ. Table~\ref{tab:memory_time} summarizes the memory usage and runtime of GPTQ and \name. GPTQ is highly efficient since it requires only a single forward pass to estimate the Hessian matrices. Although \name\ and \name$^*$ require additional backward passes, both remain practical in deployment. Specifically, we perform backpropagation in a block-wise manner and do not require end-to-end backpropagation, unlike prior methods~\citep{edalati2025oac,kim2025guidedquant,tseng2025model}. It takes only 2.24 hours and 12.54~GB of memory to quantize an 8B model on a single accelerator. We also provide the calibration resources required for quantizing a 100B-parameter model in Appendix~\ref{apdx:discussions}.

Due to limited space, we leave further discussions in Appendix~\ref{apdx:discussions}, including a once-for-all autotuning strategy and Hessian-based weight clipping.

%% file: tables/main/weight.tex
\begin{table*}[!t]
\begin{center}
\resizebox{0.9\linewidth}{!}
{
\begin{tabular}{c|l|ccc|ccc|ccc}
\toprule
\multirow{2}{*}{\textbf{Model}}
& \multirow{2}{*}{\textbf{Method}}
& \multicolumn{3}{c|}{\textbf{W2A16}}
& \multicolumn{3}{c|}{\textbf{W3A16}}
& \multicolumn{3}{c}{\textbf{W4A16}} \\
\cmidrule(lr){3-5}
\cmidrule(lr){6-8}
\cmidrule(lr){9-11}
& & \textbf{KL} & \textbf{PPL} & \textbf{QA}
  & \textbf{KL} & \textbf{PPL} & \textbf{QA}
  & \textbf{KL} & \textbf{PPL} & \textbf{QA} \\
\midrule

\multirow{7}{*}{
\begin{tabular}{@{}c@{}}
\textbf{Qwen3} \\
\textbf{0.6B}
\end{tabular}}
& BF16
& 0 & 11.28 & 49.28
& 0 & 11.28 & 49.28
& 0 & 11.28 & 49.28 \\
\cmidrule{2-11}
& RTN
& 1.43e+01 & 7.56e+06 & 28.95
& 7.44e+00 & 1.62e+04 & 29.68
& 5.84e-01 & 17.61 & 41.36 \\
& GPTQ
& 4.73e+00 & 1.58e+03 & 29.92
& 1.50e+00 & 55.60 & 33.09
& 4.58e-01 & 18.33 & 41.57 \\
& GuidedQ
& 2.46e+00 & 143.67 & 30.05
& 4.96e-01 & 17.76 & 38.28
& 1.42e-01 & 12.88 & 45.41 \\
& GPTAQ
& 3.33e+00 & 364.96 & 28.61
& 8.92e-01 & 30.40 & 34.58
& 3.03e-01 & 15.85 & 42.85 \\
& \cellcolor{g2ptqblue!12}\name
& \cellcolor{g2ptqblue!12}1.55e+00
& \cellcolor{g2ptqblue!12}64.99
& \cellcolor{g2ptqblue!12}31.72
& \cellcolor{g2ptqblue!12}3.87e-01
& \cellcolor{g2ptqblue!12}16.92
& \cellcolor{g2ptqblue!12}39.22
& \cellcolor{g2ptqblue!12}1.17e-01
& \cellcolor{g2ptqblue!12}12.63
& \cellcolor{g2ptqblue!12}\textbf{45.85} \\
& \cellcolor{g2ptqblue!12}\name$^{*}$
& \cellcolor{g2ptqblue!12}\textbf{1.26e+00}
& \cellcolor{g2ptqblue!12}\textbf{53.25}
& \cellcolor{g2ptqblue!12}\textbf{32.93}
& \cellcolor{g2ptqblue!12}\textbf{3.33e-01}
& \cellcolor{g2ptqblue!12}\textbf{16.12}
& \cellcolor{g2ptqblue!12}\textbf{39.83}
& \cellcolor{g2ptqblue!12}\textbf{1.05e-01}
& \cellcolor{g2ptqblue!12}\textbf{12.52}
& \cellcolor{g2ptqblue!12}45.80 \\
\midrule

\multirow{7}{*}{
\begin{tabular}{@{}c@{}}
\textbf{Qwen3.5} \\
\textbf{2B}
\end{tabular}}
& BF16
& 0 & 6.58 & 59.61
& 0 & 6.58 & 59.61
& 0 & 6.58 & 59.61 \\
\cmidrule{2-11}
& RTN
& 9.32e+00 & 5.45e+04 & 28.82
& 2.24e+00 & 84.76 & 38.09
& 1.92e-01 & 7.90 & 56.13 \\
& GPTQ
& 1.42e+00 & 39.95 & 32.18
& 2.28e-01 & 8.70 & 51.29
& 6.36e-02 & 7.00 & 56.57 \\
& GuidedQ
& 1.62e+00 & 51.08 & 32.69
& 2.38e-01 & 8.72 & 51.37
& 6.49e-02 & 7.06 & 57.67 \\
& GPTAQ
& 1.33e+00 & 38.15 & 31.50
& 2.15e-01 & 8.58 & 50.01
& 6.14e-02 & 7.07 & 56.69 \\
& \cellcolor{g2ptqblue!12}\name
& \cellcolor{g2ptqblue!12}1.05e+00
& \cellcolor{g2ptqblue!12}25.06
& \cellcolor{g2ptqblue!12}36.09
& \cellcolor{g2ptqblue!12}1.99e-01
& \cellcolor{g2ptqblue!12}8.42
& \cellcolor{g2ptqblue!12}51.87
& \cellcolor{g2ptqblue!12}5.83e-02
& \cellcolor{g2ptqblue!12}\textbf{6.98}
& \cellcolor{g2ptqblue!12}57.22 \\
& \cellcolor{g2ptqblue!12}\name$^{*}$
& \cellcolor{g2ptqblue!12}\textbf{8.42e-01}
& \cellcolor{g2ptqblue!12}\textbf{19.99}
& \cellcolor{g2ptqblue!12}\textbf{37.75}
& \cellcolor{g2ptqblue!12}\textbf{1.77e-01}
& \cellcolor{g2ptqblue!12}\textbf{8.20}
& \cellcolor{g2ptqblue!12}\textbf{53.53}
& \cellcolor{g2ptqblue!12}\textbf{5.37e-02}
& \cellcolor{g2ptqblue!12}\textbf{6.98}
& \cellcolor{g2ptqblue!12}\textbf{57.97} \\
\midrule

\multirow{7}{*}{
\begin{tabular}{@{}c@{}}
\textbf{LLaMA3} \\
\textbf{8B}
\end{tabular}}
& BF16
& 0 & 4.09 & 69.75
& 0 & 4.09 & 69.75
& 0 & 4.09 & 69.75 \\
\cmidrule{2-11}
& RTN
& 1.28e+01 & 1.48e+06 & 29.07
& 1.42e+00 & 23.77 & 41.60
& 1.45e-01 & 4.77 & 66.03 \\
& GPTQ
& 1.15e+00 & 18.11 & 38.61
& 1.69e-01 & 4.99 & 64.63
& 4.99e-02 & 4.33 & 68.44 \\
& GuidedQ
& 9.00e-01 & 13.03 & 43.16
& 1.40e-01 & 4.83 & 65.99
& 4.05e-02 & 4.29 & 68.75 \\
& GPTAQ
& 8.36e-01 & 12.60 & 42.38
& 1.44e-01 & 4.87 & 65.68
& 4.28e-02 & 4.30 & 68.72 \\
& \cellcolor{g2ptqblue!12}\name
& \cellcolor{g2ptqblue!12}8.65e-01
& \cellcolor{g2ptqblue!12}14.76
& \cellcolor{g2ptqblue!12}47.47
& \cellcolor{g2ptqblue!12}1.29e-01
& \cellcolor{g2ptqblue!12}4.78
& \cellcolor{g2ptqblue!12}66.07
& \cellcolor{g2ptqblue!12}3.94e-02
& \cellcolor{g2ptqblue!12}4.28
& \cellcolor{g2ptqblue!12}69.06 \\
& \cellcolor{g2ptqblue!12}\name$^{*}$
& \cellcolor{g2ptqblue!12}\textbf{6.12e-01}
& \cellcolor{g2ptqblue!12}\textbf{9.18}
& \cellcolor{g2ptqblue!12}\textbf{51.16}
& \cellcolor{g2ptqblue!12}\textbf{1.22e-01}
& \cellcolor{g2ptqblue!12}\textbf{4.75}
& \cellcolor{g2ptqblue!12}\textbf{66.56}
& \cellcolor{g2ptqblue!12}\textbf{3.75e-02}
& \cellcolor{g2ptqblue!12}\textbf{4.27}
& \cellcolor{g2ptqblue!12}\textbf{69.41} \\
\bottomrule
\end{tabular}
}
\end{center}
\vspace{-1ex}
\caption{Weight-only quantization results across different model families. KL, PPL, and QA metrics are reported as averages. Results for more models are provided in Appendix~\ref{apdx:weight_only}.}
\label{tab:w_only}
\end{table*}

%% file: tables/main/wa_moe.tex
\begin{figure*}[t]
    \centering

    \begin{minipage}[t]{0.48\textwidth}
        \centering
        \vspace{0pt}

        \resizebox{\linewidth}{!}{
            \begin{tabular}{c|l|ccc}
                \toprule
                \textbf{Model}
                & \textbf{Method}
                & \textbf{KL}
                & \textbf{PPL}
                & \textbf{QA} \\
                \midrule

                \multirow{7}{*}{
                \begin{tabular}{@{}c@{}}
                    \textbf{LLaMA3} \\
                    \textbf{70B}
                \end{tabular}}
                & BF16
                & 0
                & 2.77
                & 77.77 \\
                \cmidrule{2-5}
                & RTN
                & 1.83e+00
                & 21.03
                & 34.27 \\
                & GPTQ
                & 6.33e-01
                & 5.60
                & 58.08 \\
                & GuidedQ
                & 5.69e-01
                & 5.19
                & 60.39 \\
                & GPTAQ
                & 3.73e-01
                & 4.27
                & 68.38 \\
                & \cellcolor{g2ptqblue!12}\name
                & \cellcolor{g2ptqblue!12}\textbf{2.77e-01}
                & \cellcolor{g2ptqblue!12}\textbf{3.79}
                & \cellcolor{g2ptqblue!12}\textbf{72.94} \\
                & \cellcolor{g2ptqblue!12}\name\textsuperscript{*}
                & \cellcolor{g2ptqblue!12}3.33e-01
                & \cellcolor{g2ptqblue!12}4.08
                & \cellcolor{g2ptqblue!12}71.76 \\
                \bottomrule
            \end{tabular}
        }

        \captionsetup{type=table,skip=7pt}
        \caption{4-bit weight-activation quantization results. KL, PPL, and QA metrics are reported as averages. Results for more models are provided in Appendix~\ref{apdx:weight_activation}.}
        \label{tab:wa}
    \end{minipage}
    \hfill
    \begin{minipage}[t]{0.48\textwidth}
        \centering
        \vspace{0pt}

        \resizebox{0.95\linewidth}{!}{
            \begin{tabular}{c|l|ccc}
                \toprule
                \textbf{Model}
                & \textbf{Method}
                & \textbf{KL}
                & \textbf{PPL}
                & \textbf{QA} \\
                \midrule

                \multirow{4}{*}{
                \begin{tabular}{@{}c@{}}
                    \textbf{Qwen3} \\
                    \textbf{30B-A3B}
                \end{tabular}}
                & BF16
                & 0
                & 5.63
                & 73.45 \\
                \cmidrule{2-5}
                & GPTQ
                & 3.64e-02
                & 5.71
                & 71.86 \\
                & \cellcolor{g2ptqblue!12}\name
                & \cellcolor{g2ptqblue!12}2.22e-02
                & \cellcolor{g2ptqblue!12}\textbf{5.66}
                & \cellcolor{g2ptqblue!12}\textbf{72.44} \\
                & \cellcolor{g2ptqblue!12}\name\textsuperscript{*}
                & \cellcolor{g2ptqblue!12}\textbf{2.09e-02}
                & \cellcolor{g2ptqblue!12}5.67
                & \cellcolor{g2ptqblue!12}72.43 \\
                \midrule

                \multirow{3}{*}{
                \begin{tabular}{@{}c@{}}
                    \textbf{Qwen3.8} \\
                    \textbf{Flash-Next}
                \end{tabular}}
                & BF16
                & 0
                & 4.70
                & 78.28 \\
                \cmidrule{2-5}
                & GPTQ
                & 1.16e-01
                & 5.00
                & 77.55 \\
                & \cellcolor{g2ptqblue!12}\name\textsuperscript{*}
                & \cellcolor{g2ptqblue!12}\textbf{9.61e-02}
                & \cellcolor{g2ptqblue!12}\textbf{4.55}
                & \cellcolor{g2ptqblue!12}\textbf{77.95} \\
                \bottomrule
            \end{tabular}
        }

        \captionsetup{type=table,skip=5pt}
        \caption{4-bit weight-only quantization results on MoE models. KL, PPL, and QA metrics are reported as averages. Results for more bit-widths are provided in Appendix~\ref{apdx:moe}}
        \label{tab:moe}
    \end{minipage}
\end{figure*}

%% file: tables/ablations/ablations.tex
\begin{figure*}[t]
    \centering

    \newlength{\panelheight}
    \setlength{\panelheight}{0.19\textheight}

    \begin{minipage}[t]{0.495\textwidth}
        \centering

        \begin{minipage}[c][\panelheight][c]{\linewidth}
            \centering
            \resizebox{\linewidth}{!}{
                \begin{tabular}{l|ccc}
                    \toprule
                    \textbf{Method} & \textbf{KL} & \textbf{PPL} & \textbf{QA} \\
                    \midrule
                    GPTQ              & 1.50e+00 & 55.60    & 33.09 \\
                    \rowcolor{g2ptqblue!12} + Block-wise Align & 5.07e-01 & 18.73    & 37.40 \\
                    \rowcolor{g2ptqblue!12} + Refreshed Info   & 4.51e-01 & 17.27    & 38.61 \\
                    \rowcolor{g2ptqblue!12} + Exact Grad       & 3.63e-01 & 17.48    & 39.15 \\
                    \rowcolor{g2ptqblue!12} + Final KL Align               & 3.33e-01 & 16.12    & 39.83 \\
                    \rowcolor{gray!20} - Trust-region Scaling  & 2.53e+01 & 1.29e+12 & 28.97 \\
                    \bottomrule
                \end{tabular}
            }
        \end{minipage}

        \vspace{-3.5ex}
        \captionsetup{type=table, skip=0pt}
        \caption{Ablation study of \name\ on Qwen3-0.6B 3-bit weight-only quantization. The KL, PPL, and QA metrics are reported as averages.}
        \label{tab:ablation_methods}
    \end{minipage}
    \hfill
    \begin{minipage}[t]{0.465\textwidth}
        \centering

        \begin{minipage}[c][\panelheight][c]{\linewidth}
            \centering
            \resizebox{0.92\linewidth}{!}{
            \begin{tabular}{c|c|cc}
                \toprule
                \textbf{Model Size} & \textbf{Method} & \textbf{Mem (GB)} & \textbf{Time (H)} \\ 
                \midrule
                \multirow{3}{*}{0.6B} 
                & GPTQ & 0.43 & 0.08 \\
                & \name & 8.56 & 0.18 \\
                & \name\textsuperscript{*} & 8.36 & 0.33 \\
                \midrule
                \multirow{3}{*}{8B} 
                & GPTQ & 4.62 & 0.43 \\
                & \name & 14.57 & 1.49 \\
                & \name\textsuperscript{*} & 12.54 & 2.24 \\
                \bottomrule
            \end{tabular}
            }
        \end{minipage}

        \vspace{-3.5ex}
        \captionsetup{type=table, skip=0pt}
        \caption{Comparison of calibration memory consumption and runtime for GPTQ and \name\ on Qwen3-0.6B and 8B models.}
        \label{tab:memory_time}
    \end{minipage}

    \vspace{-1.5ex}
\end{figure*}

%% file: texes/conclusions.tex
\section{Conclusions}
In this paper, we introduce \name, a PTQ framework that incorporates refreshed first- and second-order information for quantization error compensation under block-wise supervision, offering global and accurate guidance for weight updates without fine-tuning. Moreover, we theoretically derive efficient implementations for the block-wise Hessian approximation and exact gradient compensation. Extensive experiments across various model families and bit-widths demonstrate that \name\ achieves better alignment with the full-precision model than the state-of-the-art baselines.

%% file: texes/appendix.tex
\clearpage
\appendix

\section{Related Work}
\label{apdx:related_work}
This section reviews two categories of outliers in LLMs and complements the discussion in Section~\ref{sec:related_work}.

\paragraph{Outlier Channels.}
In LLMs, input activation outliers are typically concentrated in a small number of fixed channels~\citep{dettmers2022gpt3,xiao2023smoothquant}, which substantially increases the difficulty of quantization. Early work~\citep{dettmers2022gpt3} achieves near-lossless 8-bit quantization accuracy by retaining outlier channels in high precision. However, such fine-grained mixed-precision strategies often reduce inference efficiency. Recent studies~\citep{xiao2023smoothquant,ashkboos2024quarot,liu2024spinquant,sun2024flatquant,van2025fptquant} instead use function-preserving transformations to smooth outlier channels, thereby improving both quantization accuracy and efficiency. SmoothQuant~\citep{xiao2023smoothquant} introduces per-channel scaling to shift quantization difficulty from activations to weights. QuaRot~\citep{ashkboos2024quarot} employs Hadamard transforms to distribute outliers across channels and achieves a breakthrough in 4-bit weight-activation quantization. SpinQuant~\citep{liu2024spinquant} uses Cayley reparameterization to learn orthogonal transformations. FlatQuant~\citep{sun2024flatquant} further improves accuracy through Kronecker-decomposed affine transformations. FPTQuant~\citep{van2025fptquant} maximizes transformation expressivity while ensuring that the transformations can be merged into model weights, achieving results competitive with FlatQuant while incurring lower inference overhead.

\paragraph{Massive Activations.}
Massive activations~\citep{sun2024massive,liu2024intactkv} are the outliers on some pivotal tokens that have orders of magnitude larger than other activations. These activations are closely related to attention sinks~\citep{xiao2023efficient}. Distortions in the representations of such pivotal tokens can severely degrade LLM accuracy. IntactKV~\citep{liu2024intactkv} preserves the first few tokens of the system prompt in a lossless representation, serving as a zero-overhead plug-in that improves quantization accuracy across different quantization settings. RotateKV~\citep{su2025rotatekv} identifies pivotal tokens based on activation magnitudes and retains them in high precision for KV cache quantization. More recently, new model architectures have been developed to produce LLMs without massive activations. A representative example is gated attention~\citep{qiu2025gated}, which eliminates massive activations by introducing input-dependent sparsity into the output of the attention module, enabling more stable model training and higher accuracy.

\section{Theoretical Derivation}
\label{apdx:derivations}

\subsection{Hessian Approximation for KL divergence and MSE Loss}
\label{apdx:hessian_aprox}

\paragraph{Proof of Theorem~\ref{thm:hessian_mse}.}
Let the block-wise MSE loss be defined as $\ell_{\text{MSE}} (\mathbf h, \mathbf y) = \frac{1}{2d}\|\mathbf h - \mathbf y\|_2^2$, where $\mathbf h \in \mathbb R^d$ is the output hidden state of the quantized model, $\mathbf y \in \mathbb R^d$ is the target, and $d$ is the hidden dimension. Let $\mathbf J = \frac{\partial \mathbf h}{\partial \hat{\mathbf W}}$ be the Jacobian of the hidden state with respect to the quantized weights $\hat{\mathbf W}$. The Hessian of the loss with respect to the weights $\hat{\mathbf W}$ can be approximated using the Generalized Gauss-Newton (GGN) approximation, yielding:
\begin{equation}
    \mathbf H_{\text{MSE}} \approx \mathbb{E}_{\mathbf x \sim q(\mathbf x)} \left[ \mathbf J^\top \nabla_{\mathbf h}^2 \ell_{\text{MSE}} \mathbf J \right].
\end{equation}
First, we compute the second derivative of the MSE loss with respect to the model output $\mathbf h$:
\begin{equation}
    \nabla_{\mathbf h}^2 \ell_{\text{MSE}} = \nabla_{\mathbf h}^2 \left( \frac{1}{2d}\|\mathbf h - \mathbf y\|_2^2 \right) = \frac{1}{d}\mathbf I.
\end{equation}
Substituting this into the GGN approximation, we obtain the approximated Hessian:
\begin{equation}
    \mathbf H_{\text{MSE}} \approx \mathbb{E}_{\mathbf x \sim q(\mathbf x)} \left[ \mathbf J^\top \left(\frac{1}{d}\mathbf I\right) \mathbf J \right] = \frac{1}{d} \mathbb{E}_{\mathbf x \sim q(\mathbf x)} [\mathbf J^\top \mathbf J].
\end{equation}
Note that this Hessian approximation is independent of the target label $\mathbf y$. Next, we compute the first derivative of the MSE loss with respect to the model weights $\hat{\mathbf W}$:
\begin{equation}
    \nabla_{\hat{\mathbf W}} \ell_{\text{MSE}} = \nabla_{\mathbf h} \ell_{\text{MSE}}\mathbf J  = \left( \frac{1}{d}(\mathbf h - \mathbf y) \right)\mathbf J = -\frac{1}{d} \boldsymbol\epsilon\mathbf J,
\end{equation}
where $\boldsymbol\epsilon = \mathbf y - \mathbf h\in\mathbb R^d$ is the output error. The expectation of the outer product of the first-order gradients under the output error distribution can be written as:
\begin{equation}
\label{eq:mse_hessian_1}
\begin{aligned}
    \mathbb{E}_{\mathbf x \sim q(\mathbf x)}\mathbb{E}_{ \boldsymbol\epsilon} &\left[  \nabla_{\hat{\mathbf W}}  \{\ell_\text{MSE}(\mathbf h,\mathbf h+\boldsymbol\epsilon)\}^\top \nabla_{\hat{\mathbf W}}  \{\ell_\text{MSE}(\mathbf h,\mathbf h+\boldsymbol\epsilon)\} \right] \\
    &= \mathbb{E}_{\mathbf x \sim q(\mathbf x)}\mathbb{E}_{ \boldsymbol\epsilon} \left[ \left( -\frac{1}{d} \boldsymbol\epsilon\mathbf J \right)^\top \left( -\frac{1}{d} \boldsymbol\epsilon\mathbf J \right) \right] \\
    &= \frac{1}{d^2} \mathbb{E}_{\mathbf x \sim q(\mathbf x)}\mathbb{E}_{ \boldsymbol\epsilon} \left[ \mathbf J^\top \boldsymbol\epsilon^\top \boldsymbol\epsilon \mathbf J \right] \\
    &= \frac{1}{d^2} \mathbb{E}_{\mathbf x \sim q(\mathbf x)} \left[ \mathbf J^\top \mathbb{E}_{ \boldsymbol\epsilon} [\boldsymbol\epsilon^\top \boldsymbol\epsilon] \mathbf J \right].
\end{aligned}
\end{equation}
If the output error $\boldsymbol\epsilon$ is sampled from a Gaussian distribution $\mathcal N(\mathbf 0, d\mathbf I)$, then the covariance matrix of the noise is $\mathbb{E}_{\boldsymbol\epsilon}[\boldsymbol\epsilon^\top \boldsymbol\epsilon] = d\mathbf I$. Substituting this back into Equation~\ref{eq:mse_hessian_1} yields:
\begin{equation}
\begin{aligned}
    \frac{1}{d^2} \mathbb{E}_{\mathbf x \sim q(\mathbf x)} \left[ \mathbf J^\top (d\mathbf I) \mathbf J \right] &= \frac{1}{d} \mathbb{E}_{\mathbf x \sim q(\mathbf x)} [\mathbf J^\top \mathbf J] \approx \mathbf H_{\text{MSE}}.
\end{aligned}
\end{equation}
Thus, the Hessian of the MSE loss can be effectively approximated by the expected outer product of the gradients when the target is augmented with Gaussian noise $\boldsymbol\epsilon \sim \mathcal N(\mathbf 0, d\mathbf I)$:
\begin{equation}
\mathbf H_{\text{MSE}}\approx \mathbb{E}_{\mathbf x \sim q(\mathbf x)}\mathbb{E}_{ \boldsymbol\epsilon \sim \mathcal N(\mathbf 0,d\mathbf I)} \left[  \nabla_{\hat{\mathbf W}}  \{\ell_\text{MSE}(\mathbf h,\mathbf h+\boldsymbol\epsilon)\}^\top \nabla_{\hat{\mathbf W}}  \{\ell_\text{MSE}(\mathbf h,\mathbf h+\boldsymbol\epsilon)\} \right].
\end{equation}

\paragraph{Proof of Theorem~\ref{thm:hessian_kl}.}
The Hessian of the KL divergence loss with respect to the quantized model weights $\hat{\mathbf W}$ is defined as:
\begin{equation}
    \mathbf H_{\text{KL}} = \mathbb{E}_{\mathbf x \sim q(\mathbf x)} \left[ \nabla_{\hat{\mathbf W}}^2 D_\text{KL}(p_{\mathbf W}(\cdot|\mathbf x) \| p_{\hat{\mathbf W}}(\cdot|\mathbf x)) \right],
\end{equation}
where the model input $\mathbf x$ is sampled from the data distribution $q(\mathbf x)$. Expanding the KL divergence, we can write the Hessian as:
\begin{equation}
\label{eq:kl_hessian_1}
\begin{aligned}
    \mathbf H_{\text{KL}} &= \mathbb{E}_{\mathbf x \sim q(\mathbf x)} \left[ -\nabla_{\hat{\mathbf W}}^2 \sum_{y \in \mathcal{C}} p_{\mathbf W}(y|\mathbf x) \log \frac{p_{\hat{\mathbf W}}(y|\mathbf x)}{p_{\mathbf W}(y|\mathbf x)} \right] \\
    &= \mathbb{E}_{\mathbf x \sim q(\mathbf x)} \left[ - \sum_{y \in \mathcal{C}} p_{\mathbf W}(y|\mathbf x) \nabla_{\hat{\mathbf W}}^2 \log p_{\hat{\mathbf W}}(y|\mathbf x) \right] \\
    &= \mathbb{E}_{\mathbf x \sim q(\mathbf x)} \mathbb{E}_{y \sim p_{\mathbf W}(y|\mathbf x)} \left[ -\nabla_{\hat{\mathbf W}}^2 \log p_{\hat{\mathbf W}}(y|\mathbf x) \right],
\end{aligned}
\end{equation}
where $\mathcal{C}$ denotes the set of all possible tokens. Next, we expand the second derivative of the negative log-likelihood inside the expectation. The Hessian of the log probability is given by:
\begin{equation}
\begin{aligned}
    -\nabla_{\hat{\mathbf W}}^2 \log p_{\hat{\mathbf W}}(y \mid \mathbf x) &= -\nabla_{\hat{\mathbf W}} \left( \frac{\nabla_{\hat{\mathbf W}} p_{\hat{\mathbf W}}(y \mid \mathbf x)}{p_{\hat{\mathbf W}}(y \mid \mathbf x)} \right) \\
    &= -\frac{p_{\hat{\mathbf W}}(y \mid \mathbf x)\nabla_{\hat{\mathbf W}}^2 p_{\hat{\mathbf W}}(y \mid \mathbf x) - \nabla_{\hat{\mathbf W}} p_{\hat{\mathbf W}}(y \mid \mathbf x)^\top \nabla_{\hat{\mathbf W}} p_{\hat{\mathbf W}}(y \mid \mathbf x)}{p_{\hat{\mathbf W}}(y \mid \mathbf x)^2} \\
    &= -\frac{\nabla_{\hat{\mathbf W}}^2 p_{\hat{\mathbf W}}(y \mid \mathbf x)}{p_{\hat{\mathbf W}}(y \mid \mathbf x)} + \nabla_{\hat{\mathbf W}} \log p_{\hat{\mathbf W}}(y \mid \mathbf x)^\top \nabla_{\hat{\mathbf W}} \log p_{\hat{\mathbf W}}(y \mid \mathbf x).
\end{aligned}
\end{equation}
Substituting this expansion back into Equation~\ref{eq:kl_hessian_1}, we obtain:
\begin{equation}
\begin{aligned}
    \mathbf H_{\text{KL}} &= \mathbb{E}_{\mathbf x \sim q(\mathbf x)} \mathbb{E}_{y \sim p_{\mathbf W}(y \mid \mathbf x)} \left[ -\frac{\nabla_{\hat{\mathbf W}}^2 p_{\hat{\mathbf W}}(y \mid \mathbf x)}{p_{\hat{\mathbf W}}(y \mid \mathbf x)} \right] \\
    &\quad + \mathbb{E}_{\mathbf x \sim q(\mathbf x)} \mathbb{E}_{y \sim p_{\mathbf W}(y \mid \mathbf x)} \left[ \nabla_{\hat{\mathbf W}} \{-\log p_{\hat{\mathbf W}}(y \mid \mathbf x)\}^\top \nabla_{\hat{\mathbf W}} \{-\log p_{\hat{\mathbf W}}(y \mid \mathbf x)\} \right].
\end{aligned}
\end{equation}
To simplify the first term, we assume that the quantized weights are close to the full-precision weights, i.e., $\hat{\mathbf W} \approx \mathbf W$. Under this assumption, the output distributions are approximately equal:
\begin{equation}
    p_{\mathbf W}(y \mid \mathbf x) \approx p_{\hat{\mathbf W}}(y \mid \mathbf x).
\end{equation}
We can then approximate the inner expectation of the first term as:
\begin{equation}
\begin{aligned}
    \mathbb{E}_{y \sim p_{\mathbf W}(y \mid \mathbf x)} \left[ -\frac{\nabla_{\hat{\mathbf W}}^2 p_{\hat{\mathbf W}}(y \mid \mathbf x)}{p_{\hat{\mathbf W}}(y \mid \mathbf x)} \right] &\approx \sum_{y \in \mathcal{C}} p_{\hat{\mathbf W}}(y \mid \mathbf x) \left( -\frac{\nabla_{\hat{\mathbf W}}^2 p_{\hat{\mathbf W}}(y \mid \mathbf x)}{p_{\hat{\mathbf W}}(y \mid \mathbf x)} \right) \\
    &= -\nabla_{\hat{\mathbf W}}^2 \sum_{y \in \mathcal{C}} p_{\hat{\mathbf W}}(y \mid \mathbf x) \\
    &= -\nabla_{\hat{\mathbf W}}^2 (1) = \mathbf 0.
\end{aligned}
\end{equation}
Since the first term vanishes, the Hessian of the KL divergence can be approximated by the Fisher information matrix, where the inputs are sampled from the data distribution and the labels are sampled from the full-precision model:
\begin{equation}
    \mathbf H_{\text{KL}} \approx \mathbb{E}_{\mathbf x \sim q(\mathbf x)}\mathbb{E}_{ y \sim p_{\mathbf W}( y|\mathbf x)} \left[  \nabla_{\hat{\mathbf W}}  \{- \log p_{\hat{\mathbf W}}( y |\mathbf x)\}^\top \nabla_{\hat{\mathbf W}}  \{-\log p_{\hat{\mathbf W}}(  y |\mathbf x)\} \right].
\end{equation}

\paragraph{Practical Hessian Computation.}
The Hessians of both the block-wise MSE and KL divergence loss can be expressed as outer products of the gradient vectors, and can be written as in Equation~\ref{eq:hessian_fisher}, where the tokens are from the same sequence. We then average the Hessians computed from different sequences to obtain the final Hessian matrix. However, computing the exact per-token gradients $\text{Diag}\left({\partial\boldsymbol\ell}/{\partial\mathbf Z_{j,:}}\right)^2$ in Equation~\ref{eq:hessian_fisher} remains computationally expensive, as it requires $n$ independent forward and backward passes. The following theorem shows that these per-token gradients can be efficiently approximated using a single forward and backward pass over the entire sequence with the cumulative loss $\mathcal{L} = \sum_{i=1}^n \ell_i$:
\begin{theorem}[Practical Hessian Computation]
\label{thm:practical_hessian_compute}
    Let $\mathcal{L} = \sum_{i=1}^n \ell_i$ be the cumulative loss, where $\ell_i$ is either the block-wise MSE or the KL divergence loss. Assume that the causal attention mechanism is highly sparse, such that the squared gradients of future tokens' losses with respect to the $k$-th token's activation is negligible (i.e., $\left(\frac{\partial \ell_m}{\partial \mathbf{Z}_{j,k}}\right)^2 \approx 0$ for $m > k$). Then, the expectation of the squared gradient of the cumulative loss approximates the expectation of the squared per-token gradient:
    \begin{equation}
        \mathbb{E} \left[ \left( \frac{\partial \mathcal{L}}{\partial \mathbf{Z}_{j,k}} \right)^2 \right] \approx \mathbb{E} \left[ \left( \frac{\partial \ell_k}{\partial \mathbf{Z}_{j,k}} \right)^2 \right].
    \end{equation}
\end{theorem}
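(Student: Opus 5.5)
The plan is to expand the cumulative-loss gradient by linearity and then use causality together with the sparsity assumption to discard every contribution except the diagonal term. Since $\mathcal L=\sum_{m=1}^n \ell_m$,
\begin{equation*}
\frac{\partial \mathcal L}{\partial \mathbf Z_{j,k}}=\sum_{m=1}^n \frac{\partial \ell_m}{\partial \mathbf Z_{j,k}}.
\end{equation*}
Causal masking makes the output of the block (and, for $l=L$, the logits) at position $m$ independent of the layer output at any later position. Hence $\partial \ell_m/\partial \mathbf Z_{j,k}=0$ exactly for $m<k$, and only the terms with $m\ge k$ survive. Writing $a_m=\partial \ell_m/\partial \mathbf Z_{j,k}$ for $m\ge k$ and squaring gives
\begin{equation*}
\left(\frac{\partial \mathcal L}{\partial \mathbf Z_{j,k}}\right)^2=a_k^2+\sum_{m>k}a_m^2+2\sum_{k\le m<m'}a_m a_{m'}.
\end{equation*}

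The diagonal terms are handled directly. The sum $\sum_{m>k}a_m^2$ is negligible term by term by the stated assumption $(\partial \ell_m/\partial \mathbf Z_{j,k})^2\approx 0$ for $m>k$.

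The cross terms are the main obstacle, because the hypothesis only controls squares. I would bound them with Cauchy--Schwarz. For $m,m'>k$ we have $|\mathbb E[a_m a_{m'}]|\le \sqrt{\mathbb E[a_m^2]\,\mathbb E[a_{m'}^2]}\approx 0$. For the mixed terms with $m=k<m'$ we have $|\mathbb E[a_k a_{m'}]|\le \sqrt{\mathbb E[a_k^2]}\sqrt{\mathbb E[a_{m'}^2]}$, which is small relative to $\mathbb E[a_k^2]$ because the second factor is negligible. I read the assumption as "negligible relative to the diagonal term" so that these relative bounds make sense.

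There is a summation issue: the number of future tokens can be as large as $n-k$, so small individual terms could still add up. I would interpret the sparsity assumption as applying to the aggregate $\sum_{m>k}\mathbb E[a_m^2]$. Cauchy--Schwarz then bounds $\big|\mathbb E\big[a_k\sum_{m>k}a_m\big]\big|$ by $\sqrt{\mathbb E[a_k^2]}\,\big(\mathbb E\big[(\sum_{m>k}a_m)^2\big]\big)^{1/2}$. The factor $\mathbb E\big[(\sum_{m>k}a_m)^2\big]$ is in turn controlled by the aggregate squared gradients, either up to a factor $n-k$ or, more naturally, via the same sparsity reading, which treats only a few future positions as attending to $k$.

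As a heuristic backup for the cross terms, I would note that for the Gaussian-noise MSE target of Theorem~\ref{thm:hessian_mse}, and for labels sampled from $p_{\mathbf W}$ in Theorem~\ref{thm:hessian_kl}, the per-token output errors at different positions are independent and zero-mean. The expectation of each cross term therefore vanishes exactly, conditional on the input. This is cleanest in the MSE case and approximate in the KL case, where the score $\nabla\log p$ has mean roughly zero under $\hat{\mathbf W}\approx\mathbf W$.

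Combining the three estimates (exact vanishing of past terms, negligible future squares, and negligible or zero-mean cross terms) leaves $\mathbb E[(\partial\mathcal L/\partial\mathbf Z_{j,k})^2]\approx\mathbb E[(\partial\ell_k/\partial\mathbf Z_{j,k})^2]$. Substituting this into Equation~\ref{eq:hessian_fisher} justifies computing the Hessian with a single backward pass on $\mathcal L$.
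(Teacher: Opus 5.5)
Your proposal is correct at the paper's level of rigor, but your main line of argument differs from the paper's: what you call a ``heuristic backup'' is in fact the paper's whole treatment of the cross terms. The paper writes $\mathbb{E}[g_m g_l]=\mathbb{E}\bigl[g_m\,\mathbb{E}[g_l\mid\mathcal F_{<l}]\bigr]$ for $k\le m<l$, where $g_m$ is your $a_m$. It then shows $\mathbb{E}[g_l\mid\mathcal F_{<l}]=0$, exactly for block-wise MSE because the injected noise is zero-mean, and approximately for KL because the score has zero mean when $p_{\hat{\mathbf W}}\approx p_{\mathbf W}$. Hence every cross term, whether mixed or future--future, vanishes regardless of its magnitude. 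Sparsity is then invoked only to drop $\sum_{m>k}\mathbb{E}[(\partial\ell_m/\partial\mathbf Z_{j,k})^2]$, which is why the hypothesis is stated on squares alone. Your Cauchy--Schwarz route buys generality, since it would also apply to losses with deterministic rather than sampled targets. The price is exactly the three difficulties you ran into. First, you must strengthen the hypothesis to a relative, aggregate one. Second, the mixed terms are only controlled at relative order $\sqrt{\varepsilon}$ rather than $\varepsilon$. Third, without zero mean, $\mathbb{E}[(\sum_{m>k}a_m)^2]$ is bounded by $\sum_{m>k}\mathbb{E}[a_m^2]$ only up to a factor $n-k$. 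The zero-mean argument removes all three issues at once. In the MSE case it even gives the identity $\mathbb{E}[(\partial\mathcal L/\partial\mathbf Z_{j,k})^2]=\sum_{m\ge k}\mathbb{E}[a_m^2]$, so the error is exactly the future squared sum, and it is one-sided. Since the Hessian loss is built precisely with sampled noise or labels, you should make that argument the main line and keep Cauchy--Schwarz as a remark.
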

\textit{Proof}. In causal language models, the gradient of the cumulative loss with respect to the $k$-th token's activation is the sum of the gradients from the $k$-th token and all subsequent tokens:
\begin{equation}
    \frac{\partial \mathcal{L}}{\partial \mathbf{Z}_{j,k}} = \sum_{i=1}^n \frac{\partial \ell_i}{\partial \mathbf{Z}_{j,k}} = \sum_{i=k}^n \frac{\partial \ell_i}{\partial \mathbf{Z}_{j,k}}.
\end{equation}
Squaring this aggregated gradient introduces unwanted cross terms and additional squared terms:
\begin{equation}
    \left( \sum_{i=k}^n \frac{\partial \ell_i}{\partial \mathbf{Z}_{j,k}} \right)^2 = \sum_{i=k}^n \left( \frac{\partial \ell_i}{\partial \mathbf{Z}_{j,k}} \right)^2 + 2 \sum_{k \leq m < l \leq n} \frac{\partial \ell_m}{\partial \mathbf{Z}_{j,k}} \frac{\partial \ell_l}{\partial \mathbf{Z}_{j,k}}.
\end{equation}
We first show that the cross terms vanish in expectation. Let $g_m = \frac{\partial \ell_m}{\partial \mathbf{Z}_{j,k}}$ and $g_l = \frac{\partial \ell_l}{\partial \mathbf{Z}_{j,k}}$ for $k \leq m < l \leq n$. Let $\mathcal{F}_{<l}$ denote all the information up to step $l-1$, including the input and any sampled tokens or noise. By the law of total expectation,
\begin{equation}
    \mathbb{E}[g_m g_l] = \mathbb{E}_{<l} \Big[ \mathbb{E}_l [g_mg_l \mid \mathcal{F}_{<l}] \Big] = \mathbb{E}_{<l} \Big[ g_m\mathbb{E}_l [g_l \mid \mathcal{F}_{<l}] \Big],
\end{equation}
where the inner expectation $\mathbb{E}_l$ is taken with respect to the random variable at step $l$ (either the sampled token or the noise). We now analyze this inner expectation for the two objectives. For the KL divergence loss, the expectation is taken over the target tokens $y_l \sim p_{\mathbf{W}}(y \mid \mathbf{x}_{< l})$. Assuming $\hat{\mathbf{W}} \approx \mathbf{W}$, we have
\begin{equation}
\begin{aligned}
    \mathbb{E}_{y_l \sim p_{\mathbf{W}}(y \mid \mathbf{x}_{< l})}[g_l] &\approx \mathbb{E}_{y_l \sim p_{\hat{\mathbf{W}}}(y \mid \mathbf{x}_{< l})} \left[ \frac{\partial}{\partial \mathbf{Z}_{j,k}} \{-\log p_{\hat{\mathbf{W}}}(y_l \mid \mathbf{x}_{< l})\} \right] \\
    &= -\sum_{y_l \in \mathcal{C}} p_{\hat{\mathbf{W}}}(y_l \mid \mathbf{x}_{< l}) \frac{\frac{\partial}{\partial \mathbf{Z}_{j,k}} p_{\hat{\mathbf{W}}}(y_l \mid \mathbf{x}_{< l})}{p_{\hat{\mathbf{W}}}(y_l \mid \mathbf{x}_{< l})} \\
    &= -\frac{\partial}{\partial \mathbf{Z}_{j,k}} \sum_{y_l \in \mathcal{C}} p_{\hat{\mathbf{W}}}(y_l \mid \mathbf{x}_{< l}) \\
    &= -\frac{\partial}{\partial \mathbf{Z}_{j,k}} (1) = 0.
\end{aligned}
\end{equation}
Hence, the cross terms vanish in expectation for the KL divergence loss. For the block-wise MSE loss, the expectation is taken over the independent Gaussian noise $\boldsymbol\epsilon^{(l)} \sim \mathcal{N}(\mathbf{0}, d\mathbf{I})$ injected into the $l$-th token's target. Recall that the gradient of the MSE loss for a single token $l$ is $g_l = -\frac{1}{d} \boldsymbol\epsilon^{(l)} \left(\frac{\partial \mathbf{h}^{(l)}}{\partial \mathbf{Z}_{j,k}}\right)^\top$. Since the noise is zero-mean (i.e., $\mathbb{E}[\boldsymbol\epsilon^{(l)}] = \mathbf{0}$), the conditional expectation evaluates to zero:
\begin{equation}
    \mathbb{E}_{\boldsymbol\epsilon^{(l)}}[g_l \mid \mathcal{F}_{<l}] = -\frac{1}{d} \mathbb{E}[\boldsymbol\epsilon^{(l)}]\left( \frac{\partial \mathbf{h}^{(l)}}{\partial \mathbf{Z}_{j,k}}\right)^\top = 0.
\end{equation}
Therefore, the cross terms also vanish exactly for the MSE loss. Since the cross terms vanish in expectation for both objectives, the expectation of the squared aggregated gradient reduces to the sum of the expectations of the squared per-token gradients:
\begin{equation}
    \mathbb{E} \left[ \left( \sum_{i=k}^n \frac{\partial \ell_i}{\partial \mathbf{Z}_{j,k}} \right)^2 \right] = \sum_{i=k}^n \mathbb{E} \left[ \left( \frac{\partial \ell_i}{\partial \mathbf{Z}_{j,k}} \right)^2 \right].
\end{equation}
It remains to consider the additional squared terms $g_m^2$, where $k < m \leq n$. Intuitively, in causal language models, the $k$-th token influences subsequent tokens through the attention mechanism. For the vast majority of token pairs, the attention scores are highly sparse and close to zero, implying that the gradient signal $g_m$ propagated back to the $k$-th token is minimal. Consequently, $g_m^2$ can be negligible in practice.
Combining the fact that the cross terms vanish in expectation with the assumption that the additional squared terms are negligible, we obtain
\begin{equation}
    \mathbb{E} \left[ \left( \sum_{i=1}^n \frac{\partial \ell_i}{\partial \mathbf{Z}_{j,k}} \right)^2 \right] \approx \mathbb{E} \left[ \left( \frac{\partial \ell_k}{\partial \mathbf{Z}_{j,k}} \right)^2 \right].
\end{equation}

\subsection{Discussions on Gradient Approximation}
\label{apdx:grad_aprox}
\paragraph{Gradient Approximation in FOEM.}
FOEM~\citep{zheng2026first} approximates the gradient with a first-order Taylor expansion:
\begin{equation}
    \mathbf g^{(\mathbf w)}\approx \mathbf g^{(\mathbf w_{\text{orig}})} + (\mathbf w-\mathbf w_{\text{orig}})\mathbf H,
\end{equation}
where $\mathbf g^{(\mathbf w)}, \mathbf g^{(\mathbf w_{\text{orig}})} \in \mathbb R^{d_{\text{col}}}$ denote the gradient with respect to the weight row vector of the current quantized model and the original full-precision model, respectively. Assuming that the original model has converged on the calibration set, we have $\mathbf g^{(\mathbf w_{\text{orig}})} \approx \mathbf 0$, which yields
\begin{equation}
    \mathbf g^{(\mathbf w)} \approx (\mathbf w - \mathbf w_{\text{orig}})\mathbf H.
\end{equation}
When the layer-wise MSE in Equation~\ref{eq:mse_loss} is used as the optimization objective, the gradient can be derived analytically as
\begin{equation}
\begin{aligned}
\label{eq:mse_grad_aprox}
    \nabla_{\mathbf w}\|\mathbf w\mathbf X - \mathbf w_{\text{orig}}\mathbf X\|_2^2
    &= \nabla_{\mathbf w}\|(\mathbf w - \mathbf w_{\text{orig}})\mathbf X\|_2^2 \\
    &= \nabla_{\mathbf w}\left((\mathbf w - \mathbf w_{\text{orig}})\mathbf X\mathbf X^\top(\mathbf w - \mathbf w_{\text{orig}})^\top\right) \\
    &= 2(\mathbf w - \mathbf w_{\text{orig}})\mathbf X\mathbf X^\top \\
    &= (\mathbf w - \mathbf w_{\text{orig}})\mathbf H.
\end{aligned}
\end{equation}
As shown in Equation~\ref{eq:mse_grad_aprox}, this gradient approximation is exact when the layer-wise MSE loss is adopted. Nevertheless, for other loss types, it may be subject to approximation error.

\paragraph{Limitations of the Gradient Approximation.}
The approximation error can stem from both the first-order Taylor expansion and the assumption that the original model has converged on the calibration set, depending on the choice of loss function. For the first-order Taylor approximation to be exact, the loss must be strictly quadratic with respect to the weights $\mathbf w$. This condition does not hold for global objectives such as NLL, KL divergence, and block-wise MSE loss, as the composition of nonlinearities in deep neural networks results in a highly complex, non-quadratic loss landscape. Regarding the local convergence assumption of the original model, it holds perfectly when the loss directly measures the discrepancy between the original and quantized models. However, for task-specific objectives such as NLL loss, this assumption is often violated due to the domain gap between the calibration dataset and the large-scale datasets used during model pre-training~\citep{chee2025discquant}.

\subsection{Analysis of the Exploding First-Order Weight Update}
\label{apdx:exploding_first_order}
\paragraph{Stable Weight Updates in GPTQ and FOEM.}
The weight update rule in the original GPTQ~\citep{frantaroptq} (Equation~\ref{eq:hessian_weight_update}) exhibits a highly desirable numerical property: the magnitude of the update is inherently tied to the quantization error and remains strictly invariant to the scale of the Hessian matrix. Specifically, scaling $\mathbf H$ by any non-zero constant does not alter the final update. Similarly, in FOEM~\citep{zheng2026first}, the first-order update term $\mathbf g\mathbf H^{-1}$ in Equation~\ref{eq:grad_weight_update} is approximated by the weight residual $\mathbf w - \mathbf w_{\text{orig}}$. This formulation ensures that the update magnitude is bounded by the initial weight perturbation, making it robustly scale-invariant with respect to the loss gradients.

\paragraph{Exploding Weight Updates in Exact First-order Compensation.}
As established in Theorem~\ref{thm:hessian_mse} and Theorem~\ref{thm:hessian_kl}, the Hessians of the block-wise MSE and KL divergence losses can be efficiently approximated using the expected outer products of the gradients computed from the block-wise MSE and NLL losses, respectively. Let $\mathbf H=\mathbb E\left[\nabla_{\mathbf{\hat W}}\ell'^\top\nabla_{\mathbf{\hat W}}\ell'\right]$ and $\mathbf g=\mathbb E[\nabla_{\mathbf{\hat W}}\ell]$. If we scale the losses $\ell$ and $\ell'$ by positive scalars $a$ and $b$, respectively, the squared $L_2$ norm of the first-order update step scales by a factor of $a^2b^{-4}$:

\begin{equation}
\begin{aligned}
    \left\| \mathbb{E}[\nabla_{\mathbf{\hat W}}(a\ell)] \left( \mathbb{E}\left[\nabla_{\mathbf{\hat W}}(b\ell')^\top\nabla_{\mathbf{\hat W}}(b\ell')\right] \right)^{-1} \right\|_2^2 
    &= \left\| a\mathbb{E}[\nabla_{\mathbf{\hat W}}\ell] \left( b^2\mathbb{E}\left[\nabla_{\mathbf{\hat W}}\ell'^\top\nabla_{\mathbf{\hat W}}\ell'\right] \right)^{-1} \right\|_2^2 \\
    &= a^2b^{-4} \left\| \mathbf{g}\mathbf{H}^{-1} \right\|_2^2.
\end{aligned}
\end{equation}

Unlike the scale-invariant updates in GPTQ and FOEM, this exact first-order compensation mechanism is highly sensitive to the relative scales of $\ell$ and $\ell'$. As dictated by the derived $a^2b^{-4}$ multiplier, the update magnitude is unbounded. Because these loss scales can vary dramatically across different model architectures and quantization configurations, this unbounded variation can severely destabilize the optimization process. Most critically, the presence of the $b^{-4}$ term indicates that the update norm is acutely vulnerable to the scale of $\ell'$. When the gradients of $\ell'$ are smaller in magnitude than those of $\ell$ (effectively yielding a small $b$ relative to $a$), the $b^{-4}$ factor rapidly inflates the inverse Hessian term, severely amplifying the update step and leading to catastrophic divergence. 

\subsection{Derivation of the Trust-region Scaling Factor}
\label{apdx:trust_region_derivation}
\paragraph{Proof of Theorem~\ref{thm:trust_region_beta}.}
To address the exploding first-order weight update issue described in Appendix~\ref{apdx:exploding_first_order}, we constrain the exact gradient compensation step $\Delta\mathbf{w}_{\text{grad}}$ in Equation~\ref{eq:grad_weight_update} by introducing a scaling factor $\beta \in [0, 1]$. The total weight update at a single quantization step is defined as $\Delta \mathbf{w} = \Delta \mathbf{w}_{\text{GPTQ}} + \beta \Delta \mathbf{w}_{\text{grad}}$. Under the second-order Taylor approximation, the resulting total loss change $\Delta L$ can be decomposed into three components:
\begin{equation}
\begin{aligned}
    \Delta L &= \mathbf{g}(\Delta \mathbf{w}_{\text{GPTQ}} + \beta \Delta \mathbf{w}_{\text{grad}})^\top + \frac{1}{2} (\Delta \mathbf{w}_{\text{GPTQ}} + \beta \Delta \mathbf{w}_{\text{grad}}) \mathbf{H} (\Delta \mathbf{w}_{\text{GPTQ}} + \beta \Delta \mathbf{w}_{\text{grad}})^\top \\
    &= \underbrace{\left( \mathbf{g}\Delta \mathbf{w}_{\text{GPTQ}}^\top + \frac{1}{2} \Delta \mathbf{w}_{\text{GPTQ}} \mathbf{H} \Delta \mathbf{w}_{\text{GPTQ}}^\top \right)}_{\Delta L_{\text{GPTQ}}} + \underbrace{\left( \beta \mathbf{g}\Delta \mathbf{w}_{\text{grad}}^\top + \frac{1}{2} \beta^2 \Delta \mathbf{w}_{\text{grad}} \mathbf{H} \Delta \mathbf{w}_{\text{grad}}^\top \right)}_{\Delta L_{\text{grad}}} 
    \\&\ \ \ \ + \underbrace{\beta \Delta \mathbf{w}_{\text{GPTQ}} \mathbf{H} \Delta \mathbf{w}_{\text{grad}}^\top}_{\text{Cross Term}}.
\end{aligned}
\end{equation}

By substituting the explicit solutions of $\Delta \mathbf{w}_{\text{GPTQ}}$ and $\Delta \mathbf{w}_{\text{grad}}$ in Equation~\ref{eq:grad_weight_update}, we can show that the cross term is strictly zero:
\begin{equation}
\begin{aligned}
    \Delta \mathbf{w}_{\text{GPTQ}} \mathbf{H} \Delta \mathbf{w}_{\text{grad}}^\top &= \left( \frac{\hat{\mathbf w}_t - \mathbf w_t}{\mathbf{H}^{-1}_{t,t}} \mathbf{e}_t \mathbf{H}^{-1} \right) \mathbf{H} \left( \frac{\mathbf{g} \mathbf{H}^{-1} \mathbf{e}_t^\top}{\mathbf{H}^{-1}_{t,t}} \mathbf{H}^{-1} \mathbf{e}_t^\top - \mathbf{H}^{-1} \mathbf{g}^\top \right) \\
    &= \frac{\hat{\mathbf w}_t - \mathbf w_t}{\mathbf{H}^{-1}_{t,t}} \mathbf{e}_t \left( \frac{\mathbf{g} \mathbf{H}^{-1} \mathbf{e}_t^\top}{\mathbf{H}^{-1}_{t,t}} \mathbf{H}^{-1} \mathbf{e}_t^\top - \mathbf{H}^{-1} \mathbf{g}^\top \right) \\
    &= \frac{\hat{\mathbf w}_t - \mathbf w_t}{\mathbf{H}^{-1}_{t,t}} \left( \mathbf{g} \mathbf{H}^{-1} \mathbf{e}_t^\top - \mathbf{e}_t \mathbf{H}^{-1} \mathbf{g}^\top \right) \\
    &= 0.
\end{aligned}
\end{equation}
Similarly, $\Delta L_{\text{GPTQ}}$ and $\Delta L_{\text{grad}}$ can be computed as
\begin{equation}
\begin{aligned}
    \Delta L_{\text{GPTQ}} = \frac{\hat{\mathbf w}_t - \mathbf w_t}{\mathbf{H}^{-1}_{t,t}} (\mathbf{g} \mathbf{H}^{-1} \mathbf{e}_t^\top) + \frac{1}{2} \frac{(\hat{\mathbf w}_t - \mathbf w_t)^2}{\mathbf{H}^{-1}_{t,t}},\quad
    \Delta L_{\text{grad}} = -c\beta + \frac{1}{2}c\beta^2,
\end{aligned}
\end{equation}
where $c$ is a non-negative scalar defined as
\begin{equation}
    c = \mathbf{g}\mathbf{H}^{-1}\mathbf{g}^\top - \frac{(\mathbf{g}\mathbf{H}^{-1}\mathbf{e}_t^\top)^2}{\mathbf{H}^{-1}_{t,t}} \geq 0.
\end{equation}
To ensure stability, we constrain the magnitude of $\Delta L_{\text{grad}}$ to be bounded by a budget proportional to the magnitude of $\Delta L_{\text{GPTQ}}$:
\begin{equation}
    \left| \Delta L_{\text{grad}} \right| \leq \alpha \cdot \left| \Delta L_{\text{GPTQ}} \right|.
\end{equation}
Given $\beta \in [0, 1]$ and $c \geq 0$, $\Delta L_{\text{grad}}$ is strictly non-positive, which yields
\begin{equation}
    c\beta^2 - 2c\beta + 2\alpha \cdot \left| \Delta L_{\text{GPTQ}} \right| \geq 0.
\end{equation}
Solving for $\beta$ under the constraint $\beta \in [0, 1]$ gives:
\begin{equation}
    \beta \leq 1 - \sqrt{\max\left(1 - \frac{2\alpha \cdot \left| \Delta L_{\text{GPTQ}} \right|}{c}, 0\right)}.
\end{equation}
We choose the largest feasible $\beta$ as the scaling factor to minimize the loss. When $c \leq 2\alpha \cdot \left| \Delta L_{\text{GPTQ}} \right|$, the unscaled gradient update naturally falls within the trust region, and no scaling is required (i.e., $\beta = 1$).

\paragraph{Bounding the Total Loss Change.}
Given $\Delta L = \Delta L_{\text{GPTQ}} + \Delta L_{\text{grad}}$ and $\left| \Delta L_{\text{grad}} \right| \leq \alpha \cdot \left| \Delta L_{\text{GPTQ}} \right|$, we can bound the total loss change within the following interval:
\begin{equation}
    \Delta L_{\text{GPTQ}} - \alpha \left| \Delta L_{\text{GPTQ}} \right| \leq \Delta L \leq \Delta L_{\text{GPTQ}}.
\end{equation}
This guarantees that the total loss increment after gradient compensation is at most equal to the original GPTQ update. Ideally, it reduces the GPTQ loss increment by up to a ratio of $\alpha$, ensuring the theoretical stability of the algorithm.

\paragraph{Implementation Details.}
For an efficient parallel implementation across all rows of the weight matrix, we compute the scaling factors in matrix form. Let $\mathbf{W}, \hat{\mathbf{W}} \in \mathbb{R}^{d_{\text{row}}\times d_{\text{col}}}$ denote the original and quantized weight matrices, respectively, and define the quantization error matrix as $\Delta \mathbf{W} = \hat{\mathbf{W}} - \mathbf{W}$. Let $\mathbf{G}\in\mathbb R^{d_{\text{row}}\times d_{\text{col}}}$ denote the gradient matrix.
The matrix $\Delta\mathbf{L}_{\text{GPTQ}} \in \mathbb{R}^{d_{\text{row}}\times d_{\text{col}}}$, comprising the pre-computed GPTQ loss changes $\Delta L_{\text{GPTQ}}$, is formulated as:
\begin{equation}
    \Delta\mathbf{L}_{\text{GPTQ}} = \left( \Delta \mathbf{W} \circ (\mathbf{G}\mathbf{H}^{-1}) + \frac{1}{2} (\Delta \mathbf{W})^{\circ 2} \right) \oslash \text{diag}(\mathbf{H}^{-1}),
\end{equation}
where $\circ$ denotes the Hadamard product and $(\cdot)^{\circ 2}$ denotes the element-wise square.
Similarly, the matrix $\mathbf{C} \in \mathbb{R}^{d_{\text{row}}\times d_{\text{col}}}$, containing the scalars $c$ for all weights, is computed as:
\begin{equation}
    \mathbf{C} = (\mathbf{G}\mathbf{H}^{-1} \circ \mathbf{G})\mathbf{1}^\top \mathbf{1} - (\mathbf{G}\mathbf{H}^{-1})^{\circ 2} \oslash \text{diag}(\mathbf{H}^{-1}),
\end{equation}
where $\mathbf{1}$ is a row vector of ones. For simplicity, we compute the row-wise scaling factor $\boldsymbol\beta\in\mathbb R^{d_{\text{row}}}$ by averaging the scaling factors within each row:
\begin{equation}
\label{eq:beta}
    \boldsymbol\beta = \left(\mathbf{1}\left(1 - \sqrt{\max\left(1 - 2\alpha (|\Delta\mathbf{L}_{\text{GPTQ}}| \oslash \mathbf C), 0\right)}\right)^\top\right) / d_{\text{col}},
\end{equation}
where $\oslash$ denotes Hadamard division.

\subsection{Efficient Weight Update with Gradient Compensation}
\label{apdx:g2ptq_update}

\paragraph{Proof of Equation~\ref{eq:recursive_f}.}
Let $\mathbf{L}^\top_{k,:} \in \mathbb{R}^{d_{\text{col}}}$ denote the $k$-th row of $\mathbf{L}^\top$. The inverse of the Hessian can be written as the sum of outer products of these rows:
\begin{equation}
    \mathbf{H}^{-1} = \mathbf{L} \mathbf{L}^{\top} = \sum_{k=0}^{d_{\text{col}}-1} (\mathbf{L}^\top_{k,:})^{\top} \mathbf{L}^\top_{k,:}.
\end{equation}
Let $\mathbf{g} \in \mathbb{R}^{d_{\text{col}}}$ be a single row of the gradient matrix $\mathbf{G}$. We want to compute the unpadded update vector $\mathbf{g}_{t:}(\mathbf{H}_{t:,t:})^{-1} \in \mathbb{R}^{d_{\text{col}}-t}$. Using the decomposition above, we obtain
\begin{equation}
\begin{aligned}
    \mathbf{g}_{t:}(\mathbf{H}_{t:,t:})^{-1} &= \mathbf{g}_{t:} \left( \sum_{k=t}^{d_{\text{col}}-1} (\mathbf{L}^\top_{k,t:})^{\top} \mathbf{L}^\top_{k,t:} \right) \\
    &= \sum_{k=t}^{d_{\text{col}}-1} \left[ \mathbf{g}_{t:} (\mathbf{L}^\top_{k,t:})^{\top} \right] \mathbf{L}^\top_{k,t:} \\
    &= \sum_{k=t}^{d_{\text{col}}-1} (\mathbf{g} \mathbf{L})_k \mathbf{L}^\top_{k,t:}.
\end{aligned}
\end{equation}
Applying the same row-wise argument to the full gradient matrix $\mathbf{G} \in \mathbb{R}^{d_{\text{row}} \times d_{\text{col}}}$ yields
\begin{equation}
\mathbf{G}_{:,t:}(\mathbf{H}_{t:,t:})^{-1} = \sum_{k=t}^{d_{\text{col}}-1} (\mathbf{G} \mathbf{L})_{:,k} \mathbf{L}^\top_{k,t:} = \sum_{k=t}^{d_{\text{col}}-1} \mathbf{M}_{:,k} \mathbf{L}^\top_{k,t:},
\end{equation}
Accordingly, the padded matrix $\mathbf{F}^{(t)}$ can be written as
\begin{equation}
\mathbf{F}^{(t)} = \sum_{k=t}^{d_{\text{col}}-1} \mathbf{M}_{:,k} \mathbf{L}^\top_{k,:}.
\end{equation}
This representation immediately gives the base case for $t=0$:
\begin{equation}
\mathbf{F}^{(0)} = \sum_{k=0}^{d_{\text{col}}-1} \mathbf{M}_{:,k} \mathbf{L}^\top_{k,:} = \mathbf{M}\mathbf{L}^\top.
\end{equation}
The recursive relation for $t+1$ then follows by separating the $t$-th term from the summation:
\begin{equation}
\begin{aligned}
\mathbf{F}^{(t+1)} &= \sum_{k=t+1}^{d_{\text{col}}-1} \mathbf{M}_{:,k} \mathbf{L}^\top_{k,:} \\
&= \sum_{k=t}^{d_{\text{col}}-1} \mathbf{M}_{:,k} \mathbf{L}^\top_{k,:} - \mathbf{M}_{:,t} \mathbf{L}^\top_{t,:} \\
&= \mathbf{F}^{(t)} - \mathbf{M}_{:,t} \mathbf{L}^\top_{t,:}.
\end{aligned}
\end{equation}

\paragraph{Proof of Equation~\ref{eq:lazy_batch}.}
For simplicity in the derivation, we assume the trust-region scaling factor $\boldsymbol\beta$ is absorbed into the matrix $\mathbf{M}$ with $\boldsymbol\beta\circ\mathbf{M}$. Let $Q = \{q_{\text{start}}, \dots, q_{\text{end}}\}$ be the set of column indices currently being updated in a block of size $B$, and let $R$ be the set of remaining column indices. First, we consider the batched update for $\mathbf{F}$. Applying the recursive relationship $\mathbf{F}^{(t+1)} = \mathbf{F}^{(t)} - \mathbf{M}_{:,t} \mathbf{L}^\top_{t,:}$ iteratively for all $t \in Q$ yields the update for the remaining columns $R$ after the block finishes:
\begin{equation}
    \mathbf{F}^{(q_{\text{end}}+1)}_{:,R} = \mathbf{F}^{(q_{\text{start}})}_{:,R} - \sum_{t \in Q} \mathbf{M}_{:,t} \mathbf{L}^\top_{t,R} = \mathbf{F}^{(q_{\text{start}})}_{:,R} - \mathbf{M}_{:,Q} \mathbf{L}^\top_{Q,R}.
\end{equation}
Next, we consider the updates to the weight matrix $\mathbf{W}$. The update to the remaining columns at step $t$ can be written as:
\begin{equation}
    \Delta\mathbf{W}_{:,R}^{(t)} = -\mathbf{E}_{:,t} \mathbf{L}^\top_{t,R} - \mathbf{F}^{(t)}_{:,R}.
\end{equation}
Accumulating these updates over the entire block $Q$, we obtain the total batched update for $\mathbf{W}_{:,R}$:
\begin{equation}
\label{eq:lazy_update1}
\begin{aligned}
    \Delta\mathbf{W}_{:,R} &= \sum_{t \in Q} \left( -\mathbf{E}_{:,t} \mathbf{L}^\top_{t,R} - \mathbf{F}^{(t)}_{:,R} \right) \\
&= -\mathbf{E}_{:,Q} \mathbf{L}^\top_{Q,R} - \sum_{t \in Q} \mathbf{F}^{(t)}_{:,R}.
\end{aligned}
\end{equation}
We can expand the term $\sum_{t \in Q} \mathbf{F}^{(t)}_{:,R}$ by expressing each $\mathbf{F}^{(t)}_{:,R}$ in terms of the initial block state $\mathbf{F}^{(q_{\text{start}})}_{:,R}$:
\begin{equation}
\begin{aligned}
    \sum_{t \in Q} \mathbf{F}^{(t)}_{:,R} &= \sum_{t \in Q} \left( \mathbf{F}^{(q_{\text{start}})}_{:,R} - \sum_{j=q_{\text{start}}}^{t-1} \mathbf{M}_{:,j} \mathbf{L}^\top_{j,R} \right) \\
    &= B \cdot \mathbf{F}^{(q_{\text{start}})}_{:,R} - \sum_{t \in Q} \sum_{j=q_{\text{start}}}^{t-1} \mathbf{M}_{:,j} \mathbf{L}^\top_{j,R} \\
    &= B \cdot \mathbf{F}^{(q_{\text{start}})}_{:,R} - \mathbf{M}_{:,Q} \mathbf{D} \mathbf{L}^\top_{Q,R},
\end{aligned}
\end{equation}
where $\mathbf{D} = \text{diag}(B-1, B-2, \dots, 0) \in \mathbb{R}^{B \times B}$. Finally, substituting this expression into Equation~\ref{eq:lazy_update1} gives the complete lazy-batch update formula:
\begin{equation}
\begin{aligned}
    \Delta\mathbf{W}_{:,R} &= -\mathbf{E}_{:,Q} \mathbf{L}^\top_{Q,R} - \left( B \cdot \mathbf{F}^{(q_{\text{start}})}_{:,R} - \mathbf{M}_{:,Q} \mathbf{D} \mathbf{L}^\top_{Q,R} \right) \\
    &= - \left( \mathbf{E}_{:,Q} \mathbf{L}^\top_{Q,R} + B \cdot \mathbf{F}^{(q_{\text{start}})}_{:,R} - \mathbf{M}_{:,Q} \mathbf{D} \mathbf{L}^\top_{Q,R} \right).
\end{aligned}
\end{equation}

\section{Implementation Details}
\label{apdx:impl_details}

\subsection{Pseudocode}
\label{apdx:pseudocode}
We provide the pseudocode for quantizing the entire Transformer model in Algorithm~\ref{alg:g2ptq-model}.

\subsection{Trust-region Threshold Autotuning}
\label{apdx:trust-region-autotune}
We determine the trust-region threshold $\alpha$ defined in Theorem~\ref{thm:trust_region_beta} for each Transformer block via a grid search. Specifically, for each candidate value of $\alpha$, we quantize the respective block and evaluate the resulting block-wise quantization loss with a subset of 64 calibration samples. The search is conducted over the interval $[0, 3]$ with a step size of $0.1$, and is terminated early once the loss begins to increase. The complete procedure is detailed in Algorithm~\ref{alg:alpha_autotune}.

\begin{algorithm}[t]
\begingroup
\setlength{\heavyrulewidth}{1.2pt}
\setlength{\lightrulewidth}{0.5pt}
\setlength{\aboverulesep}{1pt}
\setlength{\belowrulesep}{1.4pt}
\begin{tabular}{@{}p{0.98\textwidth}@{}}
\toprule
\textbf{Algorithm 2:} \name\ quantization for the entire Transformer model \\
\midrule\hspace*{6pt}%
\begin{minipage}{\dimexpr\linewidth-12pt\relax}
\textbf{Input:} Transformer blocks: $\text{block}^{(l)}$ ($l=1,2,\ldots,L$), model input $\mathbf{Y}$, indicator $\mathcal{I}_{\text{autotune}}$ \\
\textbf{Output:} Quantized Transformer model \\[3pt]
\phantom{0}1:\quad $\hat{\mathbf{Y}} \leftarrow \mathbf{Y}$ \\
\phantom{0}2:\quad \textbf{for} $l=1,2,\ldots,L$ \textbf{do} \\
\phantom{0}3:\quad \quad Load $\text{block}^{(l)}$ to GPU \\
\phantom{0}4:\quad \quad $\mathcal{S} \leftarrow$ Set of linear layers to be quantized in $\text{block}^{(l)}$ \\
\phantom{0}5:\quad \quad $\hat{\mathbf{Y}}' \leftarrow \hat{\mathbf{Y}}$ \\
\phantom{0}6:\quad \quad $\mathbf{Y} \leftarrow \text{block}^{(l)}(\mathbf{Y})$\text{.detach()}  \\
\phantom{0}7:\quad \quad $\hat{\mathbf{Y}} \leftarrow \text{block}^{(l)}(\hat{\mathbf{Y}}')$ \\
\phantom{0}8:\quad \quad \textbf{if} $l < L$ \textbf{then} \\
\phantom{0}9:\quad \quad \quad $\ell_H \leftarrow \ell_{\text{MSE}}\big(\hat{\mathbf{Y}}, (\hat{\mathbf{Y}} + \boldsymbol{\epsilon})\text{.detach()}\big)$ with $\boldsymbol{\epsilon} \sim \mathcal{N}(\mathbf{0}, d\mathbf{I})$ \\
10:\quad \quad \quad $\ell_G \leftarrow \ell_{\text{MSE}}(\hat{\mathbf{Y}}, \mathbf{Y})$ \\
11:\quad \quad \textbf{else} \\
12:\quad \quad \quad $\ell_H \leftarrow -\log p(y|\hat{\mathbf{Y}})$ with $y \sim p(\cdot|\mathbf{Y})$ \\
13:\quad \quad \quad $\ell_G \leftarrow D_{\text{KL}}\big(p(\cdot|\mathbf{Y}) \,\|\, p(\cdot | \hat{\mathbf{Y}})\big)$ \\
14:\quad \quad \textbf{end if} \\
15:\quad \quad Backpropagate $\ell_H$ to cache output activation gradients for each layer in $\mathcal{S}$  \\
16:\quad \quad Backpropagate $\ell_G$ to cache weight gradients for each layer in $\mathcal{S}$ \\
17:\quad \quad $\_ \leftarrow \text{block}^{(l)}(\hat{\mathbf{Y}}')$, compute Hessian via Equation~\ref{eq:hessian_fisher} for each layer in $\mathcal{S}$ \\
18:\quad \quad \textbf{if} $\mathcal{I}_{\text{autotune}}$ \textbf{then} \\
19:\quad \quad \quad Search for $\alpha^{(l)}$ via Algorithm~\ref{alg:alpha_autotune} and save it to disk \\
20:\quad \quad \textbf{else} \\
21:\quad \quad \quad Load $\alpha^{(l)}$ from disk \\
22:\quad \quad \textbf{end if} \\
23:\quad \quad Compute trust-region scaling factor $\boldsymbol\beta$ via Equation~\ref{eq:beta} \\
24:\quad \quad \textbf{for} each linear layer in $\mathcal{S}$ \textbf{do} \\
25:\quad \quad \quad Quantize the linear layer via Algorithm~\ref{alg:g2ptq} \\
26:\quad \quad \textbf{end for} \\
27:\quad \quad $\hat{\mathbf{Y}} \leftarrow \text{block}^{(l)}(\hat{\mathbf{Y}}')$  \\
28:\quad \quad Offload $\text{block}^{(l)}$ to CPU \\
29:\quad \textbf{end for}

\vspace{2pt}
\end{minipage}\\
\bottomrule
\end{tabular}
\endgroup
\captionsetup{labelformat=empty,labelsep=none,textformat=empty}
\caption{}
\vspace{-\baselineskip}
\label{alg:g2ptq-model}
\end{algorithm}

\begin{algorithm}[t]
\begingroup
\setlength{\heavyrulewidth}{1.2pt}
\setlength{\lightrulewidth}{0.5pt}
\setlength{\aboverulesep}{1pt}
\setlength{\belowrulesep}{1.4pt}
\begin{tabular}{@{}p{0.98\textwidth}@{}}
\toprule
\textbf{Algorithm 3:} Trust-region threshold autotuning for one Transformer block \\
\midrule\hspace*{6pt}%
\begin{minipage}{\dimexpr\linewidth-12pt\relax}
\textbf{Input:} Transformer block: $\text{block}^{(l)}$, search range $[\alpha_{\min}, \alpha_{\max}]$, step size $\Delta\alpha$, quantized block input $\hat{\mathbf{Y}}'$, full-precision block output $\mathbf{Y}$ \\
\textbf{Output:} Trust-region threshold $\alpha_{\text{best}}$ \\[3pt]
\phantom{0}1:\quad $\ell_{\text{best}} \leftarrow \infty$, $\alpha_{\text{best}} \leftarrow 0$ \\
\phantom{0}2:\quad $\text{state\_dict} \leftarrow \text{block}^{(l)}.\text{state\_dict}()$ \\
\phantom{0}3:\quad \textbf{for} $\alpha \in \{\alpha_{\min}, \alpha_{\min} + \Delta\alpha, \ldots, \alpha_{\max}\}$ \textbf{do} \\
\phantom{0}4:\quad \quad $\mathcal{S} \leftarrow$ Set of linear layers to be quantized in $\text{block}^{(l)}$ \\
\phantom{0}5:\quad \quad Compute trust-region scaling factor $\boldsymbol\beta$ via Equation~\ref{eq:beta} \\
\phantom{0}6:\quad \quad \textbf{for} each linear layer in $\mathcal{S}$ \textbf{do} \\
\phantom{0}7:\quad \quad \quad Quantize the linear layer via Algorithm~\ref{alg:g2ptq} \\
\phantom{0}8:\quad \quad \textbf{end for} \\
\phantom{0}9:\quad \quad $\hat{\mathbf{Y}} \leftarrow \text{block}^{(l)}(\hat{\mathbf{Y}}')$ \\
10:\quad \quad \textbf{if} $l < L$ \textbf{then} \\
11:\quad \quad \quad $\ell_G \leftarrow \ell_{\text{MSE}}(\hat{\mathbf{Y}}, \mathbf{Y})$ \\
12:\quad \quad \textbf{else} \\
13:\quad \quad \quad $\ell_G \leftarrow D_{\text{KL}}\big(p(\cdot|\mathbf{Y}) \,\|\, p(\cdot | \hat{\mathbf{Y}})\big)$ \\
14:\quad \quad \textbf{end if} \\
15:\quad \quad $\text{block}^{(l)}.\text{load\_state\_dict}(\text{state\_dict})$ \\
16:\quad \quad \textbf{if} $\ell_G < \ell_{\text{best}}$ \textbf{then} \\
17:\quad \quad \quad $\ell_{\text{best}} \leftarrow \ell_G$ \\
18:\quad \quad \quad $\alpha_{\text{best}} \leftarrow \alpha$ \\
19:\quad \quad \textbf{else} \\
20:\quad \quad \quad \textbf{break} \\
21:\quad \quad \textbf{end if} \\
22:\quad \textbf{end for}
\vspace{2pt}
\end{minipage} \\
\bottomrule
\end{tabular}
\endgroup
\captionsetup{labelformat=empty,labelsep=none,textformat=empty}
\caption{}
\vspace{-\baselineskip}
\label{alg:alpha_autotune}
\end{algorithm}

\subsection{Quantization Efficiency Optimization}
\label{apdx:perf_optim}
\paragraph{Kernel Fusion.}
As detailed in Section~\ref{sec:effi_impl}, \name\ is implemented using a lazy-batch update scheme. Within each column block, the operations in the \textit{for} loop are typically memory-bound, making them highly suitable for kernel fusion. Consequently, we implement two Triton kernels~\citep{tillet2019triton}: one for column-wise weight quantization and quantization error computation (Lines 11--12 in Algorithm~\ref{alg:g2ptq}), and another for the intra-block weight and $\mathbf F$ matrix update (Lines 13--14 in Algorithm~\ref{alg:g2ptq}). These custom kernels significantly accelerate \name\ by reducing redundant memory traffic.

\paragraph{Efficient Scaling for Large MoEs.}
We introduce further optimizations to enable the quantization process to scale efficiently to large-scale MoE models. First, we group multiple small-sized experts into a single batch and quantize them simultaneously to enhance arithmetic intensity. Second, we employ data parallelism for distributed quantization, distributing both the calibration data processing and the linear layer quantization across multiple devices. Third, we offload the hidden states of the calibration data to conserve GPU memory, utilizing asynchronous onloading and offloading to minimize overhead.

\section{Experiment Details}
\label{apdx:exp_details}
\input{tables/appendix/model}

\subsection{Model Coverage}
\label{apdx:model}
To comprehensively assess the generalizability of \name\ across different model architectures, we evaluate it on 15 models ranging in size from 0.6B to 125B parameters across various model families. As summarized in Table~\ref{tab:model_arch}, these models encompass a diverse array of modern architectures, including standard LLaMA-like dense models (LLaMA3 and Qwen3), hybrid models featuring mixed linear, full, or sparse attention mechanisms (Qwen3.5 and Qwen3.8-Flash-Next), and mixture-of-experts models (Qwen3-MoE and Qwen3.8-Flash-Next). Notably, Qwen3.8-Flash-Next also incorporates gated residuals and n-gram embeddings, rendering its architecture highly distinct from the other evaluated models.

\subsection{Discussion of the KL Divergence Metric}
\label{apdx:kl}
KL divergence serves as a reliable proxy for evaluating overall model degradation. Recent studies~\citep{dutta2024accuracy, tseng2025model, helcig2026statistically} demonstrate that KL divergence exhibits a strong correlation with downstream task accuracy, making it a more dependable metric for gauging quantization-induced misalignment than downstream metrics such as PPL and question-answering accuracy. These task-specific metrics can be noisy and occasionally fail to capture true model degradation, particularly at 4-bit precision where the drop in accuracy is often marginal. In contrast, KL divergence directly quantifies the distributional shift relative to the full-precision model, thereby providing a robust measure for assessing the preservation of a model's general capabilities.

\subsection{Benchmarks}
\label{apdx:benchmark}
For downstream evaluation, we report perplexity (PPL) and KL divergence, in line with prior work~\citep{frantaroptq,lin2024awq,dutta2024accuracy,tseng2025model}. These metrics are evaluated on datasets spanning multiple domains: the pre-training dataset WikiText2~\citep{merity2016pointer}, the chat dataset UltraChat~\citep{ding2023enhancing}, and the mathematics dataset NuminaMath1.5~\citep{numina_math_datasets}. Specifically, we utilize the standard test set for WikiText2, while for UltraChat and NuminaMath1.5, we randomly sample 128 and 256 examples, respectively, to serve as our test sets. Furthermore, we report the accuracy across seven commonsense question-answering benchmarks: ARC-Challenge, ARC-Easy~\citep{clark2018think}, C-Eval~\citep{huang2023ceval}, HellaSwag~\citep{zellers2019hellaswag}, LAMBADA~\citep{paperno2016lambada}, PIQA~\citep{bisk2020piqa}, and WinoGrande~\citep{sakaguchi2021winogrande}. To assess the long-context generation capabilities of the quantized models, we conduct evaluations on challenging reasoning benchmarks for the Qwen3.8-Flash-Next model following \citep{liu2025quantization}. These include GPQA Diamond~\citep{rein2023gpqa}, LiveCodeBench v6~\citep{jain2025livecodebench}, ArXiv-Math~\citep{dekoninck2026matharena}, and IFBench~\citep{pyatkin2025generalizing}, which collectively encompass tasks requiring domain expertise, mathematical reasoning, coding, and instruction following. Evaluations on reasoning benchmarks are executed using EvalScope~\citep{evalscope_2024} across three different random seeds. The sampling parameters are configured with a temperature of 1.0, top\_p of 0.95, top\_k of 20, and repetition\_penalty of 1.0.

\subsection{Quantization Settings and Baselines}
\label{apdx:qsettings_baseline}
We primarily investigate 2-, 3-, and 4-bit weight-only quantization, as well as 4-bit weight-activation quantization. For weights, we apply symmetric per-channel quantization. Following prior work~\cite{frantaroptq,ashkboos2024quarot,lin2024awq}, we perform a grid search over the weight clipping factors to minimize the MSE of the quantized weights. Additionally, we utilize the activation reordering technique~\cite{frantaroptq} with static grouping to better compensate for quantization errors in critical outlier channels~\cite{dettmers2022gpt3,xiao2023smoothquant}. For activations, we apply symmetric per-token quantization with a clipping ratio set to 0.9~\citep{ashkboos2024quarot}. To calibrate the dense models, we sample 1,024 sequences of length 2,048 from the NeuralMagic dataset\footnote{\url{https://huggingface.co/datasets/neuralmagic/LLM_compression_calibration}}. Due to the sparse activation patterns of MoE, we increase the number of calibration samples to 8,192 to ensure a sufficient number of calibration tokens for each expert. Prior to quantization, the models are rotated~\cite{ashkboos2024quarot} with Hadamard transforms to mitigate the impact of outliers. As baselines, we compare \name\ against RTN~\cite{nagel2021white} and several representative state-of-the-art weight quantization methods, including GPTQ~\cite{frantaroptq}, GuidedQuant~\cite{kim2025guidedquant}, and GPTAQ~\cite{li2025gptaq}. To ensure a fair comparison, all methods employ a uniform scalar quantizer. For both \name\ and GuidedQuant, we set the number of channel groups to $g=4$, following the configuration in GuidedQuant~\cite{kim2025guidedquant}.

\section{Additional Experiments}
\label{apdx:exp}

\subsection{More Ablations}
\label{apdx:ablations}
\input{tables/appendix/more_ablations}

\paragraph{Ablation on Calibration Set Size.}
Figure~\ref{fig:calib_size_ablation} illustrates the effect of calibration set size on \name\ and GPTQ. Although \name\ achieves competitive results with a small calibration set, it generally requires more calibration samples than GPTQ to attain its best accuracy. We assume that additional calibration samples can help reduce the variance of the estimated first- and second-order information.

\paragraph{Ablation on the Number of Output Channel Groups.}
In Equation~\ref{eq:avg_hessian}, the output channels are partitioned into $g$ groups, and a single averaged Hessian is shared among the channels within each group. The number of groups $g$ controls the trade-off between Hessian estimation accuracy and quantization efficiency. As shown in Table~\ref{tab:g_ablation}, setting $g=1$ already yields strong performance. Increasing $g$ further improves quantization accuracy by providing more fine-grained Hessian estimates, but this improvement comes at the cost of higher memory usage and computational overhead. In our experiments, we set $g=4$ following GuidedQuant~\citep{kim2025guidedquant}.

\paragraph{Ablation on Exact Gradient Compensation.}
As discussed in Section~\ref{sec:method} and theoretically analyzed in Appendix~\ref{apdx:grad_aprox}, prior first-order methods, such as FOEM~\citep{zheng2026first}, rely on a first-order Taylor expansion to approximate the loss gradients. While this approximation is exact and efficient for the layer-wise MSE objective, it introduces substantial errors when applied to more expressive objectives, such as the block-wise MSE or KL divergence employed in \name. To empirically validate the necessity of computing the exact gradient, we compare our exact gradient compensation against the gradient approximation approach used in FOEM. Table~\ref{tab:ablation_exact_grad} summarizes the results for the Qwen3-0.6B model under 3-bit weight-only quantization. Replacing the exact gradient with the Taylor approximation causes a noticeable degradation in both distribution alignment and downstream task performance, resulting in a 2.09\% decrease in QA accuracy. These findings confirm that utilizing the exact gradient is crucial for providing accurate first-order guidance when optimizing under block-wise objectives.

\subsection{More Discussions}
\label{apdx:discussions}
\input{tables/appendix/alpha_robust}
\input{tables/appendix/autotune_cost}
\input{tables/appendix/hessian_clip}

\paragraph{Calibration Resources for Quantizing a 100B-parameter Model.}
To demonstrate the scalability of \name, we evaluate the calibration resources required to quantize a large-scale model, specifically the 125B-parameter Qwen3.8-Flash-Next. As detailed in Section~\ref{apdx:qsettings_baseline}, we utilize 8,192 calibration samples for this MoE model, and the quantization process is distributed across 8 accelerators. Under this configuration, GPTQ requires 19.48~GB of memory per device and completes in 7.98 hours. In comparison, \name$^*$ consumes 86.72~GB of memory per device and requires 20.30 hours. Although \name$^*$ incurs a higher resource overhead due to the block-wise backpropagation, its memory footprint and execution time remain highly manageable on a standard multi-accelerator node, confirming its practicality for quantizing large-scale models.

\paragraph{You Only Autotune Once.}
As detailed in Appendix~\ref{apdx:trust-region-autotune}, the trust-region threshold $\alpha$ defined in Theorem~\ref{thm:trust_region_beta} is optimized via a grid search during the \name\ quantization process. Here, we evaluate the generalizability of this threshold across varying bit-widths and model variants. Specifically, in Table~\ref{tab:alpha_robust}, we directly apply the $\alpha$ value---originally optimized for 3-bit weight quantization on the Qwen3-0.6B model---to alternative bit-widths (2- and 4-bit) and a distinct model variant (Qwen3-0.6B-Base) without any re-tuning. The results demonstrate that \name\ maintains comparable performance without setting-specific autotuning, indicating that the trust-region threshold can be optimized once and applied universally across diverse quantization settings and model variants. Additionally, we visualize the optimal trust-region thresholds for different settings in Figure~\ref{fig:alpha_visualize}. The average absolute difference across layers remains within 0.4, demonstrating that the optimal threshold is highly stable across configurations, which further justifies the feasibility of transferring a single optimized threshold. As shown in Table~\ref{tab:autotune_cost}, this once-for-all approach yields a 1.20--1.33$\times$ speedup for \name\ quantization on an 8B model, substantially reducing the computational overhead required to adapt \name\ to new configurations.

\paragraph{Hessian-based Weight Clipping.}
Recent studies~\citep{yu2026h} leverage the diagonal of the Hessian matrix to guide the search for optimal weight clipping factors. Specifically, they employ a grid search over the weight clipping factors to minimize $\sum_{i=1}^{d_{row}}\sum_{j=1}^{d_{col}}\mathbf H_{j,j}\Delta\mathbf W_{i,j}^2$. Assuming the Hessian is an identity matrix reduces this approach to the commonly used weight MSE-based clipping strategy~\citep{frantaroptq}. In this section, we investigate the effectiveness of this Hessian-based weight clipping (H-Scale) within the \name\ framework across different model families. Table~\ref{tab:ablation_hscale} presents the 3-bit weight quantization results for the Qwen3 and Qwen3.5 model families. Interestingly, we observe a clear dichotomy in the effectiveness of Hessian-based clipping. For the Qwen3 models (0.6B, 1.7B, and 4B), applying H-Scale consistently and significantly improves both distribution alignment and downstream task accuracy. For instance, on the Qwen3-4B model, H-Scale increases the average QA accuracy by 2.12\%. Conversely, for the Qwen3.5 models (0.8B, 2B, and 4B), this clipping strategy provides no tangible benefits and even leads to marginal performance degradation. This suggests that Hessian-based weight clipping is highly model-dependent, potentially influenced by the distinct weight and activation distributions inherently learned by different model architectures. Consequently, we leave the exploration of optimal clipping method selection for future work. In this work, we continue to employ the weight MSE-based clipping strategy, as detailed in Section~\ref{sec:exp_settings}.

\subsection{Complete Results on Weight-Only Quantization}
\label{apdx:weight_only}
We provide comprehensive weight-only quantization results for 2-, 3-, and 4-bit settings in Table~\ref{tab:w2}, Table~\ref{tab:w3}, and Table~\ref{tab:w4}, respectively. Our evaluation encompasses 13 dense models featuring diverse architectures and sizes ranging from 0.6B to 70B parameters. Across these configurations, \name\ demonstrates exceptional generalizability to various model architectures and bit-widths. Specifically, \name\ variants achieve the lowest average KL divergence in 38 out of 39 evaluated quantization settings, indicating superior distributional alignment with the full-precision models. Consequently, \name\ yields better overall downstream task accuracy. For instance, under 2-bit quantization, \name\ variants attain the highest QA accuracy on 12 out of the 13 evaluated models.

\subsection{Complete Results on Weight-Activation Quantization}
\label{apdx:weight_activation}
In Table~\ref{tab:w4a4}, we present the 4-bit weight-activation quantization results for seven medium-sized dense models across three different model families. The \name\ variants consistently achieve the lowest KL divergence on six out of the seven models and attain the highest QA accuracy on every evaluated model. This further demonstrates the adaptability of \name\ to various quantization settings.

\subsection{Complete Results on MoE Quantization}
\label{apdx:moe}
We present quantization results for two MoE models, Qwen3-30B-A3B and Qwen3.8-Flash-Next, with parameter sizes ranging from 30B to 125B, in Table~\ref{tab:qwen3_30b_a3b} and Table~\ref{tab:qwen38-flash-next-w4a16}. The \name\ variants consistently achieve better distributional alignment and downstream task accuracy than the GPTQ baseline, demonstrating the ability of \name\ to scale effectively to large-scale MoE models. Furthermore, we evaluate the quantized Qwen3.8-Flash-Next models across four reasoning benchmarks. Notably, \name$^*$ better preserves reasoning capabilities under 4-bit quantization, incurring only a 0.19\% decrease in average accuracy and thereby effectively achieving lossless quantization.

\input{tables/appendix/w2}
\input{tables/appendix/w3}
\input{tables/appendix/w4}
\input{tables/appendix/w4a4}
\input{tables/appendix/moe}

%% file: tables/appendix/model.tex
\begin{table*}[!t]
\begin{center}
\resizebox{0.6\linewidth}{!}
{
\begin{tabular}{lcc}
\toprule
\textbf{Model Family} & \textbf{Attn Arch} & \textbf{MLP Arch} \\ 
\midrule
LLaMA3~\citep{grattafiori2024llama} & GQA & Dense \\
Qwen3~\citep{yang2025qwen3} & GQA & Dense \\
Qwen3.5~\citep{qwen35blog} & GQA / GDN & Dense \\
Qwen3-MoE~\citep{yang2025qwen3} & GQA & MoE \\
Qwen3.8-Flash-Next~\citep{qiu2026design} & QSA / GDN & MoE \\
\bottomrule
\end{tabular}
}
\end{center}
\vspace{-1ex}
\caption{Comparison of Attention and MLP architectures across different model families.}
\label{tab:model_arch}
\end{table*}

%% file: tables/appendix/more_ablations.tex
\begin{figure*}[!t]
\centering

\newlength{\samplepanelheight} 
\setlength{\samplepanelheight}{0.18\textheight}

\begin{subfigure}[t]{0.48\textwidth}
    \centering
    \begin{minipage}[c][\samplepanelheight][c]{\linewidth}
        \centering
        \includegraphics[width=\linewidth]{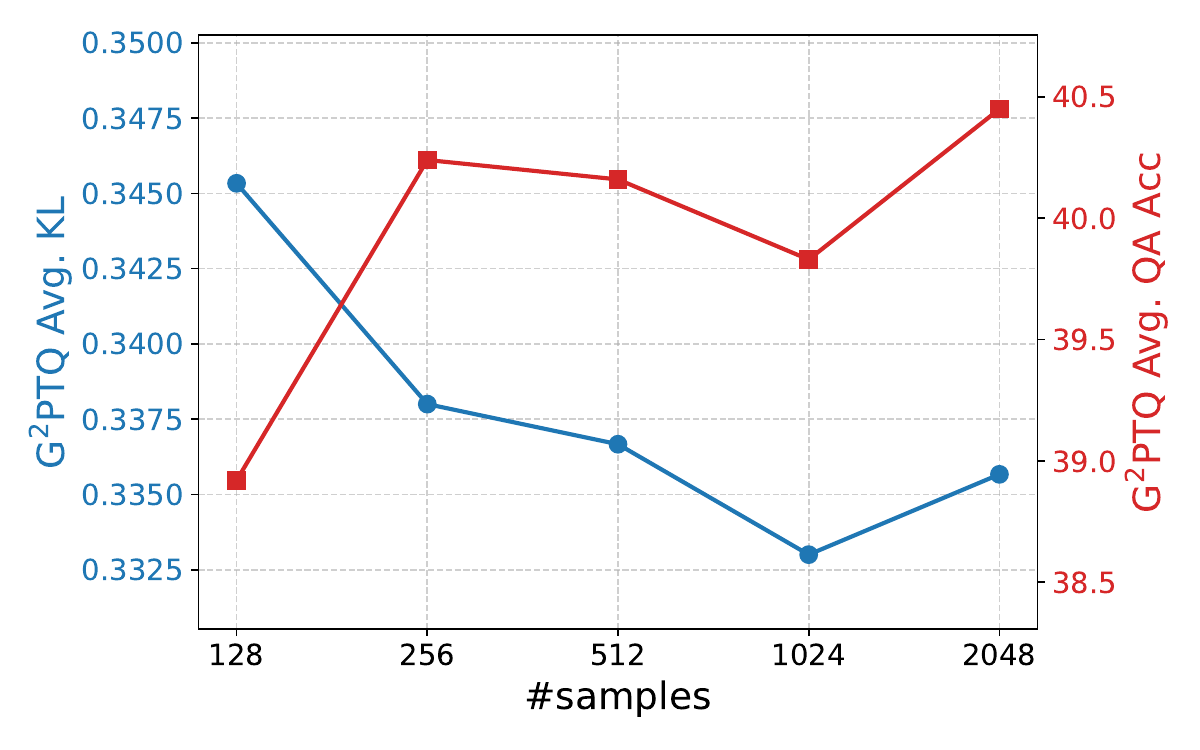}
    \end{minipage}
    \vspace{-0.5ex}
    \label{fig:sample_ablation_g2ptq}
\end{subfigure}
\hfill
\begin{subfigure}[t]{0.48\textwidth}
    \centering
    \begin{minipage}[c][\samplepanelheight][c]{\linewidth}
        \centering
        \includegraphics[width=\linewidth]{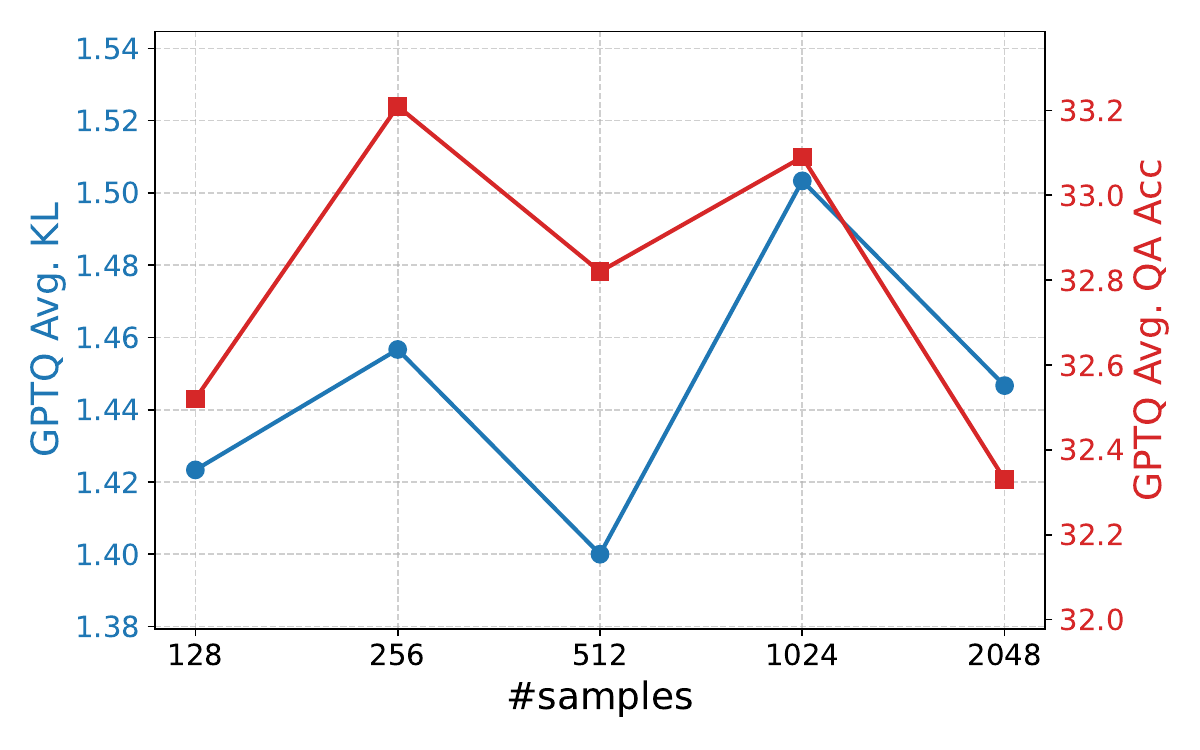}
    \end{minipage}
    \vspace{-0.5ex}
    \label{fig:sample_ablation_gptq}
\end{subfigure}

\vspace{-1.5ex}
\caption{Ablation study on the calibration set size for \name\ (left) and GPTQ (right). Experiments are conducted using 3-bit weight quantization on the Qwen3-0.6B model.}
\label{fig:calib_size_ablation}
\end{figure*}

\begin{table*}[!t]
\begin{center}
\resizebox{0.6\linewidth}{!}
{
\begin{tabular}{c|ccc|cc}
\toprule
\textbf{\ensuremath{g}} & \textbf{KL} & \textbf{PPL} & \textbf{QA} & \textbf{Mem Cost} & \textbf{Time Cost} \\ 
\midrule
1  & 3.43e-01 & 16.66 & 40.77 & 1.00x & 1.00x \\
2  & 3.39e-01 & 16.33 & 40.11 & 1.00x & 1.03x \\
4  & 3.33e-01 & 16.12 & 39.83 & 1.01x & 1.20x \\
8  & 3.31e-01 & 16.25 & 39.81 & 1.02x & 1.47x \\
16 & 3.23e-01 & 15.77 & 40.53 & 1.04x & 2.00x \\
\bottomrule
\end{tabular}
}
\end{center}
\vspace{-1ex}
\caption{Ablation study on the number of output channel groups $g$. We report the average accuracy metrics for 3-bit quantization of the Qwen3-0.6B model, alongside the relative calibration cost.}
\label{tab:g_ablation}
\end{table*}

\begin{table*}[!t]
\begin{center}
\resizebox{0.8\linewidth}{!}
{
\begin{tabular}{c|cc|cc|cc|c}
\toprule
\multirow{2}{*}{\textbf{Method}} & \multicolumn{2}{c|}{\textbf{WikiText2}} & \multicolumn{2}{c|}{\textbf{UltraChat}} & \multicolumn{2}{c|}{\textbf{NuminaMath}} & \multirow{2}{*}{\textbf{QA Avg.}} \\ 
\cmidrule(lr){2-3} \cmidrule(lr){4-5} \cmidrule(lr){6-7}
& \textbf{KL} & \textbf{PPL} & \textbf{KL} & \textbf{PPL} & \textbf{KL} & \textbf{PPL} & \\ 
\midrule
w/ Approx Grad & 7.10e-01 & 35.29 & 5.69e-01 & 12.08 & 3.32e-01 & 5.83 & 37.74 \\
\rowcolor{g2ptqblue!12} w/ Exact Grad & \textbf{4.97e-01} & \textbf{32.34} & \textbf{3.39e-01} & \textbf{10.52} & \textbf{1.63e-01} & \textbf{5.49} & \textbf{39.83} \\
\bottomrule
\end{tabular}
}
\end{center}
\vspace{-1ex}
\caption{Ablation study on exact versus approximated gradient compensation. Experiments are conducted using 3-bit weight quantization on the Qwen3-0.6B model.}
\label{tab:ablation_exact_grad}
\end{table*}

%% file: tables/appendix/alpha_robust.tex
\begin{table*}[!t]
\begin{center}
\resizebox{0.95\linewidth}{!}
{
\begin{tabular}{c|c|cc|cc|cc|c}
\toprule
\multirow{2}{*}{\textbf{Model}} & \multirow{2}{*}{\textbf{Autotune}} & \multicolumn{2}{c|}{\textbf{WikiText2}} & \multicolumn{2}{c|}{\textbf{UltraChat}} & \multicolumn{2}{c|}{\textbf{NuminaMath}} & \multirow{2}{*}{\textbf{QA Avg.}} \\ 
\cmidrule(lr){3-4} \cmidrule(lr){5-6} \cmidrule(lr){7-8}
& & \textbf{KL} & \textbf{PPL} & \textbf{KL} & \textbf{PPL} & \textbf{KL} & \textbf{PPL} & \\ 
\midrule

\multirow{2}{*}{\begin{tabular}{@{}c@{}}\textbf{Qwen3-0.6B} \\ \textbf{(W2A16)}\end{tabular}}  
& \checkmark & 1.85e+00 & 125.39 & 1.35e+00 & 26.60 & 5.86e-01 & 7.76 & 32.93 \\
& $\times$ & 1.91e+00 & 131.15 & 1.40e+00 & 28.14 & 6.02e-01 & 8.11 & 32.56 \\
\midrule

\multirow{2}{*}{\begin{tabular}{@{}c@{}}\textbf{Qwen3-0.6B} \\ \textbf{(W4A16)}\end{tabular}}  
& \checkmark & 1.60e-01 & 23.98 & 1.01e-01 & 8.56 & 5.35e-02 & 5.03 & 45.80 \\
& $\times$ & 1.59e-01 & 23.92 & 1.01e-01 & 8.56 & 5.33e-02 & 5.04 & 46.25 \\
\midrule

\multirow{2}{*}{\begin{tabular}{@{}c@{}}\textbf{Qwen3-0.6B-Base} \\ \textbf{(W3A16)}\end{tabular}}  
& \checkmark & 4.79e-01 & 20.14 & 3.42e-01 & 7.45 & 1.70e-01 & 3.75 & 44.99 \\
& $\times$ & 4.75e-01 & 20.10 & 3.39e-01 & 7.43 & 1.72e-01 & 3.75 & 45.36 \\
\bottomrule
\end{tabular}
}
\end{center}
\vspace{-1ex}
\caption{Comparison of setting-specific autotuning (\checkmark) versus applying a ``you only autotune once'' threshold ($\times$) transferred from the 3-bit Qwen3-0.6B baseline.}
\label{tab:alpha_robust}
\end{table*}

%% file: tables/appendix/autotune_cost.tex
\begin{figure*}[t]
    \centering
    
    \begin{minipage}[b]{0.49\textwidth}
        \centering
        \includegraphics[width=\linewidth]{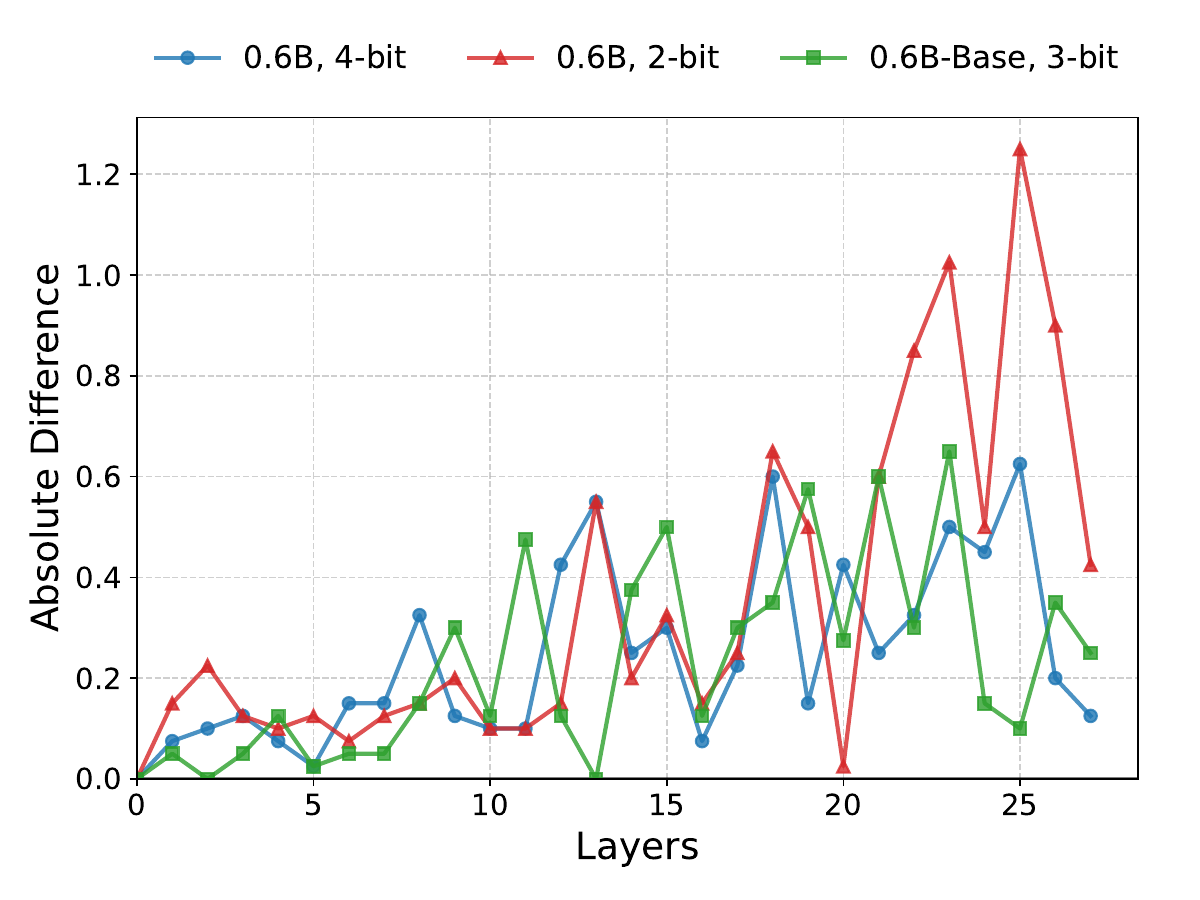}
        \captionsetup{type=figure, skip=2pt}
        \vspace{-2.5ex}
        \caption{Comparison of the autotuned trust-region threshold across various quantization settings and model variants, showing absolute differences relative to the 3-bit Qwen3-0.6B baseline.}
        \label{fig:alpha_visualize}
    \end{minipage}
    \hfill
    \begin{minipage}[b]{0.47\textwidth}
        \centering
        \resizebox{\linewidth}{!}{
            \begin{tabular}{c|c|cc}
                \toprule
                \textbf{Model Size} & \textbf{Method} & \textbf{Time (H)} & \textbf{Speedup} \\ 
                \midrule
                \multirow{2}{*}{0.6B} 
                & \name & 0.15 & 1.20x \\
                & \name\textsuperscript{*} & 0.26 & 1.27x \\
                \midrule
                \multirow{2}{*}{8B} 
                & \name & 1.24 & 1.20x \\
                & \name\textsuperscript{*} & 1.69 & 1.33x \\
                \bottomrule
            \end{tabular}
        }
        \vspace{6.5ex}
        \captionsetup{type=table}
        \caption{Calibration speedup achieved by applying the ``you only autotune once'' threshold transfer on Qwen3-0.6B and 8B models.}
        \label{tab:autotune_cost}
    \end{minipage}
    
\end{figure*}

%% file: tables/appendix/hessian_clip.tex
\begin{table*}[!t]
\begin{center}
\resizebox{0.95\linewidth}{!}
{
\begin{tabular}{c|c|cc|cc|cc|c}
\toprule
\multirow{2}{*}{\textbf{Model}} & \multirow{2}{*}{\textbf{H-Scale}} & \multicolumn{2}{c|}{\textbf{WikiText2}} & \multicolumn{2}{c|}{\textbf{UltraChat}} & \multicolumn{2}{c|}{\textbf{NuminaMath}} & \multirow{2}{*}{\textbf{QA Avg.}} \\ 
\cmidrule(lr){3-4} \cmidrule(lr){5-6} \cmidrule(lr){7-8}
& & \textbf{KL} & \textbf{PPL} & \textbf{KL} & \textbf{PPL} & \textbf{KL} & \textbf{PPL} & \\ 
\midrule

\multirow{2}{*}{\textbf{Qwen3-0.6B}}  
& $\times$ & 4.97e-01 & 32.34 & 3.39e-01 & 10.52 & 1.63e-01 & 5.49 & 39.83 \\
& \checkmark & \textbf{4.49e-01} & \textbf{32.15} & \textbf{2.91e-01} & \textbf{10.35} & \textbf{1.31e-01} & \textbf{5.48} & \textbf{41.95} \\
\midrule

\multirow{2}{*}{\textbf{Qwen3-1.7B}}  
& $\times$ & 4.73e-01 & 21.66 & 3.52e-01 & 11.76 & 1.51e-01 & \textbf{5.69} & 51.33 \\
& \checkmark & \textbf{3.89e-01} & \textbf{20.28} & \textbf{2.94e-01} & \textbf{11.50} & \textbf{1.09e-01} & 5.72 & \textbf{53.14} \\
\midrule

\multirow{2}{*}{\textbf{Qwen3-4B}}  
& $\times$ & 4.10e-01 & 18.38 & 2.43e-01 & \textbf{7.88} & 1.07e-01 & \textbf{4.84} & 60.30 \\
& \checkmark & \textbf{3.24e-01} & \textbf{16.92} & \textbf{2.10e-01} & 8.22 & \textbf{8.83e-02} & 4.92 & \textbf{62.42} \\
\midrule

\multirow{2}{*}{\textbf{Qwen3.5-0.8B}}  
& $\times$ & \textbf{3.16e-01} & 23.36 & \textbf{2.38e-01} & \textbf{6.90} & \textbf{1.09e-01} & \textbf{3.69} & \textbf{44.47} \\
& \checkmark & 3.19e-01 & \textbf{23.33} & 2.45e-01 & 6.99 & \textbf{1.09e-01} & 3.73 & 44.13 \\
\midrule

\multirow{2}{*}{\textbf{Qwen3.5-2B}}  
& $\times$ & \textbf{2.68e-01} & \textbf{16.03} & \textbf{1.75e-01} & \textbf{5.36} & \textbf{8.68e-02} & \textbf{3.20} & \textbf{53.53} \\
& \checkmark & 2.82e-01 & 16.35 & 1.82e-01 & 5.40 & 8.79e-02 & 3.22 & 52.41 \\
\midrule

\multirow{2}{*}{\textbf{Qwen3.5-4B}}  
& $\times$ & \textbf{2.38e-01} & 11.56 & \textbf{1.56e-01} & 4.74 & 7.45e-02 & \textbf{2.85} & \textbf{64.65} \\
& \checkmark & 2.45e-01 & \textbf{11.54} & 1.60e-01 & \textbf{4.72} & \textbf{7.39e-02} & 2.86 & 64.58 \\
\bottomrule
\end{tabular}
}
\end{center}
\vspace{-1ex}
\caption{Effect of Hessian-based weight clipping (H-Scale) across different model families. Experiments are conducted under 3-bit weight quantization.}
\label{tab:ablation_hscale}
\end{table*}

%% file: tables/appendix/w2.tex
\begin{table*}[!t]
\centering
\begin{adjustbox}{
  max width=\linewidth,
  max totalheight=0.95\textheight,
  keepaspectratio
}
\begin{tabular}{c|l|cc|cc|cc|cccccccc}
\toprule
\multirow{2}{*}{\textbf{Model}} &
\multirow{2}{*}{\textbf{Method}} &
\multicolumn{2}{c|}{\textbf{WikiText2}} &
\multicolumn{2}{c|}{\textbf{UltraChat}} &
\multicolumn{2}{c|}{\textbf{NuminaMath}} &
\multicolumn{8}{c}{\textbf{CommonSense QA}} \\
\cmidrule(lr){3-4}
\cmidrule(lr){5-6}
\cmidrule(lr){7-8}
\cmidrule(lr){9-16}
& & \textbf{KL} & \textbf{PPL}
& \textbf{KL} & \textbf{PPL}
& \textbf{KL} & \textbf{PPL}
& \textbf{ARC-C} & \textbf{ARC-E} & \textbf{C-eval}
& \textbf{HellaS} & \textbf{LAMB} & \textbf{PIQA}
& \textbf{Wino} & \textbf{Avg.} \\
\midrule

\multirow{7}{*}{\begin{tabular}{@{}c@{}}\textbf{Qwen3}\\\textbf{0.6B}\end{tabular}}
& BF16 & 0 & 20.96 & 0 & 8.00 & 0 & 4.88 & 34.13 & 56.06 & 43.76 & 47.29 & 40.02 & 67.36 & 56.35 & 49.28 \\
\cmidrule{2-16}
& RTN & 1.33e+01 & 5.60e+06 & 1.46e+01 & 8.74e+06 & 1.51e+01 & 8.35e+06 & 26.19 & 24.66 & 25.48 & 26.31 & 0.00 & 50.71 & 49.33 & 28.95 \\
& GPTQ & 5.44e+00 & 3.58e+03 & 5.37e+00 & 1.06e+03 & 3.39e+00 & 88.79 & 25.51 & 28.20 & 24.52 & 26.91 & 0.91 & 52.50 & 50.91 & 29.92 \\
& GuidedQ & 3.02e+00 & 312.46 & 2.86e+00 & 102.97 & 1.49e+00 & 15.59 & 23.04 & 30.05 & 23.55 & 27.86 & 1.71 & 54.03 & 50.12 & 30.05 \\
& GPTAQ & 3.88e+00 & 808.32 & 3.76e+00 & 249.25 & 2.35e+00 & 37.32 & 22.44 & 28.03 & 21.17 & 26.86 & 0.62 & 51.20 & 49.96 & 28.61 \\
& \cellcolor{g2ptqblue!12}\name
& \cellcolor{g2ptqblue!12}2.12e+00
& \cellcolor{g2ptqblue!12}147.21
& \cellcolor{g2ptqblue!12}1.74e+00
& \cellcolor{g2ptqblue!12}38.22
& \cellcolor{g2ptqblue!12}7.99e-01
& \cellcolor{g2ptqblue!12}9.53
& \cellcolor{g2ptqblue!12}22.78
& \cellcolor{g2ptqblue!12}33.63
& \cellcolor{g2ptqblue!12}23.25
& \cellcolor{g2ptqblue!12}30.18
& \cellcolor{g2ptqblue!12}4.70
& \cellcolor{g2ptqblue!12}57.13
& \cellcolor{g2ptqblue!12}50.36
& \cellcolor{g2ptqblue!12}31.72 \\
& \cellcolor{g2ptqblue!12}\name$^{*}$
& \cellcolor{g2ptqblue!12}\textbf{1.85e+00}
& \cellcolor{g2ptqblue!12}\textbf{125.39}
& \cellcolor{g2ptqblue!12}\textbf{1.35e+00}
& \cellcolor{g2ptqblue!12}\textbf{26.60}
& \cellcolor{g2ptqblue!12}\textbf{5.86e-01}
& \cellcolor{g2ptqblue!12}\textbf{7.76}
& \cellcolor{g2ptqblue!12}24.91
& \cellcolor{g2ptqblue!12}36.57
& \cellcolor{g2ptqblue!12}23.11
& \cellcolor{g2ptqblue!12}30.52
& \cellcolor{g2ptqblue!12}9.24
& \cellcolor{g2ptqblue!12}56.69
& \cellcolor{g2ptqblue!12}49.49
& \cellcolor{g2ptqblue!12}\textbf{32.93} \\
\midrule

\multirow{7}{*}{\begin{tabular}{@{}c@{}}\textbf{Qwen3}\\\textbf{1.7B}\end{tabular}}
& BF16 & 0 & 16.72 & 0 & 9.90 & 0 & 5.72 & 44.03 & 70.16 & 58.69 & 60.33 & 50.40 & 72.80 & 61.33 & 59.68 \\
\cmidrule{2-16}
& RTN & 1.60e+01 & 3.97e+07 & 1.65e+01 & 3.05e+07 & 1.69e+01 & 3.78e+07 & 26.88 & 24.16 & 26.23 & 26.31 & 0.00 & 51.80 & 47.83 & 29.03 \\
& GPTQ & 3.38e+00 & 266.11 & 3.42e+00 & 108.03 & 1.54e+00 & 12.93 & 24.40 & 28.32 & 24.52 & 28.15 & 0.82 & 51.52 & 51.14 & 29.84 \\
& GuidedQ & 3.17e+00 & 212.39 & 2.60e+00 & 54.06 & 1.14e+00 & 10.51 & 23.38 & 31.44 & 22.73 & 30.07 & 2.17 & 53.65 & 48.86 & 30.33 \\
& GPTAQ & 3.75e+00 & 573.88 & 4.44e+00 & 479.80 & 2.27e+00 & 44.90 & 24.15 & 27.53 & 24.44 & 28.22 & 1.44 & 53.10 & 49.49 & 29.77 \\
& \cellcolor{g2ptqblue!12}\name
& \cellcolor{g2ptqblue!12}1.91e+00
& \cellcolor{g2ptqblue!12}74.58
& \cellcolor{g2ptqblue!12}1.75e+00
& \cellcolor{g2ptqblue!12}34.78
& \cellcolor{g2ptqblue!12}7.38e-01
& \cellcolor{g2ptqblue!12}9.90
& \cellcolor{g2ptqblue!12}27.13
& \cellcolor{g2ptqblue!12}41.92
& \cellcolor{g2ptqblue!12}23.18
& \cellcolor{g2ptqblue!12}35.22
& \cellcolor{g2ptqblue!12}11.88
& \cellcolor{g2ptqblue!12}59.41
& \cellcolor{g2ptqblue!12}52.57
& \cellcolor{g2ptqblue!12}35.90 \\
& \cellcolor{g2ptqblue!12}\name$^{*}$
& \cellcolor{g2ptqblue!12}\textbf{1.58e+00}
& \cellcolor{g2ptqblue!12}\textbf{54.02}
& \cellcolor{g2ptqblue!12}\textbf{1.30e+00}
& \cellcolor{g2ptqblue!12}\textbf{21.31}
& \cellcolor{g2ptqblue!12}\textbf{4.97e-01}
& \cellcolor{g2ptqblue!12}\textbf{7.05}
& \cellcolor{g2ptqblue!12}26.45
& \cellcolor{g2ptqblue!12}43.56
& \cellcolor{g2ptqblue!12}22.59
& \cellcolor{g2ptqblue!12}37.91
& \cellcolor{g2ptqblue!12}14.57
& \cellcolor{g2ptqblue!12}59.41
& \cellcolor{g2ptqblue!12}54.70
& \cellcolor{g2ptqblue!12}\textbf{37.03} \\
\midrule

\multirow{7}{*}{\begin{tabular}{@{}c@{}}\textbf{Qwen3}\\\textbf{4B}\end{tabular}}
& BF16 & 0 & 13.66 & 0 & 8.03 & 0 & 5.14 & 54.18 & 78.07 & 70.43 & 68.50 & 59.50 & 74.92 & 65.67 & 67.32 \\
\cmidrule{2-16}
& RTN & 1.57e+01 & 2.28e+07 & 1.60e+01 & 1.66e+07 & 1.62e+01 & 1.66e+07 & 27.30 & 25.55 & 26.45 & 26.39 & 0.00 & 51.90 & 48.15 & 29.39 \\
& GPTQ & 2.17e+00 & 72.91 & 1.66e+00 & 17.12 & 9.10e-01 & 7.25 & 25.94 & 36.66 & 24.81 & 33.36 & 9.30 & 56.91 & 51.54 & 34.07 \\
& GuidedQ & 2.03e+00 & 68.09 & 1.56e+00 & 20.38 & 7.12e-01 & 7.28 & 26.28 & 38.93 & 25.63 & 34.79 & 9.12 & 57.18 & 50.43 & 34.62 \\
& GPTAQ & 1.89e+00 & 64.52 & 1.68e+00 & 24.87 & 7.39e-01 & 8.37 & 23.63 & 39.18 & 25.56 & 34.51 & 16.28 & 58.27 & 53.28 & 35.82 \\
& \cellcolor{g2ptqblue!12}\name
& \cellcolor{g2ptqblue!12}1.47e+00
& \cellcolor{g2ptqblue!12}39.65
& \cellcolor{g2ptqblue!12}1.05e+00
& \cellcolor{g2ptqblue!12}14.05
& \cellcolor{g2ptqblue!12}4.73e-01
& \cellcolor{g2ptqblue!12}6.47
& \cellcolor{g2ptqblue!12}32.00
& \cellcolor{g2ptqblue!12}53.66
& \cellcolor{g2ptqblue!12}30.01
& \cellcolor{g2ptqblue!12}43.63
& \cellcolor{g2ptqblue!12}24.67
& \cellcolor{g2ptqblue!12}64.96
& \cellcolor{g2ptqblue!12}54.70
& \cellcolor{g2ptqblue!12}43.38 \\
& \cellcolor{g2ptqblue!12}\name$^{*}$
& \cellcolor{g2ptqblue!12}\textbf{1.31e+00}
& \cellcolor{g2ptqblue!12}\textbf{35.40}
& \cellcolor{g2ptqblue!12}\textbf{8.86e-01}
& \cellcolor{g2ptqblue!12}\textbf{11.45}
& \cellcolor{g2ptqblue!12}\textbf{3.65e-01}
& \cellcolor{g2ptqblue!12}\textbf{5.37}
& \cellcolor{g2ptqblue!12}33.28
& \cellcolor{g2ptqblue!12}50.84
& \cellcolor{g2ptqblue!12}34.25
& \cellcolor{g2ptqblue!12}45.46
& \cellcolor{g2ptqblue!12}30.60
& \cellcolor{g2ptqblue!12}66.59
& \cellcolor{g2ptqblue!12}58.01
& \cellcolor{g2ptqblue!12}\textbf{45.58} \\
\midrule

\multirow{7}{*}{\begin{tabular}{@{}c@{}}\textbf{Qwen3}\\\textbf{8B}\end{tabular}}
& BF16 & 0 & 9.72 & 0 & 5.20 & 0 & 3.60 & 56.14 & 80.77 & 79.57 & 74.98 & 64.10 & 77.97 & 67.96 & 71.64 \\
\cmidrule{2-16}
& RTN & 1.72e+01 & 1.04e+08 & 1.80e+01 & 1.36e+08 & 1.81e+01 & 1.31e+08 & 25.85 & 25.04 & 25.78 & 26.63 & 0.00 & 51.63 & 49.33 & 29.18 \\
& GPTQ & 1.12e+00 & 21.86 & 7.74e-01 & 7.99 & 3.70e-01 & 4.51 & 32.17 & 55.72 & 37.96 & 50.66 & 39.57 & 67.19 & 56.91 & 48.60 \\
& GuidedQ & 1.14e+00 & 21.96 & 7.77e-01 & 8.07 & 3.60e-01 & 4.43 & 33.45 & 51.98 & 35.36 & 48.80 & 29.61 & 68.44 & 56.59 & 46.32 \\
& GPTAQ & 1.29e+00 & 27.88 & 1.02e+00 & 10.79 & 4.22e-01 & 5.04 & 33.02 & 54.29 & 30.31 & 44.12 & 29.83 & 65.29 & 53.75 & 44.37 \\
& \cellcolor{g2ptqblue!12}\name
& \cellcolor{g2ptqblue!12}9.43e-01
& \cellcolor{g2ptqblue!12}19.74
& \cellcolor{g2ptqblue!12}6.19e-01
& \cellcolor{g2ptqblue!12}7.64
& \cellcolor{g2ptqblue!12}2.65e-01
& \cellcolor{g2ptqblue!12}4.25
& \cellcolor{g2ptqblue!12}27.73
& \cellcolor{g2ptqblue!12}44.15
& \cellcolor{g2ptqblue!12}41.46
& \cellcolor{g2ptqblue!12}54.46
& \cellcolor{g2ptqblue!12}42.75
& \cellcolor{g2ptqblue!12}65.02
& \cellcolor{g2ptqblue!12}61.09
& \cellcolor{g2ptqblue!12}48.09 \\
& \cellcolor{g2ptqblue!12}\name$^{*}$
& \cellcolor{g2ptqblue!12}\textbf{8.28e-01}
& \cellcolor{g2ptqblue!12}\textbf{18.32}
& \cellcolor{g2ptqblue!12}\textbf{5.30e-01}
& \cellcolor{g2ptqblue!12}\textbf{7.02}
& \cellcolor{g2ptqblue!12}\textbf{2.21e-01}
& \cellcolor{g2ptqblue!12}\textbf{4.16}
& \cellcolor{g2ptqblue!12}42.66
& \cellcolor{g2ptqblue!12}68.77
& \cellcolor{g2ptqblue!12}45.99
& \cellcolor{g2ptqblue!12}55.88
& \cellcolor{g2ptqblue!12}45.12
& \cellcolor{g2ptqblue!12}71.55
& \cellcolor{g2ptqblue!12}61.48
& \cellcolor{g2ptqblue!12}\textbf{55.92} \\
\midrule

\multirow{7}{*}{\begin{tabular}{@{}c@{}}\textbf{Qwen3}\\\textbf{14B}\end{tabular}}
& BF16 & 0 & 8.65 & 0 & 4.80 & 0 & 3.36 & 60.49 & 82.74 & 82.32 & 78.80 & 67.90 & 80.03 & 73.01 & 75.04 \\
\cmidrule{2-16}
& RTN & 1.61e+01 & 3.37e+07 & 1.69e+01 & 4.88e+07 & 1.58e+01 & 1.31e+07 & 26.37 & 24.75 & 23.03 & 26.40 & 0.00 & 52.23 & 49.33 & 28.87 \\
& GPTQ & 8.34e-01 & 15.59 & 5.37e-01 & 6.09 & 2.33e-01 & \textbf{3.72} & 43.00 & 66.54 & 52.75 & 59.46 & 48.48 & 73.01 & 63.85 & 58.16 \\
& GuidedQ & 8.64e-01 & 16.02 & 5.45e-01 & 6.40 & 2.49e-01 & 3.90 & 44.80 & 69.28 & 50.67 & 58.63 & 46.40 & 72.09 & 63.22 & 57.87 \\
& GPTAQ & 8.13e-01 & 16.66 & 5.28e-01 & 6.72 & 2.14e-01 & 3.99 & 42.49 & 66.54 & 54.23 & 59.16 & 50.86 & 71.82 & 64.72 & 58.55 \\
& \cellcolor{g2ptqblue!12}\name
& \cellcolor{g2ptqblue!12}7.94e-01
& \cellcolor{g2ptqblue!12}16.21
& \cellcolor{g2ptqblue!12}4.61e-01
& \cellcolor{g2ptqblue!12}6.34
& \cellcolor{g2ptqblue!12}1.93e-01
& \cellcolor{g2ptqblue!12}3.85
& \cellcolor{g2ptqblue!12}45.31
& \cellcolor{g2ptqblue!12}72.90
& \cellcolor{g2ptqblue!12}51.93
& \cellcolor{g2ptqblue!12}63.32
& \cellcolor{g2ptqblue!12}55.77
& \cellcolor{g2ptqblue!12}73.78
& \cellcolor{g2ptqblue!12}65.98
& \cellcolor{g2ptqblue!12}\textbf{61.28} \\
& \cellcolor{g2ptqblue!12}\name$^{*}$
& \cellcolor{g2ptqblue!12}\textbf{7.00e-01}
& \cellcolor{g2ptqblue!12}\textbf{14.84}
& \cellcolor{g2ptqblue!12}\textbf{4.15e-01}
& \cellcolor{g2ptqblue!12}\textbf{5.97}
& \cellcolor{g2ptqblue!12}\textbf{1.66e-01}
& \cellcolor{g2ptqblue!12}3.74
& \cellcolor{g2ptqblue!12}45.48
& \cellcolor{g2ptqblue!12}73.53
& \cellcolor{g2ptqblue!12}46.88
& \cellcolor{g2ptqblue!12}64.08
& \cellcolor{g2ptqblue!12}53.66
& \cellcolor{g2ptqblue!12}74.70
& \cellcolor{g2ptqblue!12}67.09
& \cellcolor{g2ptqblue!12}60.77 \\
\midrule

\multirow{7}{*}{\begin{tabular}{@{}c@{}}\textbf{Qwen3}\\\textbf{32B}\end{tabular}}
& BF16 & 0 & 7.61 & 0 & 4.05 & 0 & 2.80 & 61.01 & 83.21 & 86.03 & 82.60 & 67.13 & 82.05 & 73.24 & 76.47 \\
\cmidrule{2-16}
& RTN & 1.40e+01 & 5.72e+06 & 1.41e+01 & 3.69e+06 & 1.41e+01 & 2.65e+06 & 26.79 & 25.34 & 25.56 & 26.28 & 0.00 & 51.14 & 50.04 & 29.31 \\
& GPTQ & 8.56e-01 & 13.65 & 4.25e-01 & 5.01 & 2.64e-01 & 3.24 & 47.27 & 68.98 & 54.23 & 66.28 & 56.24 & 73.99 & 62.04 & 61.29 \\
& GuidedQ & 9.06e-01 & 14.15 & 4.54e-01 & 5.23 & 2.58e-01 & 3.27 & 41.21 & 64.06 & 47.10 & 64.32 & 51.64 & 72.58 & 59.83 & 57.25 \\
& GPTAQ & 9.18e-01 & 14.78 & 4.65e-01 & 5.31 & 2.88e-01 & 3.44 & 43.26 & 61.15 & 55.87 & 66.09 & 60.70 & 73.78 & 62.35 & 60.46 \\
& \cellcolor{g2ptqblue!12}\name
& \cellcolor{g2ptqblue!12}9.84e-01
& \cellcolor{g2ptqblue!12}15.78
& \cellcolor{g2ptqblue!12}3.82e-01
& \cellcolor{g2ptqblue!12}5.03
& \cellcolor{g2ptqblue!12}2.56e-01
& \cellcolor{g2ptqblue!12}3.27
& \cellcolor{g2ptqblue!12}48.63
& \cellcolor{g2ptqblue!12}73.06
& \cellcolor{g2ptqblue!12}60.92
& \cellcolor{g2ptqblue!12}69.50
& \cellcolor{g2ptqblue!12}58.33
& \cellcolor{g2ptqblue!12}76.77
& \cellcolor{g2ptqblue!12}64.88
& \cellcolor{g2ptqblue!12}64.58 \\
& \cellcolor{g2ptqblue!12}\name$^{*}$
& \cellcolor{g2ptqblue!12}\textbf{7.72e-01}
& \cellcolor{g2ptqblue!12}\textbf{12.99}
& \cellcolor{g2ptqblue!12}\textbf{3.56e-01}
& \cellcolor{g2ptqblue!12}\textbf{4.92}
& \cellcolor{g2ptqblue!12}\textbf{2.01e-01}
& \cellcolor{g2ptqblue!12}\textbf{3.15}
& \cellcolor{g2ptqblue!12}50.17
& \cellcolor{g2ptqblue!12}74.83
& \cellcolor{g2ptqblue!12}57.73
& \cellcolor{g2ptqblue!12}70.01
& \cellcolor{g2ptqblue!12}58.20
& \cellcolor{g2ptqblue!12}75.63
& \cellcolor{g2ptqblue!12}68.11
& \cellcolor{g2ptqblue!12}\textbf{64.95} \\
\midrule

\multirow{7}{*}{\begin{tabular}{@{}c@{}}\textbf{Qwen3.5}\\\textbf{0.8B}\end{tabular}}
& BF16 & 0 & 17.18 & 0 & 5.54 & 0 & 3.40 & 37.63 & 61.49 & 48.89 & 49.54 & 43.59 & 69.31 & 57.62 & 52.58 \\
\cmidrule{2-16}
& RTN & 9.48e+00 & 1.88e+05 & 1.02e+01 & 1.35e+05 & 1.13e+01 & 2.17e+05 & 27.56 & 25.55 & 25.85 & 25.69 & 0.00 & 49.56 & 48.15 & 28.91 \\
& GPTQ & 2.24e+00 & 173.94 & 2.01e+00 & 39.33 & 8.43e-01 & 6.88 & 22.78 & 32.62 & 23.18 & 29.06 & 4.95 & 54.52 & 49.64 & 30.96 \\
& GuidedQ & 2.48e+00 & 221.11 & 2.15e+00 & 45.77 & 9.95e-01 & 8.12 & 22.10 & 33.42 & 23.70 & 28.72 & 4.99 & 54.46 & 48.22 & 30.80 \\
& GPTAQ & 2.16e+00 & 168.39 & 1.80e+00 & 32.46 & 7.64e-01 & 6.75 & 21.93 & 32.37 & 24.74 & 29.13 & 5.14 & 53.43 & 50.83 & 31.08 \\
& \cellcolor{g2ptqblue!12}\name
& \cellcolor{g2ptqblue!12}1.87e+00
& \cellcolor{g2ptqblue!12}113.10
& \cellcolor{g2ptqblue!12}1.52e+00
& \cellcolor{g2ptqblue!12}24.81
& \cellcolor{g2ptqblue!12}7.07e-01
& \cellcolor{g2ptqblue!12}6.61
& \cellcolor{g2ptqblue!12}22.95
& \cellcolor{g2ptqblue!12}38.01
& \cellcolor{g2ptqblue!12}24.00
& \cellcolor{g2ptqblue!12}29.90
& \cellcolor{g2ptqblue!12}6.91
& \cellcolor{g2ptqblue!12}56.53
& \cellcolor{g2ptqblue!12}52.72
& \cellcolor{g2ptqblue!12}33.00 \\
& \cellcolor{g2ptqblue!12}\name$^{*}$
& \cellcolor{g2ptqblue!12}\textbf{1.52e+00}
& \cellcolor{g2ptqblue!12}\textbf{79.63}
& \cellcolor{g2ptqblue!12}\textbf{1.16e+00}
& \cellcolor{g2ptqblue!12}\textbf{17.11}
& \cellcolor{g2ptqblue!12}\textbf{4.79e-01}
& \cellcolor{g2ptqblue!12}\textbf{5.27}
& \cellcolor{g2ptqblue!12}23.12
& \cellcolor{g2ptqblue!12}39.10
& \cellcolor{g2ptqblue!12}23.40
& \cellcolor{g2ptqblue!12}31.34
& \cellcolor{g2ptqblue!12}10.75
& \cellcolor{g2ptqblue!12}56.86
& \cellcolor{g2ptqblue!12}51.30
& \cellcolor{g2ptqblue!12}\textbf{33.70} \\
\midrule

\multirow{7}{*}{\begin{tabular}{@{}c@{}}\textbf{Qwen3.5}\\\textbf{2B}\end{tabular}}
& BF16 & 0 & 12.10 & 0 & 4.66 & 0 & 2.99 & 41.47 & 65.95 & 59.06 & 62.11 & 53.31 & 72.69 & 62.67 & 59.61 \\
\cmidrule{2-16}
& RTN & 8.28e+00 & 4.28e+04 & 9.59e+00 & 5.99e+04 & 1.01e+01 & 6.08e+04 & 25.51 & 26.30 & 24.07 & 25.63 & 0.00 & 50.82 & 49.41 & 28.82 \\
& GPTQ & 1.99e+00 & 94.30 & 1.59e+00 & 20.14 & 6.73e-01 & 5.42 & 22.35 & 32.87 & 24.44 & 32.09 & 9.22 & 52.77 & 51.54 & 32.18 \\
& GuidedQ & 2.26e+00 & 121.94 & 1.79e+00 & 25.14 & 8.05e-01 & 6.15 & 22.95 & 36.70 & 23.92 & 30.50 & 7.63 & 56.31 & 50.83 & 32.69 \\
& GPTAQ & 1.93e+00 & 90.42 & 1.49e+00 & 18.98 & 5.80e-01 & 5.06 & 23.46 & 31.40 & 23.48 & 31.81 & 6.71 & 54.52 & 49.09 & 31.50 \\
& \cellcolor{g2ptqblue!12}\name
& \cellcolor{g2ptqblue!12}1.55e+00
& \cellcolor{g2ptqblue!12}57.50
& \cellcolor{g2ptqblue!12}1.11e+00
& \cellcolor{g2ptqblue!12}12.95
& \cellcolor{g2ptqblue!12}4.98e-01
& \cellcolor{g2ptqblue!12}4.74
& \cellcolor{g2ptqblue!12}25.26
& \cellcolor{g2ptqblue!12}39.73
& \cellcolor{g2ptqblue!12}23.25
& \cellcolor{g2ptqblue!12}34.75
& \cellcolor{g2ptqblue!12}16.81
& \cellcolor{g2ptqblue!12}58.81
& \cellcolor{g2ptqblue!12}53.99
& \cellcolor{g2ptqblue!12}36.09 \\
& \cellcolor{g2ptqblue!12}\name$^{*}$
& \cellcolor{g2ptqblue!12}\textbf{1.29e+00}
& \cellcolor{g2ptqblue!12}\textbf{45.43}
& \cellcolor{g2ptqblue!12}\textbf{8.79e-01}
& \cellcolor{g2ptqblue!12}\textbf{10.39}
& \cellcolor{g2ptqblue!12}\textbf{3.58e-01}
& \cellcolor{g2ptqblue!12}\textbf{4.15}
& \cellcolor{g2ptqblue!12}27.90
& \cellcolor{g2ptqblue!12}41.96
& \cellcolor{g2ptqblue!12}23.18
& \cellcolor{g2ptqblue!12}37.94
& \cellcolor{g2ptqblue!12}18.16
& \cellcolor{g2ptqblue!12}59.63
& \cellcolor{g2ptqblue!12}55.49
& \cellcolor{g2ptqblue!12}\textbf{37.75} \\
\midrule

\multirow{7}{*}{\begin{tabular}{@{}c@{}}\textbf{Qwen3.5}\\\textbf{4B}\end{tabular}}
& BF16 & 0 & 9.58 & 0 & 4.33 & 0 & 2.72 & 54.27 & 75.46 & 72.51 & 73.11 & 64.74 & 77.97 & 69.93 & 69.71 \\
\cmidrule{2-16}
& RTN & 1.35e+01 & 4.91e+06 & 1.46e+01 & 6.70e+06 & 1.50e+01 & 7.07e+06 & 27.39 & 24.49 & 25.41 & 26.60 & 0.00 & 50.92 & 49.01 & 29.12 \\
& GPTQ & 1.47e+00 & 37.95 & 1.12e+00 & 9.67 & 4.43e-01 & 3.88 & 31.23 & 52.61 & 23.25 & 39.82 & 23.68 & 63.22 & 53.35 & 41.02 \\
& GuidedQ & 1.90e+00 & 59.52 & 1.39e+00 & 13.23 & 6.15e-01 & 4.66 & 27.99 & 46.13 & 23.18 & 35.24 & 14.98 & 60.34 & 52.72 & 37.23 \\
& GPTAQ & 1.61e+00 & 45.39 & 1.11e+00 & 9.91 & 4.06e-01 & 3.88 & 30.72 & 45.92 & 24.37 & 40.01 & 17.08 & 61.97 & 54.46 & 39.22 \\
& \cellcolor{g2ptqblue!12}\name
& \cellcolor{g2ptqblue!12}1.15e+00
& \cellcolor{g2ptqblue!12}27.49
& \cellcolor{g2ptqblue!12}8.14e-01
& \cellcolor{g2ptqblue!12}7.98
& \cellcolor{g2ptqblue!12}3.45e-01
& \cellcolor{g2ptqblue!12}3.63
& \cellcolor{g2ptqblue!12}34.56
& \cellcolor{g2ptqblue!12}57.28
& \cellcolor{g2ptqblue!12}26.89
& \cellcolor{g2ptqblue!12}46.78
& \cellcolor{g2ptqblue!12}35.44
& \cellcolor{g2ptqblue!12}65.67
& \cellcolor{g2ptqblue!12}55.96
& \cellcolor{g2ptqblue!12}46.08 \\
& \cellcolor{g2ptqblue!12}\name$^{*}$
& \cellcolor{g2ptqblue!12}\textbf{1.03e+00}
& \cellcolor{g2ptqblue!12}\textbf{24.70}
& \cellcolor{g2ptqblue!12}\textbf{7.06e-01}
& \cellcolor{g2ptqblue!12}\textbf{7.63}
& \cellcolor{g2ptqblue!12}\textbf{2.74e-01}
& \cellcolor{g2ptqblue!12}\textbf{3.43}
& \cellcolor{g2ptqblue!12}36.52
& \cellcolor{g2ptqblue!12}60.90
& \cellcolor{g2ptqblue!12}33.14
& \cellcolor{g2ptqblue!12}47.91
& \cellcolor{g2ptqblue!12}35.09
& \cellcolor{g2ptqblue!12}66.87
& \cellcolor{g2ptqblue!12}58.01
& \cellcolor{g2ptqblue!12}\textbf{48.35} \\
\midrule

\multirow{7}{*}{\begin{tabular}{@{}c@{}}\textbf{Qwen3.5}\\\textbf{9B}\end{tabular}}
& BF16 & 0 & 8.65 & 0 & 3.78 & 0 & 2.48 & 55.46 & 74.37 & 78.75 & 78.08 & 70.04 & 79.98 & 72.93 & 72.80 \\
\cmidrule{2-16}
& RTN & 1.42e+01 & 8.84e+06 & 1.49e+01 & 8.69e+06 & 1.53e+01 & 1.01e+07 & 25.77 & 25.88 & 26.15 & 26.32 & 0.00 & 49.95 & 50.36 & 29.20 \\
& GPTQ & 8.87e-01 & 16.90 & 6.00e-01 & 5.79 & 2.27e-01 & 3.02 & 42.83 & 65.19 & 47.70 & 55.94 & 47.97 & 70.51 & 61.48 & 55.95 \\
& GuidedQ & 1.05e+00 & 20.29 & 6.93e-01 & 6.43 & 3.17e-01 & 3.33 & 36.86 & 59.05 & 41.08 & 51.49 & 39.45 & 69.37 & 59.35 & 50.95 \\
& GPTAQ & 9.66e-01 & 19.15 & 6.14e-01 & 6.18 & 2.25e-01 & 3.03 & 41.13 & 65.53 & 45.54 & 54.16 & 44.27 & 69.48 & 61.48 & 54.51 \\
& \cellcolor{g2ptqblue!12}\name
& \cellcolor{g2ptqblue!12}7.82e-01
& \cellcolor{g2ptqblue!12}15.85
& \cellcolor{g2ptqblue!12}5.02e-01
& \cellcolor{g2ptqblue!12}5.58
& \cellcolor{g2ptqblue!12}2.09e-01
& \cellcolor{g2ptqblue!12}3.00
& \cellcolor{g2ptqblue!12}43.09
& \cellcolor{g2ptqblue!12}68.98
& \cellcolor{g2ptqblue!12}52.01
& \cellcolor{g2ptqblue!12}59.68
& \cellcolor{g2ptqblue!12}47.18
& \cellcolor{g2ptqblue!12}72.09
& \cellcolor{g2ptqblue!12}63.38
& \cellcolor{g2ptqblue!12}58.06 \\
& \cellcolor{g2ptqblue!12}\name$^{*}$
& \cellcolor{g2ptqblue!12}\textbf{7.17e-01}
& \cellcolor{g2ptqblue!12}\textbf{14.99}
& \cellcolor{g2ptqblue!12}\textbf{4.44e-01}
& \cellcolor{g2ptqblue!12}\textbf{5.27}
& \cellcolor{g2ptqblue!12}\textbf{1.74e-01}
& \cellcolor{g2ptqblue!12}\textbf{2.90}
& \cellcolor{g2ptqblue!12}43.17
& \cellcolor{g2ptqblue!12}68.81
& \cellcolor{g2ptqblue!12}49.85
& \cellcolor{g2ptqblue!12}59.36
& \cellcolor{g2ptqblue!12}51.76
& \cellcolor{g2ptqblue!12}71.76
& \cellcolor{g2ptqblue!12}65.27
& \cellcolor{g2ptqblue!12}\textbf{58.57} \\
\midrule

\multirow{7}{*}{\begin{tabular}{@{}c@{}}\textbf{Qwen3.5}\\\textbf{27B}\end{tabular}}
& BF16 & 0 & 6.90 & 0 & 3.55 & 0 & 2.28 & 61.26 & 79.67 & 85.44 & 83.34 & 75.16 & 82.05 & 78.69 & 77.94 \\
\cmidrule{2-16}
& RTN & 1.24e+01 & 1.26e+06 & 1.36e+01 & 1.97e+06 & 1.26e+01 & 5.80e+05 & 26.02 & 23.86 & 25.56 & 26.38 & 0.00 & 51.41 & 50.12 & 29.05 \\
& GPTQ & 5.43e-01 & 10.85 & 3.06e-01 & \textbf{4.29} & 1.32e-01 & 2.54 & 56.83 & 79.34 & 69.54 & 71.18 & 69.36 & 77.04 & 73.24 & \textbf{70.93} \\
& GuidedQ & 6.44e-01 & 11.82 & 3.65e-01 & 4.50 & 1.60e-01 & 2.63 & 54.61 & 79.34 & 63.22 & 68.69 & 67.82 & 76.55 & 71.19 & 68.77 \\
& GPTAQ & 6.15e-01 & 12.06 & 3.34e-01 & 4.48 & 1.32e-01 & 2.55 & 56.06 & 80.47 & 68.57 & 69.80 & 65.30 & 78.18 & 72.30 & 70.10 \\
& \cellcolor{g2ptqblue!12}\name
& \cellcolor{g2ptqblue!12}5.36e-01
& \cellcolor{g2ptqblue!12}10.89
& \cellcolor{g2ptqblue!12}2.92e-01
& \cellcolor{g2ptqblue!12}4.34
& \cellcolor{g2ptqblue!12}1.30e-01
& \cellcolor{g2ptqblue!12}2.55
& \cellcolor{g2ptqblue!12}55.72
& \cellcolor{g2ptqblue!12}80.09
& \cellcolor{g2ptqblue!12}70.80
& \cellcolor{g2ptqblue!12}71.33
& \cellcolor{g2ptqblue!12}67.22
& \cellcolor{g2ptqblue!12}76.50
& \cellcolor{g2ptqblue!12}71.03
& \cellcolor{g2ptqblue!12}70.38 \\
& \cellcolor{g2ptqblue!12}\name$^{*}$
& \cellcolor{g2ptqblue!12}\textbf{5.22e-01}
& \cellcolor{g2ptqblue!12}\textbf{10.76}
& \cellcolor{g2ptqblue!12}\textbf{2.74e-01}
& \cellcolor{g2ptqblue!12}4.30
& \cellcolor{g2ptqblue!12}\textbf{1.08e-01}
& \cellcolor{g2ptqblue!12}\textbf{2.51}
& \cellcolor{g2ptqblue!12}54.86
& \cellcolor{g2ptqblue!12}79.46
& \cellcolor{g2ptqblue!12}69.99
& \cellcolor{g2ptqblue!12}70.94
& \cellcolor{g2ptqblue!12}66.00
& \cellcolor{g2ptqblue!12}77.48
& \cellcolor{g2ptqblue!12}70.96
& \cellcolor{g2ptqblue!12}69.96 \\
\midrule

\multirow{7}{*}{\begin{tabular}{@{}c@{}}\textbf{LLaMA3}\\\textbf{8B}\end{tabular}}
& BF16 & 0 & 6.14 & 0 & 3.41 & 0 & 2.72 & 53.24 & 77.90 & 48.44 & 79.25 & 75.53 & 80.74 & 73.16 & 69.75 \\
\cmidrule{2-16}
& RTN & 1.23e+01 & 1.37e+06 & 1.29e+01 & 1.38e+06 & 1.33e+01 & 1.68e+06 & 27.22 & 25.21 & 23.03 & 25.79 & 0.00 & 52.83 & 49.41 & 29.07 \\
& GPTQ & 1.85e+00 & 39.49 & 1.18e+00 & 10.74 & 4.16e-01 & 4.09 & 27.13 & 42.72 & 24.67 & 42.00 & 17.80 & 62.89 & 53.04 & 38.61 \\
& GuidedQ & 1.48e+00 & 27.31 & 8.88e-01 & 8.02 & 3.31e-01 & 3.76 & 30.03 & 52.23 & 25.56 & 47.76 & 25.36 & 64.15 & 57.06 & 43.16 \\
& GPTAQ & 1.48e+00 & 27.27 & 7.15e-01 & 6.86 & 3.12e-01 & 3.67 & 28.24 & 48.70 & 23.77 & 47.43 & 26.55 & 63.22 & 58.72 & 42.38 \\
& \cellcolor{g2ptqblue!12}\name
& \cellcolor{g2ptqblue!12}1.72e+00
& \cellcolor{g2ptqblue!12}34.60
& \cellcolor{g2ptqblue!12}6.08e-01
& \cellcolor{g2ptqblue!12}6.18
& \cellcolor{g2ptqblue!12}2.66e-01
& \cellcolor{g2ptqblue!12}3.49
& \cellcolor{g2ptqblue!12}32.76
& \cellcolor{g2ptqblue!12}53.75
& \cellcolor{g2ptqblue!12}28.16
& \cellcolor{g2ptqblue!12}53.94
& \cellcolor{g2ptqblue!12}38.87
& \cellcolor{g2ptqblue!12}65.67
& \cellcolor{g2ptqblue!12}59.12
& \cellcolor{g2ptqblue!12}47.47 \\
& \cellcolor{g2ptqblue!12}\name$^{*}$
& \cellcolor{g2ptqblue!12}\textbf{1.09e+00}
& \cellcolor{g2ptqblue!12}\textbf{18.50}
& \cellcolor{g2ptqblue!12}\textbf{5.35e-01}
& \cellcolor{g2ptqblue!12}\textbf{5.75}
& \cellcolor{g2ptqblue!12}\textbf{2.11e-01}
& \cellcolor{g2ptqblue!12}\textbf{3.29}
& \cellcolor{g2ptqblue!12}37.71
& \cellcolor{g2ptqblue!12}62.21
& \cellcolor{g2ptqblue!12}29.20
& \cellcolor{g2ptqblue!12}56.57
& \cellcolor{g2ptqblue!12}43.92
& \cellcolor{g2ptqblue!12}69.15
& \cellcolor{g2ptqblue!12}59.35
& \cellcolor{g2ptqblue!12}\textbf{51.16} \\
\midrule

\multirow{7}{*}{\begin{tabular}{@{}c@{}}\textbf{LLaMA3}\\\textbf{70B}\end{tabular}}
& BF16 & 0 & 2.86 & 0 & 3.05 & 0 & 2.41 & 64.16 & 85.86 & 65.08 & 84.95 & 79.45 & 84.49 & 80.43 & 77.77 \\
\cmidrule{2-16}
& RTN & 1.19e+01 & 4.46e+05 & 1.22e+01 & 5.68e+05 & 1.25e+01 & 6.49e+05 & 26.19 & 25.76 & 23.03 & 26.18 & 0.00 & 52.01 & 49.88 & 29.01 \\
& GPTQ & 2.75e+00 & 45.45 & 1.75e+00 & 16.84 & 8.13e-01 & 5.28 & 21.84 & 34.47 & 22.66 & 32.12 & 9.66 & 56.47 & 52.41 & 32.80 \\
& GuidedQ & \textbf{1.20e+00} & \textbf{9.58} & \textbf{2.86e-01} & \textbf{4.00} & \textbf{1.68e-01} & \textbf{2.80} & 37.37 & 60.52 & 32.76 & 69.49 & 70.58 & 68.39 & 71.35 & 58.64 \\
& GPTAQ & 1.83e+00 & 17.91 & 5.91e-01 & 5.41 & 2.41e-01 & 2.99 & 37.54 & 64.73 & 26.08 & 47.16 & 37.03 & 69.86 & 66.14 & 49.79 \\
& \cellcolor{g2ptqblue!12}\name
& \cellcolor{g2ptqblue!12}1.93e+00
& \cellcolor{g2ptqblue!12}19.88
& \cellcolor{g2ptqblue!12}5.01e-01
& \cellcolor{g2ptqblue!12}5.00
& \cellcolor{g2ptqblue!12}2.60e-01
& \cellcolor{g2ptqblue!12}3.07
& \cellcolor{g2ptqblue!12}26.62
& \cellcolor{g2ptqblue!12}25.76
& \cellcolor{g2ptqblue!12}32.10
& \cellcolor{g2ptqblue!12}60.60
& \cellcolor{g2ptqblue!12}63.03
& \cellcolor{g2ptqblue!12}52.39
& \cellcolor{g2ptqblue!12}70.17
& \cellcolor{g2ptqblue!12}47.24 \\
& \cellcolor{g2ptqblue!12}\name$^{*}$
& \cellcolor{g2ptqblue!12}1.56e+00
& \cellcolor{g2ptqblue!12}13.68
& \cellcolor{g2ptqblue!12}4.14e-01
& \cellcolor{g2ptqblue!12}4.57
& \cellcolor{g2ptqblue!12}1.94e-01
& \cellcolor{g2ptqblue!12}2.86
& \cellcolor{g2ptqblue!12}53.33
& \cellcolor{g2ptqblue!12}79.29
& \cellcolor{g2ptqblue!12}47.55
& \cellcolor{g2ptqblue!12}69.71
& \cellcolor{g2ptqblue!12}68.39
& \cellcolor{g2ptqblue!12}76.99
& \cellcolor{g2ptqblue!12}73.01
& \cellcolor{g2ptqblue!12}\textbf{66.90} \\

\bottomrule
\end{tabular}
\end{adjustbox}
\vspace{-1ex}
\caption{2-bit weight-only quantization results across different model families.}
\label{tab:w2}
\end{table*}

%% file: tables/appendix/w3.tex
\begin{table*}[!t]
\centering
\begin{adjustbox}{
  max width=\linewidth,
  max totalheight=0.95\textheight,
  keepaspectratio
}
\begin{tabular}{c|l|cc|cc|cc|cccccccc}
\toprule
\multirow{2}{*}{\textbf{Model}} & \multirow{2}{*}{\textbf{Method}} & \multicolumn{2}{c|}{\textbf{WikiText2}} & \multicolumn{2}{c|}{\textbf{UltraChat}} & \multicolumn{2}{c|}{\textbf{NuminaMath}} & \multicolumn{8}{c}{\textbf{CommonSense QA}} \\
\cmidrule(lr){3-4} \cmidrule(lr){5-6} \cmidrule(lr){7-8} \cmidrule(lr){9-16}
& & \textbf{KL} & \textbf{PPL} & \textbf{KL} & \textbf{PPL} & \textbf{KL} & \textbf{PPL} & \textbf{ARC-C} & \textbf{ARC-E} & \textbf{C-eval} & \textbf{HellaS} & \textbf{LAMB} & \textbf{PIQA} & \textbf{Wino} & \textbf{Avg.} \\
\midrule

\multirow{7}{*}{\begin{tabular}{@{}c@{}}\textbf{Qwen3}\\\textbf{0.6B}\end{tabular}}
& BF16 & 0 & 20.96 & 0 & 8.00 & 0 & 4.88 & 34.13 & 56.06 & 43.76 & 47.29 & 40.02 & 67.36 & 56.35 & 49.28 \\
\cmidrule{2-16}
& RTN & 7.99e+00 & 3.51e+04 & 7.89e+00 & 1.18e+04 & 6.43e+00 & 1.62e+03 & 25.09 & 27.31 & 25.19 & 27.84 & 0.21 & 53.43 & 48.70 & 29.68 \\
& GPTQ & 1.82e+00 & 128.54 & 1.61e+00 & 27.86 & 1.08e+00 & 10.40 & 26.28 & 31.82 & 23.55 & 33.34 & 9.08 & 56.37 & 51.22 & 33.09 \\
& GuidedQ & 6.86e-01 & 35.71 & 5.06e-01 & 11.76 & 2.95e-01 & 5.80 & 28.41 & 40.74 & 22.73 & 39.37 & 21.13 & 62.35 & 53.20 & 38.28 \\
& GPTAQ & 1.15e+00 & 65.81 & 9.73e-01 & 17.92 & 5.53e-01 & 7.48 & 25.17 & 33.67 & 24.52 & 35.90 & 14.48 & 58.22 & 50.12 & 34.58 \\
& \cellcolor{g2ptqblue!12}\name & \cellcolor{g2ptqblue!12}5.58e-01 & \cellcolor{g2ptqblue!12}33.80 & \cellcolor{g2ptqblue!12}4.02e-01 & \cellcolor{g2ptqblue!12}11.27 & \cellcolor{g2ptqblue!12}2.02e-01 & \cellcolor{g2ptqblue!12}5.68 & \cellcolor{g2ptqblue!12}26.88 & \cellcolor{g2ptqblue!12}43.35 & \cellcolor{g2ptqblue!12}23.63 & \cellcolor{g2ptqblue!12}39.55 & \cellcolor{g2ptqblue!12}25.03 & \cellcolor{g2ptqblue!12}62.24 & \cellcolor{g2ptqblue!12}53.83 & \cellcolor{g2ptqblue!12}39.22 \\
& \cellcolor{g2ptqblue!12}\name\textsuperscript{*} & \cellcolor{g2ptqblue!12}\textbf{4.97e-01} & \cellcolor{g2ptqblue!12}\textbf{32.34} & \cellcolor{g2ptqblue!12}\textbf{3.39e-01} & \cellcolor{g2ptqblue!12}\textbf{10.52} & \cellcolor{g2ptqblue!12}\textbf{1.63e-01} & \cellcolor{g2ptqblue!12}\textbf{5.49} & \cellcolor{g2ptqblue!12}28.24 & \cellcolor{g2ptqblue!12}42.55 & \cellcolor{g2ptqblue!12}24.15 & \cellcolor{g2ptqblue!12}39.77 & \cellcolor{g2ptqblue!12}26.49 & \cellcolor{g2ptqblue!12}63.33 & \cellcolor{g2ptqblue!12}54.30 & \cellcolor{g2ptqblue!12}\textbf{39.83} \\
\midrule

\multirow{7}{*}{\begin{tabular}{@{}c@{}}\textbf{Qwen3}\\\textbf{1.7B}\end{tabular}}
& BF16 & 0 & 16.72 & 0 & 9.90 & 0 & 5.72 & 44.03 & 70.16 & 58.69 & 60.33 & 50.40 & 72.80 & 61.33 & 59.68 \\
\cmidrule{2-16}
& RTN & 1.02e+01 & 2.01e+05 & 8.63e+00 & 1.72e+04 & 1.01e+01 & 5.03e+04 & 26.02 & 28.62 & 24.29 & 27.96 & 0.10 & 52.39 & 50.51 & 29.98 \\
& GPTQ & 1.37e+00 & 70.28 & 8.77e-01 & 14.70 & 5.86e-01 & 9.31 & 29.18 & 42.59 & 32.84 & 46.55 & 22.20 & 61.48 & 54.14 & 41.28 \\
& GuidedQ & 7.49e-01 & 26.52 & 5.05e-01 & 11.39 & 2.64e-01 & 6.16 & 34.13 & 54.92 & 33.73 & 50.92 & 31.42 & 66.05 & 57.85 & 47.00 \\
& GPTAQ & 8.31e-01 & 37.51 & 6.19e-01 & 16.02 & 3.19e-01 & 8.24 & 32.17 & 42.47 & 37.30 & 49.88 & 31.01 & 65.13 & 57.38 & 45.05 \\
& \cellcolor{g2ptqblue!12}\name & \cellcolor{g2ptqblue!12}5.06e-01 & \cellcolor{g2ptqblue!12}22.61 & \cellcolor{g2ptqblue!12}4.13e-01 & \cellcolor{g2ptqblue!12}13.16 & \cellcolor{g2ptqblue!12}1.88e-01 & \cellcolor{g2ptqblue!12}6.44 & \cellcolor{g2ptqblue!12}34.64 & \cellcolor{g2ptqblue!12}54.92 & \cellcolor{g2ptqblue!12}38.86 & \cellcolor{g2ptqblue!12}53.00 & \cellcolor{g2ptqblue!12}35.46 & \cellcolor{g2ptqblue!12}67.68 & \cellcolor{g2ptqblue!12}57.06 & \cellcolor{g2ptqblue!12}48.80 \\
& \cellcolor{g2ptqblue!12}\name\textsuperscript{*} & \cellcolor{g2ptqblue!12}\textbf{4.73e-01} & \cellcolor{g2ptqblue!12}\textbf{21.66} & \cellcolor{g2ptqblue!12}\textbf{3.52e-01} & \cellcolor{g2ptqblue!12}\textbf{11.76} & \cellcolor{g2ptqblue!12}\textbf{1.51e-01} & \cellcolor{g2ptqblue!12}\textbf{5.69} & \cellcolor{g2ptqblue!12}37.29 & \cellcolor{g2ptqblue!12}64.35 & \cellcolor{g2ptqblue!12}37.96 & \cellcolor{g2ptqblue!12}53.23 & \cellcolor{g2ptqblue!12}39.84 & \cellcolor{g2ptqblue!12}68.99 & \cellcolor{g2ptqblue!12}57.62 & \cellcolor{g2ptqblue!12}\textbf{51.33} \\
\midrule

\multirow{7}{*}{\begin{tabular}{@{}c@{}}\textbf{Qwen3}\\\textbf{4B}\end{tabular}}
& BF16 & 0 & 13.66 & 0 & 8.03 & 0 & 5.14 & 54.18 & 78.07 & 70.43 & 68.50 & 59.50 & 74.92 & 65.67 & 67.32 \\
\cmidrule{2-16}
& RTN & 2.09e+00 & 67.93 & 1.54e+00 & 14.78 & 1.42e+00 & 9.76 & 27.82 & 41.12 & 26.08 & 41.30 & 9.37 & 60.01 & 52.57 & 36.90 \\
& GPTQ & 6.15e-01 & 19.07 & 4.45e-01 & 7.15 & 2.93e-01 & 4.63 & 43.86 & 70.24 & 40.56 & 58.23 & 48.57 & 69.37 & 59.98 & 55.83 \\
& GuidedQ & 4.78e-01 & 19.05 & 3.21e-01 & 8.10 & 1.70e-01 & 4.91 & 42.06 & 65.03 & 49.93 & 60.36 & 47.37 & 71.76 & 61.88 & 56.91 \\
& GPTAQ & 6.03e-01 & 22.91 & 4.79e-01 & 10.39 & 2.51e-01 & 5.99 & 43.26 & 66.62 & 45.91 & 56.52 & 48.61 & 70.84 & 58.88 & 55.81 \\
& \cellcolor{g2ptqblue!12}\name & \cellcolor{g2ptqblue!12}4.31e-01 & \cellcolor{g2ptqblue!12}\textbf{18.30} & \cellcolor{g2ptqblue!12}2.68e-01 & \cellcolor{g2ptqblue!12}8.25 & \cellcolor{g2ptqblue!12}1.28e-01 & \cellcolor{g2ptqblue!12}5.05 & \cellcolor{g2ptqblue!12}45.56 & \cellcolor{g2ptqblue!12}69.32 & \cellcolor{g2ptqblue!12}53.64 & \cellcolor{g2ptqblue!12}61.41 & \cellcolor{g2ptqblue!12}52.05 & \cellcolor{g2ptqblue!12}71.27 & \cellcolor{g2ptqblue!12}62.12 & \cellcolor{g2ptqblue!12}59.34 \\
& \cellcolor{g2ptqblue!12}\name\textsuperscript{*} & \cellcolor{g2ptqblue!12}\textbf{4.10e-01} & \cellcolor{g2ptqblue!12}18.38 & \cellcolor{g2ptqblue!12}\textbf{2.43e-01} & \cellcolor{g2ptqblue!12}\textbf{7.88} & \cellcolor{g2ptqblue!12}\textbf{1.07e-01} & \cellcolor{g2ptqblue!12}\textbf{4.84} & \cellcolor{g2ptqblue!12}46.16 & \cellcolor{g2ptqblue!12}66.25 & \cellcolor{g2ptqblue!12}59.51 & \cellcolor{g2ptqblue!12}61.70 & \cellcolor{g2ptqblue!12}52.18 & \cellcolor{g2ptqblue!12}72.03 & \cellcolor{g2ptqblue!12}64.25 & \cellcolor{g2ptqblue!12}\textbf{60.30} \\
\midrule

\multirow{7}{*}{\begin{tabular}{@{}c@{}}\textbf{Qwen3}\\\textbf{8B}\end{tabular}}
& BF16 & 0 & 9.72 & 0 & 5.20 & 0 & 3.60 & 56.14 & 80.77 & 79.57 & 74.98 & 64.10 & 77.97 & 67.96 & 71.64 \\
\cmidrule{2-16}
& RTN & 1.01e+00 & 22.61 & 8.44e-01 & 8.18 & 6.89e-01 & 5.24 & 35.92 & 51.89 & 52.75 & 54.78 & 38.15 & 70.18 & 58.96 & 51.80 \\
& GPTQ & 2.21e-01 & 11.03 & 1.41e-01 & 5.49 & 6.75e-02 & 3.71 & 52.73 & 76.77 & 70.73 & 70.59 & 61.87 & 75.79 & 65.82 & 67.76 \\
& GuidedQ & 2.21e-01 & 11.24 & 1.37e-01 & 5.55 & 6.65e-02 & 3.81 & 50.34 & 77.53 & 71.77 & 69.72 & 62.02 & 75.95 & 63.85 & 67.31 \\
& GPTAQ & 2.27e-01 & 11.54 & 1.48e-01 & 5.69 & 6.82e-02 & 3.89 & 50.77 & 77.02 & 71.17 & 68.94 & 61.96 & 75.79 & 67.32 & 67.57 \\
& \cellcolor{g2ptqblue!12}\name & \cellcolor{g2ptqblue!12}\textbf{2.01e-01} & \cellcolor{g2ptqblue!12}11.04 & \cellcolor{g2ptqblue!12}1.28e-01 & \cellcolor{g2ptqblue!12}5.53 & \cellcolor{g2ptqblue!12}6.55e-02 & \cellcolor{g2ptqblue!12}3.78 & \cellcolor{g2ptqblue!12}53.24 & \cellcolor{g2ptqblue!12}79.17 & \cellcolor{g2ptqblue!12}71.40 & \cellcolor{g2ptqblue!12}70.64 & \cellcolor{g2ptqblue!12}62.41 & \cellcolor{g2ptqblue!12}76.01 & \cellcolor{g2ptqblue!12}68.43 & \cellcolor{g2ptqblue!12}\textbf{68.76} \\
& \cellcolor{g2ptqblue!12}\name\textsuperscript{*} & \cellcolor{g2ptqblue!12}\textbf{2.01e-01} & \cellcolor{g2ptqblue!12}\textbf{10.99} & \cellcolor{g2ptqblue!12}\textbf{1.19e-01} & \cellcolor{g2ptqblue!12}\textbf{5.38} & \cellcolor{g2ptqblue!12}\textbf{5.39e-02} & \cellcolor{g2ptqblue!12}\textbf{3.72} & \cellcolor{g2ptqblue!12}53.41 & \cellcolor{g2ptqblue!12}78.75 & \cellcolor{g2ptqblue!12}71.47 & \cellcolor{g2ptqblue!12}71.00 & \cellcolor{g2ptqblue!12}62.76 & \cellcolor{g2ptqblue!12}76.01 & \cellcolor{g2ptqblue!12}67.64 & \cellcolor{g2ptqblue!12}68.72 \\
\midrule

\multirow{7}{*}{\begin{tabular}{@{}c@{}}\textbf{Qwen3}\\\textbf{14B}\end{tabular}}
& BF16 & 0 & 8.65 & 0 & 4.80 & 0 & 3.36 & 60.49 & 82.74 & 82.32 & 78.80 & 67.90 & 80.03 & 73.01 & 75.04 \\
\cmidrule{2-16}
& RTN & 8.37e-01 & 17.96 & 6.56e-01 & 6.73 & 4.69e-01 & 4.17 & 44.45 & 66.25 & 65.16 & 65.28 & 57.05 & 72.63 & 64.72 & 62.22 \\
& GPTQ & 1.67e-01 & 9.68 & 9.99e-02 & 4.91 & 4.20e-02 & \textbf{3.40} & 58.11 & 81.57 & 77.34 & 75.49 & 70.15 & 79.22 & 71.74 & \textbf{73.37} \\
& GuidedQ & 1.71e-01 & 9.80 & 9.89e-02 & 5.01 & 4.19e-02 & 3.47 & 57.00 & 81.48 & 76.52 & 75.26 & 68.35 & 78.56 & 70.88 & 72.58 \\
& GPTAQ & 1.67e-01 & 9.97 & 9.93e-02 & 4.99 & 3.78e-02 & 3.45 & 56.83 & 79.67 & 77.56 & 75.16 & 69.47 & 77.69 & 70.80 & 72.45 \\
& \cellcolor{g2ptqblue!12}\name & \cellcolor{g2ptqblue!12}1.56e-01 & \cellcolor{g2ptqblue!12}9.78 & \cellcolor{g2ptqblue!12}9.05e-02 & \cellcolor{g2ptqblue!12}4.96 & \cellcolor{g2ptqblue!12}3.89e-02 & \cellcolor{g2ptqblue!12}3.43 & \cellcolor{g2ptqblue!12}58.70 & \cellcolor{g2ptqblue!12}81.48 & \cellcolor{g2ptqblue!12}78.08 & \cellcolor{g2ptqblue!12}75.69 & \cellcolor{g2ptqblue!12}67.77 & \cellcolor{g2ptqblue!12}78.56 & \cellcolor{g2ptqblue!12}70.80 & \cellcolor{g2ptqblue!12}73.01 \\
& \cellcolor{g2ptqblue!12}\name\textsuperscript{*} & \cellcolor{g2ptqblue!12}\textbf{1.54e-01} & \cellcolor{g2ptqblue!12}\textbf{9.72} & \cellcolor{g2ptqblue!12}\textbf{8.30e-02} & \cellcolor{g2ptqblue!12}\textbf{4.92} & \cellcolor{g2ptqblue!12}\textbf{3.34e-02} & \cellcolor{g2ptqblue!12}3.42 & \cellcolor{g2ptqblue!12}57.94 & \cellcolor{g2ptqblue!12}81.78 & \cellcolor{g2ptqblue!12}77.79 & \cellcolor{g2ptqblue!12}76.06 & \cellcolor{g2ptqblue!12}66.76 & \cellcolor{g2ptqblue!12}78.67 & \cellcolor{g2ptqblue!12}72.22 & \cellcolor{g2ptqblue!12}73.03 \\
\midrule

\multirow{7}{*}{\begin{tabular}{@{}c@{}}\textbf{Qwen3}\\\textbf{32B}\end{tabular}}
& BF16 & 0 & 7.61 & 0 & 4.05 & 0 & 2.80 & 61.01 & 83.21 & 86.03 & 82.60 & 67.13 & 82.05 & 73.24 & 76.47 \\
\cmidrule{2-16}
& RTN & 8.53e-01 & 13.74 & 5.26e-01 & 5.17 & 4.84e-01 & 3.67 & 49.32 & 67.85 & 67.16 & 70.16 & 52.42 & 75.41 & 61.01 & 63.33 \\
& GPTQ & 2.22e-01 & 8.32 & 8.79e-02 & 4.14 & 6.78e-02 & 2.87 & 59.22 & 81.61 & 81.43 & 79.65 & 71.57 & 80.03 & 73.64 & \textbf{75.31} \\
& GuidedQ & 2.32e-01 & 8.44 & 8.75e-02 & 4.18 & 6.67e-02 & 2.89 & 61.01 & 82.79 & 80.91 & 79.85 & 69.28 & 79.71 & 70.80 & 74.91 \\
& GPTAQ & 3.42e-01 & 9.29 & 1.77e-01 & 4.36 & 1.33e-01 & 3.02 & 57.51 & 78.28 & 78.75 & 78.64 & 71.67 & 80.36 & 72.06 & 73.90 \\
& \cellcolor{g2ptqblue!12}\name & \cellcolor{g2ptqblue!12}2.08e-01 & \cellcolor{g2ptqblue!12}8.31 & \cellcolor{g2ptqblue!12}8.36e-02 & \cellcolor{g2ptqblue!12}4.16 & \cellcolor{g2ptqblue!12}6.57e-02 & \cellcolor{g2ptqblue!12}2.84 & \cellcolor{g2ptqblue!12}60.24 & \cellcolor{g2ptqblue!12}82.58 & \cellcolor{g2ptqblue!12}81.80 & \cellcolor{g2ptqblue!12}80.19 & \cellcolor{g2ptqblue!12}68.12 & \cellcolor{g2ptqblue!12}80.30 & \cellcolor{g2ptqblue!12}73.16 & \cellcolor{g2ptqblue!12}75.20 \\
& \cellcolor{g2ptqblue!12}\name\textsuperscript{*} & \cellcolor{g2ptqblue!12}\textbf{2.06e-01} & \cellcolor{g2ptqblue!12}\textbf{8.27} & \cellcolor{g2ptqblue!12}\textbf{7.76e-02} & \cellcolor{g2ptqblue!12}\textbf{4.09} & \cellcolor{g2ptqblue!12}\textbf{5.88e-02} & \cellcolor{g2ptqblue!12}\textbf{2.82} & \cellcolor{g2ptqblue!12}59.90 & \cellcolor{g2ptqblue!12}81.27 & \cellcolor{g2ptqblue!12}80.68 & \cellcolor{g2ptqblue!12}80.21 & \cellcolor{g2ptqblue!12}67.75 & \cellcolor{g2ptqblue!12}80.63 & \cellcolor{g2ptqblue!12}73.95 & \cellcolor{g2ptqblue!12}74.91 \\
\midrule

\multirow{7}{*}{\begin{tabular}{@{}c@{}}\textbf{Qwen3.5}\\\textbf{0.8B}\end{tabular}}
& BF16 & 0 & 17.18 & 0 & 5.54 & 0 & 3.40 & 37.63 & 61.49 & 48.89 & 49.54 & 43.59 & 69.31 & 57.62 & 52.58 \\
\cmidrule{2-16}
& RTN & 2.36e+00 & 199.06 & 1.72e+00 & 29.21 & 1.58e+00 & 14.40 & 22.61 & 37.08 & 23.40 & 32.02 & 7.30 & 56.75 & 49.80 & 32.71 \\
& GPTQ & 4.04e-01 & 25.65 & 3.37e-01 & 7.57 & 1.65e-01 & 3.79 & 31.06 & 52.10 & 28.16 & 42.37 & 27.32 & 64.42 & 54.22 & 42.81 \\
& GuidedQ & 4.23e-01 & 26.85 & 3.41e-01 & 7.63 & 1.73e-01 & 3.86 & 31.57 & 51.09 & 30.24 & 41.94 & 28.80 & 64.80 & 55.56 & 43.43 \\
& GPTAQ & 3.81e-01 & 25.92 & 3.08e-01 & 7.48 & 1.45e-01 & 3.83 & 30.29 & 50.84 & 31.58 & 41.93 & 27.77 & 63.55 & 54.78 & 42.96 \\
& \cellcolor{g2ptqblue!12}\name & \cellcolor{g2ptqblue!12}3.57e-01 & \cellcolor{g2ptqblue!12}24.31 & \cellcolor{g2ptqblue!12}2.82e-01 & \cellcolor{g2ptqblue!12}7.16 & \cellcolor{g2ptqblue!12}1.35e-01 & \cellcolor{g2ptqblue!12}3.78 & \cellcolor{g2ptqblue!12}29.61 & \cellcolor{g2ptqblue!12}50.84 & \cellcolor{g2ptqblue!12}28.45 & \cellcolor{g2ptqblue!12}43.07 & \cellcolor{g2ptqblue!12}29.05 & \cellcolor{g2ptqblue!12}64.31 & \cellcolor{g2ptqblue!12}54.14 & \cellcolor{g2ptqblue!12}42.78 \\
& \cellcolor{g2ptqblue!12}\name\textsuperscript{*} & \cellcolor{g2ptqblue!12}\textbf{3.16e-01} & \cellcolor{g2ptqblue!12}\textbf{23.36} & \cellcolor{g2ptqblue!12}\textbf{2.38e-01} & \cellcolor{g2ptqblue!12}\textbf{6.90} & \cellcolor{g2ptqblue!12}\textbf{1.09e-01} & \cellcolor{g2ptqblue!12}\textbf{3.69} & \cellcolor{g2ptqblue!12}31.14 & \cellcolor{g2ptqblue!12}55.77 & \cellcolor{g2ptqblue!12}27.12 & \cellcolor{g2ptqblue!12}43.10 & \cellcolor{g2ptqblue!12}32.19 & \cellcolor{g2ptqblue!12}65.78 & \cellcolor{g2ptqblue!12}56.20 & \cellcolor{g2ptqblue!12}\textbf{44.47} \\
\midrule

\multirow{7}{*}{\begin{tabular}{@{}c@{}}\textbf{Qwen3.5}\\\textbf{2B}\end{tabular}}
& BF16 & 0 & 12.10 & 0 & 4.66 & 0 & 2.99 & 41.47 & 65.95 & 59.06 & 62.11 & 53.31 & 72.69 & 62.67 & 59.61 \\
\cmidrule{2-16}
& RTN & 2.73e+00 & 199.38 & 2.31e+00 & 40.50 & 1.69e+00 & 14.40 & 28.07 & 48.36 & 25.71 & 34.99 & 13.72 & 62.13 & 53.67 & 38.09 \\
& GPTQ & 3.25e-01 & 17.20 & 2.44e-01 & 5.65 & 1.14e-01 & 3.24 & 34.90 & 51.60 & 43.76 & 54.23 & 43.06 & 70.24 & 61.25 & 51.29 \\
& GuidedQ & 3.34e-01 & 17.16 & 2.51e-01 & 5.70 & 1.29e-01 & 3.31 & 34.73 & 52.57 & 40.79 & 54.04 & 47.91 & 69.80 & 59.75 & 51.37 \\
& GPTAQ & 3.15e-01 & 16.88 & 2.23e-01 & 5.60 & 1.07e-01 & 3.27 & 37.97 & 54.63 & 34.10 & 53.96 & 40.46 & 69.21 & 59.75 & 50.01 \\
& \cellcolor{g2ptqblue!12}\name & \cellcolor{g2ptqblue!12}2.93e-01 & \cellcolor{g2ptqblue!12}16.49 & \cellcolor{g2ptqblue!12}2.02e-01 & \cellcolor{g2ptqblue!12}5.51 & \cellcolor{g2ptqblue!12}1.02e-01 & \cellcolor{g2ptqblue!12}3.26 & \cellcolor{g2ptqblue!12}37.29 & \cellcolor{g2ptqblue!12}57.49 & \cellcolor{g2ptqblue!12}40.94 & \cellcolor{g2ptqblue!12}54.98 & \cellcolor{g2ptqblue!12}42.67 & \cellcolor{g2ptqblue!12}69.48 & \cellcolor{g2ptqblue!12}60.22 & \cellcolor{g2ptqblue!12}51.87 \\
& \cellcolor{g2ptqblue!12}\name\textsuperscript{*} & \cellcolor{g2ptqblue!12}\textbf{2.68e-01} & \cellcolor{g2ptqblue!12}\textbf{16.03} & \cellcolor{g2ptqblue!12}\textbf{1.75e-01} & \cellcolor{g2ptqblue!12}\textbf{5.36} & \cellcolor{g2ptqblue!12}\textbf{8.68e-02} & \cellcolor{g2ptqblue!12}\textbf{3.20} & \cellcolor{g2ptqblue!12}38.57 & \cellcolor{g2ptqblue!12}61.45 & \cellcolor{g2ptqblue!12}39.67 & \cellcolor{g2ptqblue!12}55.44 & \cellcolor{g2ptqblue!12}47.14 & \cellcolor{g2ptqblue!12}71.33 & \cellcolor{g2ptqblue!12}61.09 & \cellcolor{g2ptqblue!12}\textbf{53.53} \\
\midrule

\multirow{7}{*}{\begin{tabular}{@{}c@{}}\textbf{Qwen3.5}\\\textbf{4B}\end{tabular}}
& BF16 & 0 & 9.58 & 0 & 4.33 & 0 & 2.72 & 54.27 & 75.46 & 72.51 & 73.11 & 64.74 & 77.97 & 69.93 & 69.71 \\
\cmidrule{2-16}
& RTN & 7.06e-01 & 19.09 & 5.59e-01 & 5.87 & 4.12e-01 & 3.75 & 41.55 & 63.26 & 53.86 & 61.95 & 47.27 & 72.52 & 61.80 & 57.46 \\
& GPTQ & 2.94e-01 & 12.73 & 2.01e-01 & \textbf{4.67} & 9.25e-02 & 2.88 & 47.70 & 68.90 & 63.15 & 67.53 & 61.89 & 75.24 & 64.88 & 64.18 \\
& GuidedQ & 3.10e-01 & 12.86 & 2.19e-01 & 4.83 & 1.09e-01 & 2.93 & 49.66 & 73.06 & 62.04 & 67.16 & 61.23 & 75.52 & 65.82 & \textbf{64.93} \\
& GPTAQ & 3.10e-01 & 13.27 & 1.88e-01 & 4.81 & 8.87e-02 & 2.89 & 48.04 & 70.92 & 60.33 & 67.03 & 59.46 & 75.57 & 65.90 & 63.89 \\
& \cellcolor{g2ptqblue!12}\name & \cellcolor{g2ptqblue!12}2.61e-01 & \cellcolor{g2ptqblue!12}12.04 & \cellcolor{g2ptqblue!12}1.74e-01 & \cellcolor{g2ptqblue!12}4.72 & \cellcolor{g2ptqblue!12}8.43e-02 & \cellcolor{g2ptqblue!12}2.87 & \cellcolor{g2ptqblue!12}49.06 & \cellcolor{g2ptqblue!12}72.64 & \cellcolor{g2ptqblue!12}63.82 & \cellcolor{g2ptqblue!12}67.55 & \cellcolor{g2ptqblue!12}59.40 & \cellcolor{g2ptqblue!12}76.06 & \cellcolor{g2ptqblue!12}65.82 & \cellcolor{g2ptqblue!12}64.91 \\
& \cellcolor{g2ptqblue!12}\name\textsuperscript{*} & \cellcolor{g2ptqblue!12}\textbf{2.38e-01} & \cellcolor{g2ptqblue!12}\textbf{11.56} & \cellcolor{g2ptqblue!12}\textbf{1.56e-01} & \cellcolor{g2ptqblue!12}4.74 & \cellcolor{g2ptqblue!12}\textbf{7.45e-02} & \cellcolor{g2ptqblue!12}\textbf{2.85} & \cellcolor{g2ptqblue!12}48.81 & \cellcolor{g2ptqblue!12}68.43 & \cellcolor{g2ptqblue!12}63.97 & \cellcolor{g2ptqblue!12}67.56 & \cellcolor{g2ptqblue!12}60.08 & \cellcolor{g2ptqblue!12}75.57 & \cellcolor{g2ptqblue!12}68.11 & \cellcolor{g2ptqblue!12}64.65 \\
\midrule

\multirow{7}{*}{\begin{tabular}{@{}c@{}}\textbf{Qwen3.5}\\\textbf{9B}\end{tabular}}
& BF16 & 0 & 8.65 & 0 & 3.78 & 0 & 2.48 & 55.46 & 74.37 & 78.75 & 78.08 & 70.04 & 79.98 & 72.93 & 72.80 \\
\cmidrule{2-16}
& RTN & 1.66e+00 & 37.80 & 1.56e+00 & 12.72 & 1.38e+00 & 8.96 & 32.17 & 59.93 & 41.53 & 41.89 & 19.35 & 66.21 & 55.64 & 45.25 \\
& GPTQ & 1.96e-01 & 9.36 & 1.05e-01 & 4.01 & 5.03e-02 & 2.57 & 52.99 & 76.52 & 74.44 & 74.24 & 68.45 & 79.33 & 71.03 & \textbf{71.00} \\
& GuidedQ & 2.06e-01 & 9.76 & 1.10e-01 & 4.02 & 6.14e-02 & 2.60 & 51.96 & 75.76 & 73.33 & 74.36 & 68.19 & 78.73 & 69.93 & 70.32 \\
& GPTAQ & 1.99e-01 & 9.47 & 1.03e-01 & 4.05 & 4.93e-02 & 2.57 & 53.84 & 77.10 & 73.40 & 74.00 & 68.31 & 79.16 & 70.48 & 70.90 \\
& \cellcolor{g2ptqblue!12}\name & \cellcolor{g2ptqblue!12}1.94e-01 & \cellcolor{g2ptqblue!12}\textbf{9.18} & \cellcolor{g2ptqblue!12}9.40e-02 & \cellcolor{g2ptqblue!12}4.03 & \cellcolor{g2ptqblue!12}4.80e-02 & \cellcolor{g2ptqblue!12}2.57 & \cellcolor{g2ptqblue!12}52.39 & \cellcolor{g2ptqblue!12}74.62 & \cellcolor{g2ptqblue!12}74.22 & \cellcolor{g2ptqblue!12}74.56 & \cellcolor{g2ptqblue!12}67.42 & \cellcolor{g2ptqblue!12}79.22 & \cellcolor{g2ptqblue!12}71.43 & \cellcolor{g2ptqblue!12}70.55 \\
& \cellcolor{g2ptqblue!12}\name\textsuperscript{*} & \cellcolor{g2ptqblue!12}\textbf{1.89e-01} & \cellcolor{g2ptqblue!12}9.22 & \cellcolor{g2ptqblue!12}\textbf{8.75e-02} & \cellcolor{g2ptqblue!12}\textbf{3.97} & \cellcolor{g2ptqblue!12}\textbf{4.26e-02} & \cellcolor{g2ptqblue!12}\textbf{2.55} & \cellcolor{g2ptqblue!12}51.45 & \cellcolor{g2ptqblue!12}75.51 & \cellcolor{g2ptqblue!12}74.07 & \cellcolor{g2ptqblue!12}74.45 & \cellcolor{g2ptqblue!12}67.55 & \cellcolor{g2ptqblue!12}79.38 & \cellcolor{g2ptqblue!12}71.43 & \cellcolor{g2ptqblue!12}70.55 \\
\midrule

\multirow{7}{*}{\begin{tabular}{@{}c@{}}\textbf{Qwen3.5}\\\textbf{27B}\end{tabular}}
& BF16 & 0 & 6.90 & 0 & 3.55 & 0 & 2.28 & 61.26 & 79.67 & 85.44 & 83.34 & 75.16 & 82.05 & 78.69 & 77.94 \\
\cmidrule{2-16}
& RTN & 6.13e-01 & 10.76 & 5.09e-01 & 4.63 & 4.38e-01 & 3.26 & 55.03 & 77.15 & 70.51 & 71.36 & 54.53 & 77.26 & 70.17 & 68.00 \\
& GPTQ & 1.38e-01 & 7.58 & 6.10e-02 & 3.65 & 3.39e-02 & \textbf{2.33} & 60.75 & 80.30 & 82.54 & 81.27 & 76.25 & 82.10 & 78.22 & 77.35 \\
& GuidedQ & 1.48e-01 & 7.59 & 6.55e-02 & 3.66 & 3.64e-02 & 2.34 & 61.01 & 81.06 & 83.28 & 80.94 & 75.72 & 81.07 & 77.43 & 77.22 \\
& GPTAQ & 1.43e-01 & 7.60 & 6.44e-02 & 3.66 & 3.50e-02 & 2.34 & 61.86 & 81.78 & 82.62 & 80.47 & 75.88 & 81.61 & 78.37 & 77.51 \\
& \cellcolor{g2ptqblue!12}\name & \cellcolor{g2ptqblue!12}\textbf{1.36e-01} & \cellcolor{g2ptqblue!12}\textbf{7.42} & \cellcolor{g2ptqblue!12}5.95e-02 & \cellcolor{g2ptqblue!12}3.65 & \cellcolor{g2ptqblue!12}3.62e-02 & \cellcolor{g2ptqblue!12}\textbf{2.33} & \cellcolor{g2ptqblue!12}62.20 & \cellcolor{g2ptqblue!12}83.38 & \cellcolor{g2ptqblue!12}82.84 & \cellcolor{g2ptqblue!12}81.27 & \cellcolor{g2ptqblue!12}75.08 & \cellcolor{g2ptqblue!12}82.21 & \cellcolor{g2ptqblue!12}77.19 & \cellcolor{g2ptqblue!12}\textbf{77.74} \\
& \cellcolor{g2ptqblue!12}\name\textsuperscript{*} & \cellcolor{g2ptqblue!12}\textbf{1.36e-01} & \cellcolor{g2ptqblue!12}7.52 & \cellcolor{g2ptqblue!12}\textbf{5.58e-02} & \cellcolor{g2ptqblue!12}\textbf{3.64} & \cellcolor{g2ptqblue!12}\textbf{3.13e-02} & \cellcolor{g2ptqblue!12}\textbf{2.33} & \cellcolor{g2ptqblue!12}61.35 & \cellcolor{g2ptqblue!12}80.01 & \cellcolor{g2ptqblue!12}82.10 & \cellcolor{g2ptqblue!12}81.06 & \cellcolor{g2ptqblue!12}74.36 & \cellcolor{g2ptqblue!12}81.61 & \cellcolor{g2ptqblue!12}77.74 & \cellcolor{g2ptqblue!12}76.89 \\
\midrule

\multirow{7}{*}{\begin{tabular}{@{}c@{}}\textbf{LLaMA3}\\\textbf{8B}\end{tabular}}
& BF16 & 0 & 6.14 & 0 & 3.41 & 0 & 2.72 & 53.24 & 77.90 & 48.44 & 79.25 & 75.53 & 80.74 & 73.16 & 69.75 \\
\cmidrule{2-16}
& RTN & 2.14e+00 & 53.31 & 1.26e+00 & 11.61 & 8.60e-01 & 6.40 & 27.99 & 43.35 & 22.66 & 47.90 & 28.35 & 60.94 & 59.98 & 41.60 \\
& GPTQ & 2.82e-01 & 8.17 & 1.45e-01 & 3.88 & 7.86e-02 & 2.92 & 47.27 & 73.02 & 36.55 & 73.78 & 72.71 & 77.20 & 71.90 & 64.63 \\
& GuidedQ & 2.38e-01 & 7.81 & 1.18e-01 & 3.79 & 6.42e-02 & 2.89 & 49.15 & 76.39 & 39.08 & 75.16 & 73.12 & 77.31 & 71.74 & 65.99 \\
& GPTAQ & 2.53e-01 & 7.92 & 1.13e-01 & 3.79 & 6.70e-02 & 2.89 & 48.04 & 75.13 & 42.12 & 74.22 & 71.30 & 77.42 & 71.51 & 65.68 \\
& \cellcolor{g2ptqblue!12}\name & \cellcolor{g2ptqblue!12}\textbf{2.27e-01} & \cellcolor{g2ptqblue!12}\textbf{7.71} & \cellcolor{g2ptqblue!12}9.73e-02 & \cellcolor{g2ptqblue!12}3.74 & \cellcolor{g2ptqblue!12}6.16e-02 & \cellcolor{g2ptqblue!12}2.88 & \cellcolor{g2ptqblue!12}48.21 & \cellcolor{g2ptqblue!12}74.83 & \cellcolor{g2ptqblue!12}40.86 & \cellcolor{g2ptqblue!12}75.10 & \cellcolor{g2ptqblue!12}72.87 & \cellcolor{g2ptqblue!12}78.56 & \cellcolor{g2ptqblue!12}72.06 & \cellcolor{g2ptqblue!12}66.07 \\
& \cellcolor{g2ptqblue!12}\name\textsuperscript{*} & \cellcolor{g2ptqblue!12}2.28e-01 & \cellcolor{g2ptqblue!12}\textbf{7.71} & \cellcolor{g2ptqblue!12}\textbf{8.97e-02} & \cellcolor{g2ptqblue!12}\textbf{3.71} & \cellcolor{g2ptqblue!12}\textbf{4.95e-02} & \cellcolor{g2ptqblue!12}\textbf{2.84} & \cellcolor{g2ptqblue!12}49.74 & \cellcolor{g2ptqblue!12}75.59 & \cellcolor{g2ptqblue!12}42.42 & \cellcolor{g2ptqblue!12}75.33 & \cellcolor{g2ptqblue!12}72.00 & \cellcolor{g2ptqblue!12}78.02 & \cellcolor{g2ptqblue!12}72.85 & \cellcolor{g2ptqblue!12}\textbf{66.56} \\
\midrule

\multirow{7}{*}{\begin{tabular}{@{}c@{}}\textbf{LLaMA3}\\\textbf{70B}\end{tabular}}
& BF16 & 0 & 2.86 & 0 & 3.05 & 0 & 2.41 & 64.16 & 85.86 & 65.08 & 84.95 & 79.45 & 84.49 & 80.43 & 77.77 \\
\cmidrule{2-16}
& RTN & 4.56e+00 & 285.78 & 3.77e+00 & 127.47 & 2.93e+00 & 44.40 & 21.67 & 30.89 & 23.40 & 28.85 & 12.03 & 55.55 & 49.25 & 31.66 \\
& GPTQ & 6.51e-01 & 5.48 & 9.83e-02 & 3.31 & 7.75e-02 & 2.57 & 57.51 & 82.83 & 52.67 & 81.73 & 78.79 & 82.21 & 78.93 & 73.52 \\
& GuidedQ & 5.05e-01 & 4.74 & 4.61e-02 & 3.18 & 3.48e-02 & 2.47 & 62.03 & 85.06 & 60.18 & 83.43 & 79.84 & 83.41 & 81.53 & \textbf{76.50} \\
& GPTAQ & 5.78e-01 & 5.09 & 5.79e-02 & 3.21 & 4.76e-02 & 2.49 & 61.43 & 83.12 & 58.10 & 81.64 & 78.42 & 82.81 & 80.43 & 75.14 \\
& \cellcolor{g2ptqblue!12}\name & \cellcolor{g2ptqblue!12}\textbf{4.93e-01} & \cellcolor{g2ptqblue!12}\textbf{4.69} & \cellcolor{g2ptqblue!12}\textbf{4.06e-02} & \cellcolor{g2ptqblue!12}\textbf{3.16} & \cellcolor{g2ptqblue!12}3.22e-02 & \cellcolor{g2ptqblue!12}2.47 & \cellcolor{g2ptqblue!12}61.77 & \cellcolor{g2ptqblue!12}84.22 & \cellcolor{g2ptqblue!12}58.84 & \cellcolor{g2ptqblue!12}83.62 & \cellcolor{g2ptqblue!12}79.18 & \cellcolor{g2ptqblue!12}83.03 & \cellcolor{g2ptqblue!12}79.24 & \cellcolor{g2ptqblue!12}75.70 \\
& \cellcolor{g2ptqblue!12}\name\textsuperscript{*} & \cellcolor{g2ptqblue!12}5.35e-01 & \cellcolor{g2ptqblue!12}4.89 & \cellcolor{g2ptqblue!12}4.21e-02 & \cellcolor{g2ptqblue!12}3.17 & \cellcolor{g2ptqblue!12}\textbf{2.87e-02} & \cellcolor{g2ptqblue!12}\textbf{2.46} & \cellcolor{g2ptqblue!12}62.29 & \cellcolor{g2ptqblue!12}84.89 & \cellcolor{g2ptqblue!12}60.25 & \cellcolor{g2ptqblue!12}83.45 & \cellcolor{g2ptqblue!12}78.32 & \cellcolor{g2ptqblue!12}83.13 & \cellcolor{g2ptqblue!12}79.32 & \cellcolor{g2ptqblue!12}75.95 \\

\bottomrule
\end{tabular}
\end{adjustbox}
\vspace{-1ex}
\caption{3-bit weight-only quantization results across different model families.}
\label{tab:w3}
\end{table*}

%% file: tables/appendix/w4.tex
\begin{table*}[!t]
\centering
\begin{adjustbox}{
  max width=\linewidth,
  max totalheight=0.95\textheight,
  keepaspectratio
}
\begin{tabular}{c|l|cc|cc|cc|cccccccc}
\toprule
\multirow{2}{*}{\textbf{Model}} & \multirow{2}{*}{\textbf{Method}} & \multicolumn{2}{c|}{\textbf{WikiText2}} & \multicolumn{2}{c|}{\textbf{UltraChat}} & \multicolumn{2}{c|}{\textbf{NuminaMath}} & \multicolumn{8}{c}{\textbf{CommonSense QA}} \\
\cmidrule(lr){3-4} \cmidrule(lr){5-6} \cmidrule(lr){7-8} \cmidrule(lr){9-16}
& & \textbf{KL} & \textbf{PPL} & \textbf{KL} & \textbf{PPL} & \textbf{KL} & \textbf{PPL} & \textbf{ARC-C} & \textbf{ARC-E} & \textbf{C-eval} & \textbf{HellaS} & \textbf{LAMB} & \textbf{PIQA} & \textbf{Wino} & \textbf{Avg.} \\
\midrule

\multirow{7}{*}{\begin{tabular}{@{}c@{}}\textbf{Qwen3} \\ \textbf{0.6B}\end{tabular}}
& BF16 & 0 & 20.96 & 0 & 8.00 & 0 & 4.88 & 34.13 & 56.06 & 43.76 & 47.29 & 40.02 & 67.36 & 56.35 & 49.28 \\
\cmidrule{2-16}
& RTN & 7.09e-01 & 34.72 & 6.01e-01 & 12.01 & 4.43e-01 & 6.11 & 28.75 & 44.19 & 33.21 & 42.25 & 24.10 & 63.00 & 53.99 & 41.36 \\
& GPTQ & 5.97e-01 & 37.97 & 4.61e-01 & 11.25 & 3.16e-01 & 5.76 & 27.65 & 42.80 & 34.99 & 41.22 & 26.43 & 63.28 & 54.62 & 41.57 \\
& GuidedQ & 2.09e-01 & 24.71 & 1.38e-01 & 8.80 & 7.86e-02 & 5.12 & 31.06 & 49.45 & 39.97 & 44.63 & 32.18 & 65.83 & 54.78 & 45.41 \\
& GPTAQ & 4.17e-01 & 31.84 & 3.03e-01 & 10.15 & 1.88e-01 & 5.57 & 28.58 & 43.86 & 34.62 & 42.71 & 31.44 & 64.58 & 54.14 & 42.85 \\
& \cellcolor{g2ptqblue!12}\name & \cellcolor{g2ptqblue!12}1.73e-01 & \cellcolor{g2ptqblue!12}24.02 & \cellcolor{g2ptqblue!12}1.16e-01 & \cellcolor{g2ptqblue!12}8.77 & \cellcolor{g2ptqblue!12}6.14e-02 & \cellcolor{g2ptqblue!12}5.09 & \cellcolor{g2ptqblue!12}32.76 & \cellcolor{g2ptqblue!12}51.73 & \cellcolor{g2ptqblue!12}37.96 & \cellcolor{g2ptqblue!12}44.80 & \cellcolor{g2ptqblue!12}32.93 & \cellcolor{g2ptqblue!12}66.10 & \cellcolor{g2ptqblue!12}54.70 & \cellcolor{g2ptqblue!12}\textbf{45.85} \\
& \cellcolor{g2ptqblue!12}\name\textsuperscript{*} & \cellcolor{g2ptqblue!12}\textbf{1.60e-01} & \cellcolor{g2ptqblue!12}\textbf{23.98} & \cellcolor{g2ptqblue!12}\textbf{1.01e-01} & \cellcolor{g2ptqblue!12}\textbf{8.56} & \cellcolor{g2ptqblue!12}\textbf{5.35e-02} & \cellcolor{g2ptqblue!12}\textbf{5.03} & \cellcolor{g2ptqblue!12}30.12 & \cellcolor{g2ptqblue!12}49.37 & \cellcolor{g2ptqblue!12}39.82 & \cellcolor{g2ptqblue!12}44.82 & \cellcolor{g2ptqblue!12}34.45 & \cellcolor{g2ptqblue!12}65.72 & \cellcolor{g2ptqblue!12}56.27 & \cellcolor{g2ptqblue!12}45.80 \\
\midrule

\multirow{7}{*}{\begin{tabular}{@{}c@{}}\textbf{Qwen3} \\ \textbf{1.7B}\end{tabular}}
& BF16 & 0 & 16.72 & 0 & 9.90 & 0 & 5.72 & 44.03 & 70.16 & 58.69 & 60.33 & 50.40 & 72.80 & 61.33 & 59.68 \\
\cmidrule{2-16}
& RTN & 1.39e+00 & 77.06 & 9.21e-01 & 12.33 & 9.08e-01 & 8.94 & 29.35 & 42.00 & 40.34 & 50.35 & 23.73 & 63.38 & 52.41 & 43.08 \\
& GPTQ & 3.82e-01 & 24.92 & 2.87e-01 & 10.69 & 1.85e-01 & 6.17 & 35.58 & 55.43 & 52.38 & 55.47 & 35.86 & 67.46 & 56.83 & 51.29 \\
& GuidedQ & 2.99e-01 & 19.59 & 2.26e-01 & 10.90 & 1.30e-01 & 5.75 & 40.96 & 70.16 & 49.78 & 57.40 & 40.60 & 70.89 & 59.98 & \textbf{55.68} \\
& GPTAQ & 2.58e-01 & 21.45 & 2.01e-01 & 11.31 & 1.08e-01 & 6.32 & 38.23 & 63.89 & 53.71 & 56.01 & 39.78 & 70.02 & 58.64 & 54.33 \\
& \cellcolor{g2ptqblue!12}\name & \cellcolor{g2ptqblue!12}2.26e-01 & \cellcolor{g2ptqblue!12}18.94 & \cellcolor{g2ptqblue!12}1.78e-01 & \cellcolor{g2ptqblue!12}11.28 & \cellcolor{g2ptqblue!12}9.12e-02 & \cellcolor{g2ptqblue!12}5.81 & \cellcolor{g2ptqblue!12}38.65 & \cellcolor{g2ptqblue!12}66.08 & \cellcolor{g2ptqblue!12}43.76 & \cellcolor{g2ptqblue!12}57.58 & \cellcolor{g2ptqblue!12}42.85 & \cellcolor{g2ptqblue!12}70.35 & \cellcolor{g2ptqblue!12}59.43 & \cellcolor{g2ptqblue!12}54.10 \\
& \cellcolor{g2ptqblue!12}\name\textsuperscript{*} & \cellcolor{g2ptqblue!12}\textbf{2.11e-01} & \cellcolor{g2ptqblue!12}\textbf{18.11} & \cellcolor{g2ptqblue!12}\textbf{1.62e-01} & \cellcolor{g2ptqblue!12}\textbf{10.43} & \cellcolor{g2ptqblue!12}\textbf{8.56e-02} & \cellcolor{g2ptqblue!12}\textbf{5.65} & \cellcolor{g2ptqblue!12}40.44 & \cellcolor{g2ptqblue!12}65.70 & \cellcolor{g2ptqblue!12}46.66 & \cellcolor{g2ptqblue!12}57.77 & \cellcolor{g2ptqblue!12}44.83 & \cellcolor{g2ptqblue!12}70.18 & \cellcolor{g2ptqblue!12}60.06 & \cellcolor{g2ptqblue!12}55.09 \\
\midrule

\multirow{7}{*}{\begin{tabular}{@{}c@{}}\textbf{Qwen3} \\ \textbf{4B}\end{tabular}}
& BF16 & 0 & 13.66 & 0 & 8.03 & 0 & 5.14 & 54.18 & 78.07 & 70.43 & 68.50 & 59.50 & 74.92 & 65.67 & 67.32 \\
\cmidrule{2-16}
& RTN & 3.60e-01 & 16.48 & 3.14e-01 & \textbf{6.43} & 2.38e-01 & \textbf{4.67} & 45.65 & 65.70 & 62.63 & 64.51 & 49.99 & 73.29 & 61.64 & 60.49 \\
& GPTQ & 2.34e-01 & \textbf{15.10} & 1.75e-01 & 6.95 & 1.20e-01 & 4.80 & 48.38 & 73.15 & 62.78 & 66.05 & 57.87 & 73.67 & 64.96 & 63.84 \\
& GuidedQ & \textbf{1.92e-01} & 16.70 & 1.25e-01 & 7.67 & 7.53e-02 & 4.95 & 48.21 & 72.14 & 65.08 & 65.93 & 57.71 & 74.21 & 64.17 & 63.92 \\
& GPTAQ & 2.18e-01 & 17.00 & 1.56e-01 & 8.51 & 9.47e-02 & 5.55 & 50.00 & 75.25 & 63.89 & 64.92 & 58.41 & 73.94 & 62.83 & 64.18 \\
& \cellcolor{g2ptqblue!12}\name & \cellcolor{g2ptqblue!12}1.94e-01 & \cellcolor{g2ptqblue!12}16.08 & \cellcolor{g2ptqblue!12}1.14e-01 & \cellcolor{g2ptqblue!12}7.78 & \cellcolor{g2ptqblue!12}5.77e-02 & \cellcolor{g2ptqblue!12}5.12 & \cellcolor{g2ptqblue!12}50.00 & \cellcolor{g2ptqblue!12}73.86 & \cellcolor{g2ptqblue!12}62.70 & \cellcolor{g2ptqblue!12}65.65 & \cellcolor{g2ptqblue!12}58.49 & \cellcolor{g2ptqblue!12}74.43 & \cellcolor{g2ptqblue!12}64.48 & \cellcolor{g2ptqblue!12}64.23 \\
& \cellcolor{g2ptqblue!12}\name\textsuperscript{*} & \cellcolor{g2ptqblue!12}1.95e-01 & \cellcolor{g2ptqblue!12}16.13 & \cellcolor{g2ptqblue!12}\textbf{1.05e-01} & \cellcolor{g2ptqblue!12}7.55 & \cellcolor{g2ptqblue!12}\textbf{4.86e-02} & \cellcolor{g2ptqblue!12}4.87 & \cellcolor{g2ptqblue!12}50.77 & \cellcolor{g2ptqblue!12}75.84 & \cellcolor{g2ptqblue!12}66.20 & \cellcolor{g2ptqblue!12}65.79 & \cellcolor{g2ptqblue!12}58.20 & \cellcolor{g2ptqblue!12}74.70 & \cellcolor{g2ptqblue!12}64.64 & \cellcolor{g2ptqblue!12}\textbf{65.16} \\
\midrule

\multirow{7}{*}{\begin{tabular}{@{}c@{}}\textbf{Qwen3} \\ \textbf{8B}\end{tabular}}
& BF16 & 0 & 9.72 & 0 & 5.20 & 0 & 3.60 & 56.14 & 80.77 & 79.57 & 74.98 & 64.10 & 77.97 & 67.96 & 71.64 \\
\cmidrule{2-16}
& RTN & 2.16e-01 & 10.66 & 1.86e-01 & \textbf{5.19} & 1.22e-01 & \textbf{3.54} & 50.68 & 73.95 & 72.96 & 71.60 & 55.50 & 77.31 & 68.43 & 67.20 \\
& GPTQ & 6.60e-02 & \textbf{9.92} & 4.02e-02 & 5.27 & 1.98e-02 & 3.64 & 54.35 & 79.21 & 76.60 & 73.69 & 62.95 & 77.26 & 69.06 & 70.45 \\
& GuidedQ & 6.43e-02 & 10.07 & 3.66e-02 & 5.30 & \textbf{1.87e-02} & 3.64 & 55.97 & 79.59 & 76.30 & 73.77 & 63.98 & 77.37 & 67.88 & 70.69 \\
& GPTAQ & 6.67e-02 & 10.19 & 4.11e-02 & 5.35 & 2.02e-02 & 3.67 & 55.29 & 79.46 & 76.52 & 73.20 & 63.03 & 77.15 & 69.38 & 70.58 \\
& \cellcolor{g2ptqblue!12}\name & \cellcolor{g2ptqblue!12}6.38e-02 & \cellcolor{g2ptqblue!12}10.05 & \cellcolor{g2ptqblue!12}3.85e-02 & \cellcolor{g2ptqblue!12}5.27 & \cellcolor{g2ptqblue!12}2.15e-02 & \cellcolor{g2ptqblue!12}3.63 & \cellcolor{g2ptqblue!12}55.80 & \cellcolor{g2ptqblue!12}80.30 & \cellcolor{g2ptqblue!12}76.23 & \cellcolor{g2ptqblue!12}73.97 & \cellcolor{g2ptqblue!12}63.32 & \cellcolor{g2ptqblue!12}77.09 & \cellcolor{g2ptqblue!12}68.51 & \cellcolor{g2ptqblue!12}70.75 \\
& \cellcolor{g2ptqblue!12}\name\textsuperscript{*} & \cellcolor{g2ptqblue!12}\textbf{6.15e-02} & \cellcolor{g2ptqblue!12}10.03 & \cellcolor{g2ptqblue!12}\textbf{3.63e-02} & \cellcolor{g2ptqblue!12}5.21 & \cellcolor{g2ptqblue!12}2.07e-02 & \cellcolor{g2ptqblue!12}3.62 & \cellcolor{g2ptqblue!12}56.91 & \cellcolor{g2ptqblue!12}80.93 & \cellcolor{g2ptqblue!12}78.16 & \cellcolor{g2ptqblue!12}73.51 & \cellcolor{g2ptqblue!12}64.64 & \cellcolor{g2ptqblue!12}77.80 & \cellcolor{g2ptqblue!12}68.98 & \cellcolor{g2ptqblue!12}\textbf{71.56} \\
\midrule

\multirow{7}{*}{\begin{tabular}{@{}c@{}}\textbf{Qwen3} \\ \textbf{14B}\end{tabular}}
& BF16 & 0 & 8.65 & 0 & 4.80 & 0 & 3.36 & 60.49 & 82.74 & 82.32 & 78.80 & 67.90 & 80.03 & 73.01 & 75.04 \\
\cmidrule{2-16}
& RTN & 1.86e-01 & 9.80 & 1.60e-01 & 4.96 & 1.10e-01 & 3.44 & 60.32 & 79.76 & 78.16 & 76.27 & 65.36 & 77.58 & 67.64 & 72.16 \\
& GPTQ & 4.60e-02 & 8.92 & 2.75e-02 & \textbf{4.81} & 1.20e-02 & 3.38 & 58.79 & 81.06 & 81.43 & 78.04 & 67.96 & 79.82 & 71.27 & 74.05 \\
& GuidedQ & 4.62e-02 & 8.88 & 2.65e-02 & 4.83 & 1.18e-02 & 3.39 & 60.67 & 81.86 & 81.43 & 78.10 & 67.61 & 79.92 & 72.45 & 74.58 \\
& GPTAQ & 4.52e-02 & 8.98 & 2.57e-02 & 4.83 & 1.09e-02 & 3.40 & 61.18 & 82.66 & 80.83 & 77.76 & 67.15 & 79.49 & 72.45 & 74.50 \\
& \cellcolor{g2ptqblue!12}\name & \cellcolor{g2ptqblue!12}4.30e-02 & \cellcolor{g2ptqblue!12}8.87 & \cellcolor{g2ptqblue!12}2.44e-02 & \cellcolor{g2ptqblue!12}4.83 & \cellcolor{g2ptqblue!12}1.09e-02 & \cellcolor{g2ptqblue!12}3.37 & \cellcolor{g2ptqblue!12}61.18 & \cellcolor{g2ptqblue!12}82.79 & \cellcolor{g2ptqblue!12}80.98 & \cellcolor{g2ptqblue!12}78.52 & \cellcolor{g2ptqblue!12}67.34 & \cellcolor{g2ptqblue!12}79.33 & \cellcolor{g2ptqblue!12}72.06 & \cellcolor{g2ptqblue!12}\textbf{74.60} \\
& \cellcolor{g2ptqblue!12}\name\textsuperscript{*} & \cellcolor{g2ptqblue!12}\textbf{4.24e-02} & \cellcolor{g2ptqblue!12}\textbf{8.83} & \cellcolor{g2ptqblue!12}\textbf{2.27e-02} & \cellcolor{g2ptqblue!12}\textbf{4.81} & \cellcolor{g2ptqblue!12}\textbf{9.47e-03} & \cellcolor{g2ptqblue!12}\textbf{3.36} & \cellcolor{g2ptqblue!12}59.98 & \cellcolor{g2ptqblue!12}81.94 & \cellcolor{g2ptqblue!12}81.13 & \cellcolor{g2ptqblue!12}78.17 & \cellcolor{g2ptqblue!12}67.13 & \cellcolor{g2ptqblue!12}80.20 & \cellcolor{g2ptqblue!12}72.38 & \cellcolor{g2ptqblue!12}74.42 \\
\midrule

\multirow{7}{*}{\begin{tabular}{@{}c@{}}\textbf{Qwen3} \\ \textbf{32B}\end{tabular}}
& BF16 & 0 & 7.61 & 0 & 4.05 & 0 & 2.80 & 61.01 & 83.21 & 86.03 & 82.60 & 67.13 & 82.05 & 73.24 & 76.47 \\
\cmidrule{2-16}
& RTN & 2.49e-01 & 8.54 & 1.49e-01 & 4.27 & 1.45e-01 & 2.99 & 56.14 & 76.94 & 81.05 & 80.44 & 62.78 & 80.09 & 70.72 & 72.59 \\
& GPTQ & 8.16e-02 & 7.91 & 2.61e-02 & 4.08 & 2.66e-02 & 2.85 & 59.39 & 83.21 & 85.44 & 82.05 & 68.06 & 81.45 & 71.27 & 75.84 \\
& GuidedQ & 8.23e-02 & 7.86 & \textbf{2.60e-02} & 4.08 & \textbf{2.53e-02} & 2.84 & 61.60 & 83.92 & 85.66 & 81.89 & 67.79 & 81.01 & 71.67 & 76.22 \\
& GPTAQ & 1.22e-01 & 8.11 & 4.99e-02 & 4.11 & 5.36e-02 & 2.88 & 60.41 & 82.32 & 84.32 & 81.31 & 68.64 & 81.12 & 72.30 & 75.77 \\
& \cellcolor{g2ptqblue!12}\name & \cellcolor{g2ptqblue!12}\textbf{7.45e-02} & \cellcolor{g2ptqblue!12}7.78 & \cellcolor{g2ptqblue!12}2.63e-02 & \cellcolor{g2ptqblue!12}4.07 & \cellcolor{g2ptqblue!12}3.03e-02 & \cellcolor{g2ptqblue!12}2.84 & \cellcolor{g2ptqblue!12}61.35 & \cellcolor{g2ptqblue!12}82.62 & \cellcolor{g2ptqblue!12}85.81 & \cellcolor{g2ptqblue!12}81.99 & \cellcolor{g2ptqblue!12}67.48 & \cellcolor{g2ptqblue!12}81.61 & \cellcolor{g2ptqblue!12}72.45 & \cellcolor{g2ptqblue!12}76.19 \\
& \cellcolor{g2ptqblue!12}\name\textsuperscript{*} & \cellcolor{g2ptqblue!12}7.78e-02 & \cellcolor{g2ptqblue!12}\textbf{7.76} & \cellcolor{g2ptqblue!12}2.63e-02 & \cellcolor{g2ptqblue!12}\textbf{4.04} & \cellcolor{g2ptqblue!12}3.15e-02 & \cellcolor{g2ptqblue!12}\textbf{2.82} & \cellcolor{g2ptqblue!12}62.12 & \cellcolor{g2ptqblue!12}83.33 & \cellcolor{g2ptqblue!12}85.51 & \cellcolor{g2ptqblue!12}82.04 & \cellcolor{g2ptqblue!12}67.28 & \cellcolor{g2ptqblue!12}81.34 & \cellcolor{g2ptqblue!12}73.56 & \cellcolor{g2ptqblue!12}\textbf{76.45} \\
\midrule

\multirow{7}{*}{\begin{tabular}{@{}c@{}}\textbf{Qwen3.5} \\ \textbf{0.8B}\end{tabular}}
& BF16 & 0 & 17.18 & 0 & 5.54 & 0 & 3.40 & 37.63 & 61.49 & 48.89 & 49.54 & 43.59 & 69.31 & 57.62 & 52.58 \\
\cmidrule{2-16}
& RTN & 2.74e-01 & 23.24 & 2.25e-01 & 6.64 & 1.82e-01 & 3.70 & 34.30 & 58.67 & 35.88 & 46.62 & 32.12 & 67.52 & 56.99 & 47.44 \\
& GPTQ & 1.13e-01 & 18.76 & 9.03e-02 & 5.99 & 5.15e-02 & \textbf{3.47} & 35.07 & 59.68 & 42.94 & 47.93 & 38.25 & 68.01 & 58.01 & 49.98 \\
& GuidedQ & 1.12e-01 & 19.38 & 8.88e-02 & 6.01 & 5.10e-02 & 3.49 & 35.15 & 60.27 & 43.02 & 47.90 & 40.09 & 68.06 & 57.30 & 50.26 \\
& GPTAQ & 1.07e-01 & 19.50 & 8.46e-02 & 6.01 & 4.67e-02 & 3.50 & 35.58 & 58.59 & 43.54 & 47.75 & 38.89 & 68.44 & 58.17 & 50.14 \\
& \cellcolor{g2ptqblue!12}\name & \cellcolor{g2ptqblue!12}1.01e-01 & \cellcolor{g2ptqblue!12}18.73 & \cellcolor{g2ptqblue!12}7.90e-02 & \cellcolor{g2ptqblue!12}5.98 & \cellcolor{g2ptqblue!12}4.39e-02 & \cellcolor{g2ptqblue!12}3.50 & \cellcolor{g2ptqblue!12}34.64 & \cellcolor{g2ptqblue!12}58.29 & \cellcolor{g2ptqblue!12}41.08 & \cellcolor{g2ptqblue!12}48.12 & \cellcolor{g2ptqblue!12}40.91 & \cellcolor{g2ptqblue!12}68.66 & \cellcolor{g2ptqblue!12}59.35 & \cellcolor{g2ptqblue!12}50.15 \\
& \cellcolor{g2ptqblue!12}\name\textsuperscript{*} & \cellcolor{g2ptqblue!12}\textbf{9.19e-02} & \cellcolor{g2ptqblue!12}\textbf{18.21} & \cellcolor{g2ptqblue!12}\textbf{6.74e-02} & \cellcolor{g2ptqblue!12}\textbf{5.90} & \cellcolor{g2ptqblue!12}\textbf{3.53e-02} & \cellcolor{g2ptqblue!12}3.49 & \cellcolor{g2ptqblue!12}35.49 & \cellcolor{g2ptqblue!12}60.98 & \cellcolor{g2ptqblue!12}42.12 & \cellcolor{g2ptqblue!12}48.15 & \cellcolor{g2ptqblue!12}41.68 & \cellcolor{g2ptqblue!12}68.50 & \cellcolor{g2ptqblue!12}57.93 & \cellcolor{g2ptqblue!12}\textbf{50.69} \\
\midrule

\multirow{7}{*}{\begin{tabular}{@{}c@{}}\textbf{Qwen3.5} \\ \textbf{2B}\end{tabular}}
& BF16 & 0 & 12.10 & 0 & 4.66 & 0 & 2.99 & 41.47 & 65.95 & 59.06 & 62.11 & 53.31 & 72.69 & 62.67 & 59.61 \\
\cmidrule{2-16}
& RTN & 2.20e-01 & 15.05 & 1.84e-01 & 5.27 & 1.71e-01 & 3.37 & 40.27 & 64.77 & 45.99 & 59.47 & 47.04 & 72.52 & 62.83 & 56.13 \\
& GPTQ & 8.74e-02 & 13.07 & 6.38e-02 & 4.87 & 3.97e-02 & \textbf{3.07} & 38.31 & 60.86 & 55.35 & 60.18 & 48.69 & 71.27 & 61.33 & 56.57 \\
& GuidedQ & 8.85e-02 & 13.21 & 6.17e-02 & 4.88 & 4.46e-02 & 3.09 & 40.10 & 64.23 & 53.71 & 60.56 & 51.19 & 72.36 & 61.56 & 57.67 \\
& GPTAQ & 8.50e-02 & 13.23 & 5.86e-02 & 4.88 & 4.07e-02 & 3.09 & 38.65 & 59.68 & 55.79 & 60.00 & 49.25 & 71.93 & 61.56 & 56.69 \\
& \cellcolor{g2ptqblue!12}\name & \cellcolor{g2ptqblue!12}8.01e-02 & \cellcolor{g2ptqblue!12}\textbf{12.98} & \cellcolor{g2ptqblue!12}5.57e-02 & \cellcolor{g2ptqblue!12}4.87 & \cellcolor{g2ptqblue!12}3.90e-02 & \cellcolor{g2ptqblue!12}3.09 & \cellcolor{g2ptqblue!12}40.02 & \cellcolor{g2ptqblue!12}62.29 & \cellcolor{g2ptqblue!12}53.64 & \cellcolor{g2ptqblue!12}60.48 & \cellcolor{g2ptqblue!12}50.53 & \cellcolor{g2ptqblue!12}72.36 & \cellcolor{g2ptqblue!12}61.25 & \cellcolor{g2ptqblue!12}57.22 \\
& \cellcolor{g2ptqblue!12}\name\textsuperscript{*} & \cellcolor{g2ptqblue!12}\textbf{7.71e-02} & \cellcolor{g2ptqblue!12}13.05 & \cellcolor{g2ptqblue!12}\textbf{5.00e-02} & \cellcolor{g2ptqblue!12}\textbf{4.81} & \cellcolor{g2ptqblue!12}\textbf{3.40e-02} & \cellcolor{g2ptqblue!12}3.08 & \cellcolor{g2ptqblue!12}40.27 & \cellcolor{g2ptqblue!12}63.01 & \cellcolor{g2ptqblue!12}54.98 & \cellcolor{g2ptqblue!12}60.59 & \cellcolor{g2ptqblue!12}51.23 & \cellcolor{g2ptqblue!12}72.14 & \cellcolor{g2ptqblue!12}63.54 & \cellcolor{g2ptqblue!12}\textbf{57.97} \\
\midrule

\multirow{7}{*}{\begin{tabular}{@{}c@{}}\textbf{Qwen3.5} \\ \textbf{4B}\end{tabular}}
& BF16 & 0 & 9.58 & 0 & 4.33 & 0 & 2.72 & 54.27 & 75.46 & 72.51 & 73.11 & 64.74 & 77.97 & 69.93 & 69.71 \\
\cmidrule{2-16}
& RTN & 1.73e-01 & 10.94 & 1.55e-01 & 4.70 & 1.19e-01 & 2.94 & 50.09 & 67.51 & 69.54 & 70.91 & 61.07 & 76.28 & 67.25 & 66.09 \\
& GPTQ & 9.68e-02 & 10.60 & 6.59e-02 & 4.54 & 3.55e-02 & \textbf{2.76} & 54.86 & 75.59 & 70.43 & 71.68 & 63.34 & 77.20 & 70.09 & 69.03 \\
& GuidedQ & 9.80e-02 & 10.51 & 6.64e-02 & \textbf{4.45} & 4.23e-02 & 2.80 & 55.12 & 76.64 & 69.84 & 71.75 & 63.17 & 77.86 & 68.98 & 69.05 \\
& GPTAQ & 9.20e-02 & 10.53 & 6.02e-02 & 4.52 & 3.39e-02 & 2.78 & 54.95 & 77.06 & 71.17 & 71.62 & 62.74 & 77.42 & 68.59 & 69.08 \\
& \cellcolor{g2ptqblue!12}\name & \cellcolor{g2ptqblue!12}9.63e-02 & \cellcolor{g2ptqblue!12}10.64 & \cellcolor{g2ptqblue!12}5.49e-02 & \cellcolor{g2ptqblue!12}\textbf{4.45} & \cellcolor{g2ptqblue!12}3.23e-02 & \cellcolor{g2ptqblue!12}\textbf{2.76} & \cellcolor{g2ptqblue!12}54.61 & \cellcolor{g2ptqblue!12}76.98 & \cellcolor{g2ptqblue!12}70.06 & \cellcolor{g2ptqblue!12}71.69 & \cellcolor{g2ptqblue!12}62.82 & \cellcolor{g2ptqblue!12}77.91 & \cellcolor{g2ptqblue!12}68.90 & \cellcolor{g2ptqblue!12}69.00 \\
& \cellcolor{g2ptqblue!12}\name\textsuperscript{*} & \cellcolor{g2ptqblue!12}\textbf{8.20e-02} & \cellcolor{g2ptqblue!12}\textbf{10.27} & \cellcolor{g2ptqblue!12}\textbf{5.17e-02} & \cellcolor{g2ptqblue!12}4.49 & \cellcolor{g2ptqblue!12}\textbf{3.18e-02} & \cellcolor{g2ptqblue!12}\textbf{2.76} & \cellcolor{g2ptqblue!12}54.61 & \cellcolor{g2ptqblue!12}76.56 & \cellcolor{g2ptqblue!12}71.17 & \cellcolor{g2ptqblue!12}71.53 & \cellcolor{g2ptqblue!12}62.62 & \cellcolor{g2ptqblue!12}77.20 & \cellcolor{g2ptqblue!12}70.09 & \cellcolor{g2ptqblue!12}\textbf{69.11} \\
\midrule

\multirow{7}{*}{\begin{tabular}{@{}c@{}}\textbf{Qwen3.5} \\ \textbf{9B}\end{tabular}}
& BF16 & 0 & 8.65 & 0 & 3.78 & 0 & 2.48 & 55.46 & 74.37 & 78.75 & 78.08 & 70.04 & 79.98 & 72.93 & 72.80 \\
\cmidrule{2-16}
& RTN & 3.33e-01 & 9.29 & 2.82e-01 & 4.17 & 1.96e-01 & 2.80 & 51.02 & 73.53 & 71.03 & 74.47 & 62.86 & 78.45 & 70.56 & 68.85 \\
& GPTQ & 6.83e-02 & 8.84 & 2.84e-02 & 3.83 & 1.50e-02 & 2.50 & 53.92 & 74.37 & 78.01 & 77.15 & 70.42 & 79.98 & 72.93 & \textbf{72.40} \\
& GuidedQ & 7.31e-02 & 8.76 & 3.10e-02 & 3.85 & 1.77e-02 & 2.50 & 54.52 & 75.21 & 77.19 & 76.89 & 69.14 & 80.14 & 73.48 & 72.37 \\
& GPTAQ & 6.91e-02 & 8.77 & 2.74e-02 & 3.84 & 1.48e-02 & 2.50 & 55.03 & 74.71 & 77.56 & 76.89 & 69.24 & 80.03 & 72.14 & 72.23 \\
& \cellcolor{g2ptqblue!12}\name & \cellcolor{g2ptqblue!12}6.99e-02 & \cellcolor{g2ptqblue!12}\textbf{8.63} & \cellcolor{g2ptqblue!12}2.72e-02 & \cellcolor{g2ptqblue!12}3.83 & \cellcolor{g2ptqblue!12}1.57e-02 & \cellcolor{g2ptqblue!12}2.50 & \cellcolor{g2ptqblue!12}55.38 & \cellcolor{g2ptqblue!12}75.08 & \cellcolor{g2ptqblue!12}76.89 & \cellcolor{g2ptqblue!12}77.04 & \cellcolor{g2ptqblue!12}69.51 & \cellcolor{g2ptqblue!12}80.30 & \cellcolor{g2ptqblue!12}72.06 & \cellcolor{g2ptqblue!12}72.32 \\
& \cellcolor{g2ptqblue!12}\name\textsuperscript{*} & \cellcolor{g2ptqblue!12}\textbf{6.59e-02} & \cellcolor{g2ptqblue!12}8.79 & \cellcolor{g2ptqblue!12}\textbf{2.53e-02} & \cellcolor{g2ptqblue!12}\textbf{3.82} & \cellcolor{g2ptqblue!12}\textbf{1.40e-02} & \cellcolor{g2ptqblue!12}\textbf{2.49} & \cellcolor{g2ptqblue!12}53.84 & \cellcolor{g2ptqblue!12}74.07 & \cellcolor{g2ptqblue!12}78.45 & \cellcolor{g2ptqblue!12}76.88 & \cellcolor{g2ptqblue!12}68.74 & \cellcolor{g2ptqblue!12}80.25 & \cellcolor{g2ptqblue!12}72.93 & \cellcolor{g2ptqblue!12}72.17 \\
\midrule

\multirow{7}{*}{\begin{tabular}{@{}c@{}}\textbf{Qwen3.5} \\ \textbf{27B}\end{tabular}}
& BF16 & 0 & 6.90 & 0 & 3.55 & 0 & 2.28 & 61.26 & 79.67 & 85.44 & 83.34 & 75.16 & 82.05 & 78.69 & 77.94 \\
\cmidrule{2-16}
& RTN & 1.28e-01 & 7.39 & 6.56e-02 & 3.66 & 4.20e-02 & 2.32 & 59.98 & 80.89 & 84.03 & 82.22 & 73.16 & 81.45 & 76.87 & 76.94 \\
& GPTQ & \textbf{5.27e-02} & 7.01 & 1.89e-02 & \textbf{3.56} & 1.19e-02 & \textbf{2.29} & 61.77 & 80.35 & 84.70 & 82.97 & 75.59 & 82.21 & 78.85 & 78.06 \\
& GuidedQ & 5.91e-02 & 7.07 & 2.01e-02 & 3.57 & 1.33e-02 & \textbf{2.29} & 62.03 & 80.22 & 85.07 & 82.86 & 75.78 & 82.64 & 78.77 & \textbf{78.20} \\
& GPTAQ & 5.39e-02 & 7.08 & 1.91e-02 & 3.57 & 1.21e-02 & \textbf{2.29} & 61.09 & 80.13 & 84.47 & 82.98 & 75.30 & 82.37 & 78.77 & 77.87 \\
& \cellcolor{g2ptqblue!12}\name & \cellcolor{g2ptqblue!12}5.32e-02 & \cellcolor{g2ptqblue!12}\textbf{6.99} & \cellcolor{g2ptqblue!12}1.84e-02 & \cellcolor{g2ptqblue!12}\textbf{3.56} & \cellcolor{g2ptqblue!12}1.28e-02 & \cellcolor{g2ptqblue!12}\textbf{2.29} & \cellcolor{g2ptqblue!12}60.92 & \cellcolor{g2ptqblue!12}80.26 & \cellcolor{g2ptqblue!12}84.77 & \cellcolor{g2ptqblue!12}82.82 & \cellcolor{g2ptqblue!12}75.22 & \cellcolor{g2ptqblue!12}82.10 & \cellcolor{g2ptqblue!12}78.14 & \cellcolor{g2ptqblue!12}77.75 \\
& \cellcolor{g2ptqblue!12}\name\textsuperscript{*} & \cellcolor{g2ptqblue!12}5.37e-02 & \cellcolor{g2ptqblue!12}7.07 & \cellcolor{g2ptqblue!12}\textbf{1.79e-02} & \cellcolor{g2ptqblue!12}3.57 & \cellcolor{g2ptqblue!12}\textbf{1.15e-02} & \cellcolor{g2ptqblue!12}\textbf{2.29} & \cellcolor{g2ptqblue!12}61.77 & \cellcolor{g2ptqblue!12}79.00 & \cellcolor{g2ptqblue!12}84.55 & \cellcolor{g2ptqblue!12}82.81 & \cellcolor{g2ptqblue!12}75.26 & \cellcolor{g2ptqblue!12}82.59 & \cellcolor{g2ptqblue!12}78.45 & \cellcolor{g2ptqblue!12}77.78 \\
\midrule

\multirow{7}{*}{\begin{tabular}{@{}c@{}}\textbf{LLaMA3} \\ \textbf{8B}\end{tabular}}
& BF16 & 0 & 6.14 & 0 & 3.41 & 0 & 2.72 & 53.24 & 77.90 & 48.44 & 79.25 & 75.53 & 80.74 & 73.16 & 69.75 \\
\cmidrule{2-16}
& RTN & 2.04e-01 & 7.51 & 1.24e-01 & 3.80 & 1.08e-01 & 3.01 & 49.66 & 74.41 & 40.34 & 76.57 & 70.74 & 79.16 & 71.35 & 66.03 \\
& GPTQ & 8.42e-02 & 6.66 & 3.62e-02 & 3.53 & 2.92e-02 & 2.80 & 51.71 & 76.18 & 45.54 & 77.53 & 74.99 & 79.92 & 73.24 & 68.44 \\
& GuidedQ & 7.04e-02 & 6.59 & 2.88e-02 & 3.50 & 2.22e-02 & 2.78 & 51.28 & 76.89 & 46.51 & 78.15 & 75.98 & 79.98 & 72.45 & 68.75 \\
& GPTAQ & 7.54e-02 & 6.61 & 2.90e-02 & 3.50 & 2.41e-02 & 2.78 & 52.65 & 76.43 & 46.66 & 77.71 & 75.20 & 79.76 & 72.61 & 68.72 \\
& \cellcolor{g2ptqblue!12}\name & \cellcolor{g2ptqblue!12}\textbf{6.86e-02} & \cellcolor{g2ptqblue!12}6.57 & \cellcolor{g2ptqblue!12}2.67e-02 & \cellcolor{g2ptqblue!12}3.50 & \cellcolor{g2ptqblue!12}2.28e-02 & \cellcolor{g2ptqblue!12}\textbf{2.77} & \cellcolor{g2ptqblue!12}52.22 & \cellcolor{g2ptqblue!12}76.77 & \cellcolor{g2ptqblue!12}47.55 & \cellcolor{g2ptqblue!12}78.16 & \cellcolor{g2ptqblue!12}75.47 & \cellcolor{g2ptqblue!12}79.76 & \cellcolor{g2ptqblue!12}73.48 & \cellcolor{g2ptqblue!12}69.06 \\
& \cellcolor{g2ptqblue!12}\name\textsuperscript{*} & \cellcolor{g2ptqblue!12}6.89e-02 & \cellcolor{g2ptqblue!12}\textbf{6.56} & \cellcolor{g2ptqblue!12}\textbf{2.45e-02} & \cellcolor{g2ptqblue!12}\textbf{3.48} & \cellcolor{g2ptqblue!12}\textbf{1.90e-02} & \cellcolor{g2ptqblue!12}\textbf{2.77} & \cellcolor{g2ptqblue!12}52.30 & \cellcolor{g2ptqblue!12}78.11 & \cellcolor{g2ptqblue!12}48.96 & \cellcolor{g2ptqblue!12}77.95 & \cellcolor{g2ptqblue!12}74.89 & \cellcolor{g2ptqblue!12}79.92 & \cellcolor{g2ptqblue!12}73.72 & \cellcolor{g2ptqblue!12}\textbf{69.41} \\
\midrule

\multirow{7}{*}{\begin{tabular}{@{}c@{}}\textbf{LLaMA3} \\ \textbf{70B}\end{tabular}}
& BF16 & 0 & 2.86 & 0 & 3.05 & 0 & 2.41 & 64.16 & 85.86 & 65.08 & 84.95 & 79.45 & 84.49 & 80.43 & 77.77 \\
\cmidrule{2-16}
& RTN & 1.65e+00 & 15.00 & 7.62e-01 & 6.34 & 5.36e-01 & 4.01 & 41.04 & 68.52 & 24.37 & 61.47 & 45.31 & 71.82 & 67.32 & 54.26 \\
& GPTQ & 2.42e-01 & 3.63 & 2.16e-02 & 3.10 & 2.23e-02 & 2.45 & 62.80 & 84.97 & 62.18 & 84.69 & 79.10 & 84.60 & 80.03 & 76.91 \\
& GuidedQ & 1.76e-01 & 3.40 & 1.39e-02 & 3.09 & 1.19e-02 & 2.43 & 63.65 & 85.23 & 62.70 & 84.45 & 79.57 & 83.68 & 79.95 & 77.03 \\
& GPTAQ & 2.02e-01 & 3.49 & 1.64e-02 & 3.09 & 1.43e-02 & 2.43 & 63.14 & 85.23 & 63.00 & 84.80 & 79.33 & 83.84 & 79.87 & 77.03 \\
& \cellcolor{g2ptqblue!12}\name & \cellcolor{g2ptqblue!12}\textbf{1.69e-01} & \cellcolor{g2ptqblue!12}\textbf{3.38} & \cellcolor{g2ptqblue!12}\textbf{1.25e-02} & \cellcolor{g2ptqblue!12}\textbf{3.08} & \cellcolor{g2ptqblue!12}1.14e-02 & \cellcolor{g2ptqblue!12}2.43 & \cellcolor{g2ptqblue!12}62.97 & \cellcolor{g2ptqblue!12}85.52 & \cellcolor{g2ptqblue!12}63.60 & \cellcolor{g2ptqblue!12}84.62 & \cellcolor{g2ptqblue!12}79.06 & \cellcolor{g2ptqblue!12}84.28 & \cellcolor{g2ptqblue!12}79.87 & \cellcolor{g2ptqblue!12}77.13 \\
& \cellcolor{g2ptqblue!12}\name\textsuperscript{*} & \cellcolor{g2ptqblue!12}1.93e-01 & \cellcolor{g2ptqblue!12}3.46 & \cellcolor{g2ptqblue!12}1.33e-02 & \cellcolor{g2ptqblue!12}3.09 & \cellcolor{g2ptqblue!12}\textbf{1.09e-02} & \cellcolor{g2ptqblue!12}\textbf{2.42} & \cellcolor{g2ptqblue!12}63.31 & \cellcolor{g2ptqblue!12}85.52 & \cellcolor{g2ptqblue!12}63.08 & \cellcolor{g2ptqblue!12}84.76 & \cellcolor{g2ptqblue!12}79.18 & \cellcolor{g2ptqblue!12}84.33 & \cellcolor{g2ptqblue!12}81.06 & \cellcolor{g2ptqblue!12}\textbf{77.32} \\

\bottomrule
\end{tabular}
\end{adjustbox}
\vspace{-1ex}
\caption{4-bit weight-only quantization results across different model families.}
\label{tab:w4}
\end{table*}

%% file: tables/appendix/w4a4.tex
\begin{table*}[!t]
\begin{center}
\resizebox{1.0\linewidth}{!}
{
\begin{tabular}{c|l|cc|cc|cc|cccccccc}
\toprule
\multirow{2}{*}{\textbf{Model}} & \multirow{2}{*}{\textbf{Method}} & \multicolumn{2}{c|}{\textbf{WikiText2}} & \multicolumn{2}{c|}{\textbf{UltraChat}} & \multicolumn{2}{c|}{\textbf{NuminaMath}} & \multicolumn{8}{c}{\textbf{CommonSense QA}} \\
\cmidrule(lr){3-4} \cmidrule(lr){5-6} \cmidrule(lr){7-8} \cmidrule(lr){9-16}
& & \textbf{KL} & \textbf{PPL} & \textbf{KL} & \textbf{PPL} & \textbf{KL} & \textbf{PPL} & \textbf{ARC-C} & \textbf{ARC-E} & \textbf{C-eval} & \textbf{HellaS} & \textbf{LAMB} & \textbf{PIQA} & \textbf{Wino} & \textbf{Avg.} \\
\midrule

\multirow{7}{*}{\begin{tabular}{@{}c@{}}\textbf{Qwen3} \\ \textbf{8B}\end{tabular}}
& BF16 & 0 & 9.72 & 0 & 5.20 & 0 & 3.60 & 56.14 & 80.77 & 79.57 & 74.98 & 64.10 & 77.97 & 67.96 & 71.64 \\
\cmidrule{2-16}
& RTN & 4.16e-01 & 12.33 & 3.42e-01 & 5.58 & 2.41e-01 & 3.81 & 44.62 & 64.44 & 64.86 & 66.48 & 48.52 & 73.88 & 64.01 & 60.97 \\
& GPTQ & 2.49e-01 & 11.32 & 1.85e-01 & 5.56 & 1.24e-01 & 3.81 & 52.05 & 75.38 & 69.24 & 69.33 & 57.87 & 75.68 & 64.64 & 66.31 \\
& GuidedQ & 2.54e-01 & 11.44 & 1.87e-01 & 5.59 & 1.25e-01 & 3.83 & 49.83 & 75.29 & 67.16 & 69.21 & 58.33 & 74.81 & 63.54 & 65.45 \\
& GPTAQ & 2.56e-01 & 11.81 & 1.86e-01 & 5.82 & 1.23e-01 & 4.00 & 48.12 & 72.94 & 69.02 & 68.04 & 57.91 & 74.54 & 63.93 & 64.93 \\
& \cellcolor{g2ptqblue!12}\name & \cellcolor{g2ptqblue!12}2.25e-01 & \cellcolor{g2ptqblue!12}11.02 & \cellcolor{g2ptqblue!12}1.64e-01 & \cellcolor{g2ptqblue!12}5.43 & \cellcolor{g2ptqblue!12}1.11e-01 & \cellcolor{g2ptqblue!12}3.76 & \cellcolor{g2ptqblue!12}49.23 & \cellcolor{g2ptqblue!12}74.79 & \cellcolor{g2ptqblue!12}69.17 & \cellcolor{g2ptqblue!12}69.80 & \cellcolor{g2ptqblue!12}61.09 & \cellcolor{g2ptqblue!12}74.54 & \cellcolor{g2ptqblue!12}64.01 & \cellcolor{g2ptqblue!12}66.09 \\
& \cellcolor{g2ptqblue!12}\name\textsuperscript{*} & \cellcolor{g2ptqblue!12}\textbf{2.21e-01} & \cellcolor{g2ptqblue!12}\textbf{10.98} & \cellcolor{g2ptqblue!12}\textbf{1.61e-01} & \cellcolor{g2ptqblue!12}\textbf{5.32} & \cellcolor{g2ptqblue!12}\textbf{1.03e-01} & \cellcolor{g2ptqblue!12}\textbf{3.74} & \cellcolor{g2ptqblue!12}49.57 & \cellcolor{g2ptqblue!12}75.76 & \cellcolor{g2ptqblue!12}68.95 & \cellcolor{g2ptqblue!12}69.58 & \cellcolor{g2ptqblue!12}60.28 & \cellcolor{g2ptqblue!12}75.73 & \cellcolor{g2ptqblue!12}65.98 & \cellcolor{g2ptqblue!12}\textbf{66.55} \\
\midrule

\multirow{7}{*}{\begin{tabular}{@{}c@{}}\textbf{Qwen3} \\ \textbf{14B}\end{tabular}}
& BF16 & 0 & 8.65 & 0 & 4.80 & 0 & 3.36 & 60.49 & 82.74 & 82.32 & 78.80 & 67.90 & 80.03 & 73.01 & 75.04 \\
\cmidrule{2-16}
& RTN & 3.55e-01 & 11.02 & 2.87e-01 & 5.11 & 2.04e-01 & 3.55 & 52.90 & 74.79 & 72.51 & 72.18 & 60.86 & 76.71 & 67.01 & 68.14 \\
& GPTQ & 1.99e-01 & 9.83 & 1.44e-01 & 4.91 & 8.98e-02 & \textbf{3.39} & 54.86 & 76.98 & 76.97 & 74.55 & 63.32 & 77.53 & 67.40 & 70.23 \\
& GuidedQ & 1.98e-01 & 9.78 & 1.45e-01 & 4.91 & 9.16e-02 & 3.40 & 56.48 & 76.47 & 74.89 & 74.78 & 63.67 & 77.53 & 67.64 & 70.21 \\
& GPTAQ & 1.87e-01 & 9.97 & 1.28e-01 & 5.07 & 7.96e-02 & 3.54 & 55.12 & 77.61 & 76.08 & 74.43 & 63.48 & 77.09 & 68.82 & 70.38 \\
& \cellcolor{g2ptqblue!12}\name & \cellcolor{g2ptqblue!12}1.79e-01 & \cellcolor{g2ptqblue!12}9.69 & \cellcolor{g2ptqblue!12}1.22e-01 & \cellcolor{g2ptqblue!12}4.88 & \cellcolor{g2ptqblue!12}7.62e-02 & \cellcolor{g2ptqblue!12}3.47 & \cellcolor{g2ptqblue!12}54.86 & \cellcolor{g2ptqblue!12}78.28 & \cellcolor{g2ptqblue!12}74.89 & \cellcolor{g2ptqblue!12}75.39 & \cellcolor{g2ptqblue!12}64.99 & \cellcolor{g2ptqblue!12}78.67 & \cellcolor{g2ptqblue!12}68.67 & \cellcolor{g2ptqblue!12}70.82 \\
& \cellcolor{g2ptqblue!12}\name\textsuperscript{*} & \cellcolor{g2ptqblue!12}\textbf{1.73e-01} & \cellcolor{g2ptqblue!12}\textbf{9.53} & \cellcolor{g2ptqblue!12}\textbf{1.20e-01} & \cellcolor{g2ptqblue!12}\textbf{4.82} & \cellcolor{g2ptqblue!12}\textbf{7.27e-02} & \cellcolor{g2ptqblue!12}3.45 & \cellcolor{g2ptqblue!12}54.35 & \cellcolor{g2ptqblue!12}78.79 & \cellcolor{g2ptqblue!12}76.30 & \cellcolor{g2ptqblue!12}75.69 & \cellcolor{g2ptqblue!12}64.54 & \cellcolor{g2ptqblue!12}78.07 & \cellcolor{g2ptqblue!12}69.30 & \cellcolor{g2ptqblue!12}\textbf{71.01} \\
\midrule

\multirow{7}{*}{\begin{tabular}{@{}c@{}}\textbf{Qwen3} \\ \textbf{32B}\end{tabular}}
& BF16 & 0 & 7.61 & 0 & 4.05 & 0 & 2.80 & 61.01 & 83.21 & 86.03 & 82.60 & 67.13 & 82.05 & 73.24 & 76.47 \\
\cmidrule{2-16}
& RTN & 4.59e-01 & 9.83 & 2.76e-01 & 4.48 & 2.58e-01 & 3.18 & 51.79 & 71.38 & 73.40 & 76.77 & 58.18 & 76.55 & 64.56 & 67.52 \\
& GPTQ & 3.34e-01 & 9.11 & 1.67e-01 & 4.32 & 1.58e-01 & 3.01 & 56.83 & 77.02 & 77.86 & 77.72 & 63.79 & 78.89 & 69.30 & 71.63 \\
& GuidedQ & 3.39e-01 & 9.11 & 1.68e-01 & 4.31 & 1.59e-01 & 3.01 & 56.91 & 77.27 & 77.56 & 77.52 & 63.69 & 77.26 & 66.38 & 70.94 \\
& GPTAQ & 3.34e-01 & 9.23 & 1.57e-01 & 4.27 & 1.49e-01 & 3.06 & 52.73 & 76.56 & 77.12 & 77.50 & 65.57 & 77.75 & 66.30 & 70.50 \\
& \cellcolor{g2ptqblue!12}\name & \cellcolor{g2ptqblue!12}2.72e-01 & \cellcolor{g2ptqblue!12}8.68 & \cellcolor{g2ptqblue!12}1.36e-01 & \cellcolor{g2ptqblue!12}4.21 & \cellcolor{g2ptqblue!12}1.23e-01 & \cellcolor{g2ptqblue!12}2.94 & \cellcolor{g2ptqblue!12}57.42 & \cellcolor{g2ptqblue!12}80.01 & \cellcolor{g2ptqblue!12}80.46 & \cellcolor{g2ptqblue!12}79.62 & \cellcolor{g2ptqblue!12}66.52 & \cellcolor{g2ptqblue!12}79.54 & \cellcolor{g2ptqblue!12}70.01 & \cellcolor{g2ptqblue!12}\textbf{73.37} \\
& \cellcolor{g2ptqblue!12}\name\textsuperscript{*} & \cellcolor{g2ptqblue!12}\textbf{2.61e-01} & \cellcolor{g2ptqblue!12}\textbf{8.57} & \cellcolor{g2ptqblue!12}\textbf{1.32e-01} & \cellcolor{g2ptqblue!12}\textbf{4.19} & \cellcolor{g2ptqblue!12}\textbf{1.16e-01} & \cellcolor{g2ptqblue!12}\textbf{2.93} & \cellcolor{g2ptqblue!12}56.14 & \cellcolor{g2ptqblue!12}80.13 & \cellcolor{g2ptqblue!12}81.05 & \cellcolor{g2ptqblue!12}79.45 & \cellcolor{g2ptqblue!12}65.30 & \cellcolor{g2ptqblue!12}80.79 & \cellcolor{g2ptqblue!12}70.09 & \cellcolor{g2ptqblue!12}73.28 \\
\midrule

\multirow{7}{*}{\begin{tabular}{@{}c@{}}\textbf{Qwen3.5} \\ \textbf{9B}\end{tabular}}
& BF16 & 0 & 8.65 & 0 & 3.78 & 0 & 2.48 & 55.46 & 74.37 & 78.75 & 78.08 & 70.04 & 79.98 & 72.93 & 72.80 \\
\cmidrule{2-16}
& RTN & 2.76e+00 & 108.18 & 2.62e+00 & 37.41 & 1.50e+00 & 10.03 & 25.09 & 48.48 & 27.04 & 33.14 & 11.60 & 59.30 & 52.01 & 36.67 \\
& GPTQ & 9.87e-01 & 19.21 & 6.85e-01 & 6.10 & 4.22e-01 & 3.56 & 42.15 & 66.29 & 42.87 & 59.69 & 49.68 & 71.33 & 59.43 & 55.92 \\
& GuidedQ & 9.99e-01 & 19.36 & 6.92e-01 & 6.14 & 4.34e-01 & 3.62 & 41.55 & 64.48 & 41.60 & 59.40 & 49.72 & 70.89 & 61.33 & 55.57 \\
& GPTAQ & 9.18e-01 & 19.04 & 4.84e-01 & 5.44 & 2.85e-01 & 3.17 & 42.15 & 66.58 & 46.21 & 61.65 & 52.57 & 70.95 & 61.96 & 57.44 \\
& \cellcolor{g2ptqblue!12}\name & \cellcolor{g2ptqblue!12}7.57e-01 & \cellcolor{g2ptqblue!12}\textbf{15.22} & \cellcolor{g2ptqblue!12}4.07e-01 & \cellcolor{g2ptqblue!12}5.07 & \cellcolor{g2ptqblue!12}2.63e-01 & \cellcolor{g2ptqblue!12}3.11 & \cellcolor{g2ptqblue!12}44.45 & \cellcolor{g2ptqblue!12}70.50 & \cellcolor{g2ptqblue!12}49.11 & \cellcolor{g2ptqblue!12}65.77 & \cellcolor{g2ptqblue!12}56.37 & \cellcolor{g2ptqblue!12}73.01 & \cellcolor{g2ptqblue!12}64.17 & \cellcolor{g2ptqblue!12}60.48 \\
& \cellcolor{g2ptqblue!12}\name\textsuperscript{*} & \cellcolor{g2ptqblue!12}\textbf{7.48e-01} & \cellcolor{g2ptqblue!12}15.28 & \cellcolor{g2ptqblue!12}\textbf{3.86e-01} & \cellcolor{g2ptqblue!12}\textbf{4.90} & \cellcolor{g2ptqblue!12}\textbf{2.37e-01} & \cellcolor{g2ptqblue!12}\textbf{3.04} & \cellcolor{g2ptqblue!12}46.50 & \cellcolor{g2ptqblue!12}69.74 & \cellcolor{g2ptqblue!12}50.82 & \cellcolor{g2ptqblue!12}64.98 & \cellcolor{g2ptqblue!12}56.28 & \cellcolor{g2ptqblue!12}73.61 & \cellcolor{g2ptqblue!12}64.17 & \cellcolor{g2ptqblue!12}\textbf{60.87} \\
\midrule

\multirow{7}{*}{\begin{tabular}{@{}c@{}}\textbf{Qwen3.5} \\ \textbf{27B}\end{tabular}}
& BF16 & 0 & 6.90 & 0 & 3.55 & 0 & 2.28 & 61.26 & 79.67 & 85.44 & 83.34 & 75.16 & 82.05 & 78.69 & 77.94 \\
\cmidrule{2-16}
& RTN & 1.01e+00 & 17.05 & 6.59e-01 & 5.53 & 2.72e-01 & 2.79 & 47.27 & 70.03 & 57.65 & 65.06 & 48.90 & 71.00 & 63.61 & 60.50 \\
& GPTQ & 6.77e-01 & 12.32 & 4.31e-01 & 4.64 & 1.85e-01 & 2.58 & 50.68 & 75.97 & 66.94 & 73.00 & 63.52 & 76.44 & 68.82 & 67.91 \\
& GuidedQ & 6.91e-01 & 12.56 & 4.35e-01 & 4.68 & 1.87e-01 & 2.59 & 51.96 & 74.58 & 65.60 & 72.72 & 64.08 & 75.24 & 68.98 & 67.59 \\
& GPTAQ & 5.23e-01 & 11.00 & 2.54e-01 & 4.18 & 1.41e-01 & 2.55 & 54.35 & 75.34 & 67.16 & 73.60 & 68.56 & 76.77 & 72.06 & 69.69 \\
& \cellcolor{g2ptqblue!12}\name & \cellcolor{g2ptqblue!12}4.40e-01 & \cellcolor{g2ptqblue!12}9.89 & \cellcolor{g2ptqblue!12}\textbf{2.30e-01} & \cellcolor{g2ptqblue!12}\textbf{4.00} & \cellcolor{g2ptqblue!12}1.17e-01 & \cellcolor{g2ptqblue!12}\textbf{2.49} & \cellcolor{g2ptqblue!12}54.52 & \cellcolor{g2ptqblue!12}78.49 & \cellcolor{g2ptqblue!12}72.88 & \cellcolor{g2ptqblue!12}76.02 & \cellcolor{g2ptqblue!12}65.55 & \cellcolor{g2ptqblue!12}78.40 & \cellcolor{g2ptqblue!12}72.14 & \cellcolor{g2ptqblue!12}\textbf{71.14} \\
& \cellcolor{g2ptqblue!12}\name\textsuperscript{*} & \cellcolor{g2ptqblue!12}\textbf{4.35e-01} & \cellcolor{g2ptqblue!12}\textbf{9.80} & \cellcolor{g2ptqblue!12}\textbf{2.30e-01} & \cellcolor{g2ptqblue!12}4.01 & \cellcolor{g2ptqblue!12}\textbf{1.14e-01} & \cellcolor{g2ptqblue!12}2.50 & \cellcolor{g2ptqblue!12}54.69 & \cellcolor{g2ptqblue!12}75.88 & \cellcolor{g2ptqblue!12}71.77 & \cellcolor{g2ptqblue!12}75.01 & \cellcolor{g2ptqblue!12}65.57 & \cellcolor{g2ptqblue!12}76.88 & \cellcolor{g2ptqblue!12}71.98 & \cellcolor{g2ptqblue!12}70.25 \\
\midrule

\multirow{7}{*}{\begin{tabular}{@{}c@{}}\textbf{LLaMA3} \\ \textbf{8B}\end{tabular}}
& BF16 & 0 & 6.14 & 0 & 3.41 & 0 & 2.72 & 53.24 & 77.90 & 48.44 & 79.25 & 75.53 & 80.74 & 73.16 & 69.75 \\
\cmidrule{2-16}
& RTN & 4.19e-01 & 9.35 & 2.87e-01 & 4.44 & 2.21e-01 & 3.37 & 42.06 & 66.12 & 32.62 & 72.31 & 62.24 & 75.35 & 66.22 & 59.56 \\
& GPTQ & 2.83e-01 & 8.15 & 1.65e-01 & 3.97 & 1.31e-01 & 3.09 & 45.31 & 70.54 & 37.37 & 74.16 & 69.09 & 76.22 & 69.30 & 63.14 \\
& GuidedQ & 2.68e-01 & 8.03 & 1.55e-01 & 3.94 & 1.23e-01 & 3.06 & 47.61 & 72.77 & 38.56 & 74.62 & 70.35 & 76.71 & 68.98 & 64.23 \\
& GPTAQ & \textbf{2.34e-01} & \textbf{7.75} & \textbf{1.21e-01} & \textbf{3.83} & 1.04e-01 & 3.00 & 50.00 & 74.96 & 38.86 & 74.72 & 70.37 & 77.53 & 69.53 & 65.14 \\
& \cellcolor{g2ptqblue!12}\name & \cellcolor{g2ptqblue!12}2.36e-01 & \cellcolor{g2ptqblue!12}\textbf{7.75} & \cellcolor{g2ptqblue!12}1.29e-01 & \cellcolor{g2ptqblue!12}3.85 & \cellcolor{g2ptqblue!12}1.05e-01 & \cellcolor{g2ptqblue!12}2.99 & \cellcolor{g2ptqblue!12}45.48 & \cellcolor{g2ptqblue!12}75.08 & \cellcolor{g2ptqblue!12}40.12 & \cellcolor{g2ptqblue!12}75.20 & \cellcolor{g2ptqblue!12}73.06 & \cellcolor{g2ptqblue!12}77.75 & \cellcolor{g2ptqblue!12}68.75 & \cellcolor{g2ptqblue!12}65.06 \\
& \cellcolor{g2ptqblue!12}\name\textsuperscript{*} & \cellcolor{g2ptqblue!12}2.39e-01 & \cellcolor{g2ptqblue!12}7.79 & \cellcolor{g2ptqblue!12}1.28e-01 & \cellcolor{g2ptqblue!12}3.84 & \cellcolor{g2ptqblue!12}\textbf{1.00e-01} & \cellcolor{g2ptqblue!12}\textbf{2.97} & \cellcolor{g2ptqblue!12}47.53 & \cellcolor{g2ptqblue!12}74.75 & \cellcolor{g2ptqblue!12}42.12 & \cellcolor{g2ptqblue!12}74.32 & \cellcolor{g2ptqblue!12}70.97 & \cellcolor{g2ptqblue!12}78.35 & \cellcolor{g2ptqblue!12}69.14 & \cellcolor{g2ptqblue!12}\textbf{65.31} \\
\midrule

\multirow{7}{*}{\begin{tabular}{@{}c@{}}\textbf{LLaMA3} \\ \textbf{70B}\end{tabular}}
& BF16 & 0 & 2.86 & 0 & 3.05 & 0 & 2.41 & 64.16 & 85.86 & 65.08 & 84.95 & 79.45 & 84.49 & 80.43 & 77.77 \\
\cmidrule{2-16}
& RTN & 2.60e+00 & 39.28 & 1.72e+00 & 16.29 & 1.16e+00 & 7.53 & 21.93 & 37.71 & 23.11 & 34.13 & 16.53 & 57.34 & 49.17 & 34.27 \\
& GPTQ & 1.12e+00 & 8.87 & 4.76e-01 & 4.75 & 3.04e-01 & 3.19 & 43.86 & 66.67 & 26.97 & 68.44 & 60.55 & 75.08 & 64.96 & 58.08 \\
& GuidedQ & 1.03e+00 & 8.03 & 3.95e-01 & 4.40 & 2.82e-01 & 3.13 & 44.11 & 69.19 & 29.64 & 70.00 & 64.43 & 74.97 & 70.40 & 60.39 \\
& GPTAQ & 8.42e-01 & 6.64 & 1.29e-01 & 3.42 & 1.47e-01 & 2.74 & 50.00 & 77.86 & 44.06 & 76.36 & 75.37 & 79.22 & 75.77 & 68.38 \\
& \cellcolor{g2ptqblue!12}\name & \cellcolor{g2ptqblue!12}\textbf{6.49e-01} & \cellcolor{g2ptqblue!12}\textbf{5.47} & \cellcolor{g2ptqblue!12}\textbf{9.09e-02} & \cellcolor{g2ptqblue!12}\textbf{3.31} & \cellcolor{g2ptqblue!12}\textbf{9.22e-02} & \cellcolor{g2ptqblue!12}\textbf{2.59} & \cellcolor{g2ptqblue!12}55.97 & \cellcolor{g2ptqblue!12}81.06 & \cellcolor{g2ptqblue!12}53.86 & \cellcolor{g2ptqblue!12}82.14 & \cellcolor{g2ptqblue!12}78.50 & \cellcolor{g2ptqblue!12}81.50 & \cellcolor{g2ptqblue!12}77.58 & \cellcolor{g2ptqblue!12}\textbf{72.94} \\
& \cellcolor{g2ptqblue!12}\name\textsuperscript{*} & \cellcolor{g2ptqblue!12}7.74e-01 & \cellcolor{g2ptqblue!12}6.23 & \cellcolor{g2ptqblue!12}1.07e-01 & \cellcolor{g2ptqblue!12}3.36 & \cellcolor{g2ptqblue!12}1.19e-01 & \cellcolor{g2ptqblue!12}2.66 & \cellcolor{g2ptqblue!12}54.61 & \cellcolor{g2ptqblue!12}81.31 & \cellcolor{g2ptqblue!12}52.60 & \cellcolor{g2ptqblue!12}80.64 & \cellcolor{g2ptqblue!12}76.89 & \cellcolor{g2ptqblue!12}81.12 & \cellcolor{g2ptqblue!12}75.14 & \cellcolor{g2ptqblue!12}71.76 \\

\bottomrule
\end{tabular}
}
\end{center}
\vspace{-1ex}
\caption{4-bit weight-activation quantization results across different model families.}
\label{tab:w4a4}
\end{table*}

%% file: tables/appendix/moe.tex
\begin{table*}[!t]
\begin{center}
\resizebox{1.0\linewidth}{!}
{
\begin{tabular}{c|l|cc|cc|cc|cccccccc}
\toprule
\multirow{2}{*}{\textbf{Model}} & \multirow{2}{*}{\textbf{Method}} & \multicolumn{2}{c|}{\textbf{WikiText2}} & \multicolumn{2}{c|}{\textbf{UltraChat}} & \multicolumn{2}{c|}{\textbf{NuminaMath}} & \multicolumn{8}{c}{\textbf{CommonSense QA}} \\
\cmidrule(lr){3-4} \cmidrule(lr){5-6} \cmidrule(lr){7-8} \cmidrule(lr){9-16}
& & \textbf{KL} & \textbf{PPL} & \textbf{KL} & \textbf{PPL} & \textbf{KL} & \textbf{PPL} & \textbf{ARC-C} & \textbf{ARC-E} & \textbf{C-eval} & \textbf{HellaS} & \textbf{LAMB} & \textbf{PIQA} & \textbf{Wino} & \textbf{Avg.} \\
\midrule

\multirow{4}{*}{\begin{tabular}{@{}c@{}}\textbf{Qwen3} \\ \textbf{30B-A3B} \\ \textbf{(W2A16)}\end{tabular}}
& BF16 & 0 & 8.70 & 0 & 4.79 & 0 & 3.41 & 56.66 & 78.96 & 84.32 & 77.74 & 64.80 & 80.58 & 71.11 & 73.45 \\
\cmidrule{2-16}
& GPTQ & 9.83e-01 & 20.13 & 6.24e-01 & 6.84 & 4.42e-01 & 4.65 & 38.99 & 56.78 & 44.73 & 57.62 & 49.82 & 70.24 & 56.12 & 53.47 \\
& \cellcolor{g2ptqblue!12}\name
& \cellcolor{g2ptqblue!12}4.93e-01
& \cellcolor{g2ptqblue!12}13.89
& \cellcolor{g2ptqblue!12}2.86e-01
& \cellcolor{g2ptqblue!12}5.68
& \cellcolor{g2ptqblue!12}2.48e-01
& \cellcolor{g2ptqblue!12}4.18
& \cellcolor{g2ptqblue!12}{45.48}
& \cellcolor{g2ptqblue!12}{67.26}
& \cellcolor{g2ptqblue!12}59.73
& \cellcolor{g2ptqblue!12}66.61
& \cellcolor{g2ptqblue!12}53.79
& \cellcolor{g2ptqblue!12}{76.06}
& \cellcolor{g2ptqblue!12}63.93
& \cellcolor{g2ptqblue!12}61.84 \\
& \cellcolor{g2ptqblue!12}\name$^{*}$
& \cellcolor{g2ptqblue!12}\textbf{4.27e-01}
& \cellcolor{g2ptqblue!12}\textbf{12.73}
& \cellcolor{g2ptqblue!12}\textbf{2.13e-01}
& \cellcolor{g2ptqblue!12}\textbf{5.23}
& \cellcolor{g2ptqblue!12}\textbf{2.09e-01}
& \cellcolor{g2ptqblue!12}\textbf{4.04}
& \cellcolor{g2ptqblue!12}44.45
& \cellcolor{g2ptqblue!12}66.08
& \cellcolor{g2ptqblue!12}{63.08}
& \cellcolor{g2ptqblue!12}{68.35}
& \cellcolor{g2ptqblue!12}{55.31}
& \cellcolor{g2ptqblue!12}75.90
& \cellcolor{g2ptqblue!12}{65.11}
& \cellcolor{g2ptqblue!12}\textbf{62.61} \\
\midrule

\multirow{4}{*}{\begin{tabular}{@{}c@{}}\textbf{Qwen3} \\ \textbf{30B-A3B} \\ \textbf{(W3A16)}\end{tabular}}
& BF16 & 0 & 8.70 & 0 & 4.79 & 0 & 3.41 & 56.66 & 78.96 & 84.32 & 77.74 & 64.80 & 80.58 & 71.11 & 73.45 \\
\cmidrule{2-16}
& GPTQ & 1.72e-01 & 10.03 & 1.18e-01 & 5.16 & 7.41e-02 & 3.57 & 52.22 & 75.42 & 73.77 & 73.87 & 62.24 & 78.07 & 68.35 & 69.13 \\
& \cellcolor{g2ptqblue!12}\name
& \cellcolor{g2ptqblue!12}1.03e-01
& \cellcolor{g2ptqblue!12}9.49
& \cellcolor{g2ptqblue!12}5.39e-02
& \cellcolor{g2ptqblue!12}4.84
& \cellcolor{g2ptqblue!12}4.33e-02
& \cellcolor{g2ptqblue!12}3.52
& \cellcolor{g2ptqblue!12}{54.35}
& \cellcolor{g2ptqblue!12}{78.41}
& \cellcolor{g2ptqblue!12}78.01
& \cellcolor{g2ptqblue!12}{75.52}
& \cellcolor{g2ptqblue!12}{63.83}
& \cellcolor{g2ptqblue!12}79.16
& \cellcolor{g2ptqblue!12}{70.48}
& \cellcolor{g2ptqblue!12}\textbf{71.39} \\
& \cellcolor{g2ptqblue!12}\name$^{*}$
& \cellcolor{g2ptqblue!12}\textbf{9.66e-02}
& \cellcolor{g2ptqblue!12}\textbf{9.41}
& \cellcolor{g2ptqblue!12}\textbf{4.65e-02}
& \cellcolor{g2ptqblue!12}\textbf{4.78}
& \cellcolor{g2ptqblue!12}\textbf{4.01e-02}
& \cellcolor{g2ptqblue!12}\textbf{3.51}
& \cellcolor{g2ptqblue!12}53.07
& \cellcolor{g2ptqblue!12}76.85
& \cellcolor{g2ptqblue!12}{78.31}
& \cellcolor{g2ptqblue!12}75.49
& \cellcolor{g2ptqblue!12}62.80
& \cellcolor{g2ptqblue!12}{79.22}
& \cellcolor{g2ptqblue!12}69.38
& \cellcolor{g2ptqblue!12}70.73 \\
\midrule

\multirow{4}{*}{\begin{tabular}{@{}c@{}}\textbf{Qwen3} \\ \textbf{30B-A3B} \\ \textbf{(W4A16)}\end{tabular}}
& BF16 & 0 & 8.70 & 0 & 4.79 & 0 & 3.41 & 56.66 & 78.96 & 84.32 & 77.74 & 64.80 & 80.58 & 71.11 & 73.45 \\
\cmidrule{2-16}
& GPTQ & 5.08e-02 & 8.87 & 3.53e-02 & 4.81 & 2.31e-02 & 3.45 & 53.92 & 78.37 & 82.17 & 76.37 & {63.85} & 79.49 & 68.82 & 71.86 \\
& \cellcolor{g2ptqblue!12}\name
& \cellcolor{g2ptqblue!12}3.36e-02
& \cellcolor{g2ptqblue!12}\textbf{8.84}
& \cellcolor{g2ptqblue!12}1.83e-02
& \cellcolor{g2ptqblue!12}\textbf{4.74}
& \cellcolor{g2ptqblue!12}1.46e-02
& \cellcolor{g2ptqblue!12}\textbf{3.40}
& \cellcolor{g2ptqblue!12}{55.63}
& \cellcolor{g2ptqblue!12}{78.54}
& \cellcolor{g2ptqblue!12}{82.62}
& \cellcolor{g2ptqblue!12}76.96
& \cellcolor{g2ptqblue!12}63.79
& \cellcolor{g2ptqblue!12}80.03
& \cellcolor{g2ptqblue!12}69.53
& \cellcolor{g2ptqblue!12}\textbf{72.44} \\
& \cellcolor{g2ptqblue!12}\name$^{*}$
& \cellcolor{g2ptqblue!12}\textbf{3.26e-02}
& \cellcolor{g2ptqblue!12}\textbf{8.84}
& \cellcolor{g2ptqblue!12}\textbf{1.64e-02}
& \cellcolor{g2ptqblue!12}4.75
& \cellcolor{g2ptqblue!12}\textbf{1.38e-02}
& \cellcolor{g2ptqblue!12}3.41
& \cellcolor{g2ptqblue!12}54.95
& \cellcolor{g2ptqblue!12}78.28
& \cellcolor{g2ptqblue!12}82.39
& \cellcolor{g2ptqblue!12}{77.14}
& \cellcolor{g2ptqblue!12}63.50
& \cellcolor{g2ptqblue!12}{80.14}
& \cellcolor{g2ptqblue!12}{70.64}
& \cellcolor{g2ptqblue!12}72.43 \\
\midrule

\multirow{4}{*}{\begin{tabular}{@{}c@{}}\textbf{Qwen3} \\ \textbf{30B-A3B} \\ \textbf{(W4A4)}\end{tabular}}
& BF16 & 0 & 8.70 & 0 & 4.79 & 0 & 3.41 & 56.66 & 78.96 & 84.32 & 77.74 & 64.80 & 80.58 & 71.11 & 73.45 \\
\cmidrule{2-16}
& GPTQ & 2.08e-01 & 9.70 & 1.58e-01 & 4.81 & 1.07e-01 & 3.52 & 51.45 & 76.81 & {76.52} & 72.66 & 60.31 & 76.82 & 67.09 & 68.81 \\
& \cellcolor{g2ptqblue!12}\name
& \cellcolor{g2ptqblue!12}1.59e-01
& \cellcolor{g2ptqblue!12}9.61
& \cellcolor{g2ptqblue!12}1.15e-01
& \cellcolor{g2ptqblue!12}4.87
& \cellcolor{g2ptqblue!12}8.53e-02
& \cellcolor{g2ptqblue!12}3.55
& \cellcolor{g2ptqblue!12}50.17
& \cellcolor{g2ptqblue!12}{77.44}
& \cellcolor{g2ptqblue!12}75.78
& \cellcolor{g2ptqblue!12}74.33
& \cellcolor{g2ptqblue!12}{61.81}
& \cellcolor{g2ptqblue!12}78.13
& \cellcolor{g2ptqblue!12}{67.72}
& \cellcolor{g2ptqblue!12}\textbf{69.34} \\
& \cellcolor{g2ptqblue!12}\name$^{*}$
& \cellcolor{g2ptqblue!12}\textbf{1.58e-01}
& \cellcolor{g2ptqblue!12}\textbf{9.56}
& \cellcolor{g2ptqblue!12}\textbf{1.09e-01}
& \cellcolor{g2ptqblue!12}\textbf{4.73}
& \cellcolor{g2ptqblue!12}\textbf{8.19e-02}
& \cellcolor{g2ptqblue!12}\textbf{3.49}
& \cellcolor{g2ptqblue!12}{51.54}
& \cellcolor{g2ptqblue!12}74.83
& \cellcolor{g2ptqblue!12}76.37
& \cellcolor{g2ptqblue!12}{74.46}
& \cellcolor{g2ptqblue!12}61.54
& \cellcolor{g2ptqblue!12}{78.40}
& \cellcolor{g2ptqblue!12}67.25
& \cellcolor{g2ptqblue!12}69.20 \\

\bottomrule
\end{tabular}
}
\end{center}
\vspace{-1ex}
\caption{Weight-only and weight-activation quantization results on Qwen3-30B-A3B.}
\label{tab:qwen3_30b_a3b}
\end{table*}

\begin{table*}[!t]
\begin{center}
\resizebox{1.0\linewidth}{!}
{
\begin{tabular}{c|l|cc|cc|cc|cccccccc}
\toprule
\multirow{2}{*}{\textbf{Model}} & \multirow{2}{*}{\textbf{Method}} & \multicolumn{2}{c|}{\textbf{WikiText2}} & \multicolumn{2}{c|}{\textbf{UltraChat}} & \multicolumn{2}{c|}{\textbf{NuminaMath}} & \multicolumn{8}{c}{\textbf{CommonSense QA}} \\
\cmidrule(lr){3-4} \cmidrule(lr){5-6} \cmidrule(lr){7-8} \cmidrule(lr){9-16}
& & \textbf{KL} & \textbf{PPL} & \textbf{KL} & \textbf{PPL} & \textbf{KL} & \textbf{PPL} & \textbf{ARC-C} & \textbf{ARC-E} & \textbf{C-eval} & \textbf{HellaS} & \textbf{LAMB} & \textbf{PIQA} & \textbf{Wino} & \textbf{Avg.} \\
\midrule

\multirow{3}{*}{\begin{tabular}{@{}c@{}}\textbf{Qwen3.8} \\ \textbf{Flash-Next}\end{tabular}}
& BF16 & 0 & 4.67 & 0 & 6.30 & 0 & 3.14 & 64.51 & 79.08 & 89.38 & 87.59 & 72.81 & 82.97 & 71.59 & 78.28 \\
\cmidrule{2-16}
& GPTQ & 1.78e-01 & 5.02 & 9.36e-02 & 6.76 & 7.61e-02 & 3.23 & 62.03 & 78.03 & 87.30 & 87.19 & 72.48 & 83.62 & 72.22 & 77.55 \\
& \cellcolor{g2ptqblue!12}\name$^{*}$ & \cellcolor{g2ptqblue!12}\textbf{1.52e-01} & \cellcolor{g2ptqblue!12}\textbf{4.85} & \cellcolor{g2ptqblue!12}\textbf{7.43e-02} & \cellcolor{g2ptqblue!12}\textbf{5.77} & \cellcolor{g2ptqblue!12}\textbf{6.20e-02} & \cellcolor{g2ptqblue!12}\textbf{3.03} & \cellcolor{g2ptqblue!12}62.71 & \cellcolor{g2ptqblue!12}79.12 & \cellcolor{g2ptqblue!12}88.04 & \cellcolor{g2ptqblue!12}87.40 & \cellcolor{g2ptqblue!12}72.39 & \cellcolor{g2ptqblue!12}84.17 & \cellcolor{g2ptqblue!12}71.82 & \cellcolor{g2ptqblue!12}\textbf{77.95} \\

\bottomrule
\end{tabular}
}
\end{center}
\vspace{-1ex}
\caption{4-bit weight-only quantization results on Qwen3.8-Flash-Next.}
\label{tab:qwen38-flash-next-w4a16}
\end{table*}

\begin{table*}[!t]
\begin{center}
\resizebox{1.0\linewidth}{!}
{
\begin{tabular}{c|l|ccccc}
\toprule
\textbf{Model} & \textbf{Method} & \textbf{GPQA-Diamond} & \textbf{Live-Code-Bench} & \textbf{ArXiv-Math} & \textbf{IFBench} & \textbf{Avg.} \\
\midrule

\multirow{3}{*}{\begin{tabular}{@{}c@{}}\textbf{Qwen3.8} \\ \textbf{Flash-Next}\end{tabular}}
& BF16 & \ensuremath{91.58 \pm 0.58} & \ensuremath{93.78 \pm 0.29} & \ensuremath{44.01 \pm 2.80} & \ensuremath{73.00 \pm 0.88} & \ensuremath{75.59 \pm 0.84} \\
\cmidrule{2-7}
& GPTQ & \ensuremath{90.57 \pm 0.29} & \ensuremath{92.64 \pm 0.05} & \textbf{43.37\ensuremath{\pm}0.56} & \ensuremath{74.11 \pm 0.51} & \ensuremath{75.17 \pm 0.11} \\
& \cellcolor{g2ptqblue!12}\name$^*$
& \cellcolor{g2ptqblue!12}\textbf{91.08\ensuremath{\pm}1.27}
& \cellcolor{g2ptqblue!12}\textbf{92.92\ensuremath{\pm}0.33}
& \cellcolor{g2ptqblue!12}\ensuremath{43.36 \pm 2.97}
& \cellcolor{g2ptqblue!12}\textbf{74.22\ensuremath{\pm}0.69}
& \cellcolor{g2ptqblue!12}\textbf{75.40\ensuremath{\pm}0.40} \\

\bottomrule
\end{tabular}
}
\end{center}
\vspace{-1ex}
\caption{4-bit weight-only quantization results for the Qwen3.8-Flash-Next model, evaluated on reasoning benchmarks. We report the mean accuracy and variance across three random seeds.}
\label{tab:qwen38-flash-next-w4a16-reasoning}
\end{table*}